\patchcmd\@combinedblfloats{\box\@outputbox}{
\unvbox\@outputbox}{}{\errmessage{\noexpand patch failed}}
\DeclareMathOperator*{\argmax}{arg\,max}
\DeclareMathOperator*{\argmin}{arg\,min}
\newtheorem{remark}{Remark}
\newtheorem{theorem}{Theorem}
\newtheorem{lemma}{Lemma}
\newtheorem{corollary}{Corollary}
\newtheorem{assumption}{Assumption}
\newtheorem{definition}{Definition}
\icmltitlerunning{Stochastic Iterative Hard Thresholding for Graph-structured Sparsity Optimization}
\pgfplotsset{width=7cm,compat=1.8}
\begin{document}
\twocolumn[
\icmltitle{Stochastic Iterative Hard Thresholding for Graph-structured Sparsity  Optimization}

\icmlsetsymbol{equal}{*}

\begin{icmlauthorlist}
\icmlauthor{Baojian Zhou}{cs}
\icmlauthor{Feng Chen}{cs}
\icmlauthor{Yiming Ying}{math}
\end{icmlauthorlist}

\icmlaffiliation{cs}{Department of Computer Science, SUNY at Albany, Albany, NY, USA}
\icmlaffiliation{math}{Department of Mathematics and Statistics, SUNY at Albany, Albany, NY, USA}

\icmlcorrespondingauthor{Baojian Zhou}{bzhou6@albany.edu}

\icmlkeywords{Graph-structured sparsity, Stochastic gradient descent, Hard Thresholding, Non-Convex Optimization}
\vskip 0.3in
]
\printAffiliationsAndNotice{}

%%%%%%%%%%%%%%%%%%%%%%%%%%%%%%%%%% Abstract %%%%%%%%%%%%%%%%%%%%%%%%%%%%%%%%%%%%
\begin{abstract}
Stochastic optimization algorithms update models with cheap per-iteration costs sequentially, which makes them amenable for large-scale data analysis. Such algorithms have been widely studied for structured sparse models where the sparsity information is very specific, e.g., convex sparsity-inducing norms or $\ell^0$-norm. However, these norms cannot be directly applied to the problem of complex (non-convex) graph-structured sparsity models, which have important application in disease outbreak and social networks, etc. In this paper, we propose a stochastic gradient-based method for solving graph-structured sparsity constraint problems, not restricted to the least square loss. We prove that our algorithm enjoys a linear convergence up to a constant error, which is competitive with the counterparts in the batch learning setting. We conduct extensive experiments to show the efficiency and effectiveness of the proposed algorithms. 
\end{abstract}

%%%%%%%%%%%%%%%%%%%%%%%%%%%%%%%% Introduction %%%%%%%%%%%%%%%%%%%%%%%%%%%%%%%%%%
\section{Introduction}
\label{section-introduction}
Structured sparse learning  models  have received increasing attention. They can be formulated as follows
\begin{equation}
    \min_{ \bm{x} \in \mathcal{M}}  F({\bm x}), \quad F({\bm x}) 
    := \frac{1}{n} \sum_{i=1}^n f_i ({\bm x}).
    \label{equ:objective_function}
\end{equation}
Here, each $f_i({\bm x})$ is convex and differentiable and the structured sparsity is reflected by the constraint  set $\mathcal{M}\subseteq \mathbb{R}^p$ on ${\bm x}$. Typically, one can encode the sparsity by introducing sparsity-inducing penalties such as $\ell^1$-norm \cite{tibshirani1996regression,chen2001atomic}, $\ell^1/\ell^q$ mixed norms \cite{turlach2005simultaneous,yuan2006model} and more structured norms built on either disjoint or overlapping groups of variables \cite{bach2012structured,jenatton2011structured,obozinski2016unified,morales2010family}.  Such models of sparsity-inducing penalties are often convex and can be solved by convex optimization algorithms \cite{bach2012optimization}. 

There is a wide range of significant application, including the search of connected subgraphs in networks, where  the constraint set $\mathcal{M}$  cannot be encoded by sparsity-inducing norms.  Notable application examples include disease outbreak as a connected cluster \cite{arias2011detection,qian2014connected,aksoylar2017connected}, and social events as connected dense subgraphs \cite{rozenshtein2014event}. To capture these complex graph structures such as trees and connected graphs, recent works by~\citet{baraniuk2010model} and~\citet{hegde2014fast,hegde2016fast,hegde2015nearly} have proposed to use structured sparsity model $\mathbb{M}$ to define allowed supports $\{S_1,S_2,\ldots,S_k\}$ where $\mathcal{M}$ can be represented by $\mathcal{M}(\mathbb{M})=\{{\bm x}: \text{supp}(\bm x) \subseteq S_i,\text{ for some } S_i \in \mathbb{M}\}$. These complex models are non-convex and  (batch) gradient descent algorithms involve a projection operator ${\rm P}_{\mathcal{M}(\mathbb{M})}(\cdot)$ which is usually NP-hard.  In a series of seminal work, \citet{hegde2015nearly,hegde2016fast,hegde2015approximation} used two approximated projections (head and tail) without sacrificing too much precision. However, their work only focused on the least square loss, and computing full gradient per iteration resulted in expensive per-iteration cost $\mathcal{O}(m p)$ if  $m$ is the number of examples and $p$ is the data dimension.  This largely limits its practical application to big data setting where typically the data  is contaminated by non-Gaussian noise and the data volume is huge (i.e., $m$ or/and  $p$ is very large). Therefore, it is desirable to develop efficient optimization algorithms for graph-structured sparsity constraint problems scalable to large-scale data. 

Stochastic optimization such as stochastic gradient descent (SGD) has become a standard tool for solving large-scale convex and non-convex learning problems (e.g.,~\citet{ Bach,bousquet2008tradeoffs,jin2017escape,kingma2014adam,li2018convergence,rakhlin2012making,shamir2013stochastic,yang2015rsg,ying2017unregularized,ying2006online}). SGD-type algorithms only have a cheap per-iteration computation of the gradient over one or a few training examples, which makes them amenable for large-scale data analysis. Along this research line, SGD-type algorithms for sparse models have been proposed including stochastic proximal gradient methods \cite{rosasco2014convergence,duchi2011adaptive}  and stochastic gradient hard thresholding \cite{nguyen2017linear,zhou2018efficient,murata2018sample,shen2017tight,liu2017dual}.  However, these algorithms cannot address the important case of complex (non-convex) graph-structured sparsity constraint problems. 

In this paper we leverage the recent success of SGD algorithms for non-convex problems and propose an efficient stochastic optimization algorithm for graph-structured sparsity constraint problems. In particular, we use the structure sparsity model $\mathcal{M}(\mathbb{M})$ to capture more complex graph-structured information of interest. Our main contributions can be summarized as follows:

~ $\bullet$  We propose a stochastic gradient-based algorithm for solving graph-structured sparsity constraint problems. To our best knowledge, our work is the first attempt to provide stochastic gradient descent-based algorithm for graph-structured sparsity constraint problems.

~ $\bullet$  The proposed algorithm enjoys linear convergence property under proper conditions.\footnote{Linear convergence up to a tolerance error.} It is proved applicable to a broad spectrum of loss functions. Specifically, for the least square loss, it enjoys a linear convergence rate  which is competitive to the algorithm proposed in~\citet{hegde2016fast}. For the logistic loss, we show that it also satisfies the linear convergence property with high probability. Our proofs could be easily applied to other types of projections.

~ $\bullet$  We conduct extensive experiments to validate the proposed stochastic algorithm. In particular, we focus on two applications: graph sparse linear regression and graph logistic regression on breast cancer data. Our experiment results show that the proposed  stochastic algorithm consistently outperforms the deterministic ones.

%%%%%%%%%%%%%%%%%%%%%%%%%%%%%%%%%%%%%%%%%%%%%%%%%%%%%%%%%%%%%%%%%%%%%%%%%%%%%%%%

\textbf{Outline.} We organize the rest of the paper as follows. In Section~\ref{section:preliminary}, we introduce the notations, definitions and key assumptions. Our proposed algorithm  is presented in Section~\ref{section:algorithm}. Section~\ref{section:convergence_analysis} provides the theoretical analysis of the algorithm. Numerical experiments and discussion are respectively in Section~\ref{section:experiments} and ~\ref{section:discussion}. Due to the limited space, all of our formal proofs and further experiment results are  provided in the Supplementary Material. The source code and datasets are accessible at: \url{https://github.com/baojianzhou/graph-sto-iht}.

%%%%%%%%%%%%%%%%%%%%%%%%%%%%%%%% Preliminary %%%%%%%%%%%%%%%%%%%%%%%%%%%%%%%%%%%
%%%%%%%%%%%%%%%%%%%%%%%%%%%%%%%%%%%%%%%%%%%%%%%%%%%%%%%%%%%%%%%%%%%%%%%%%%%%%%%%
\section{Preliminary}
\label{section:preliminary}
\textbf{Notation.\quad} We use boldface letters, e.g., ${\bm X}$ to denote matrices. 
The boldface lower-case letters, e.g., ${\bm x}, {\bm y}$ are column vectors. The upper-case letters, e.g., $H, T, S$ stand for subsets of $[p]:=\{1,2,\ldots, p\}$. ${\bm X}_S\in \mathbb{R}^{|S|\times p}$ is the submatrix by keeping rows only in set $S$. Given the standard basis $\{{\bm e}_i: 1\leq i\leq p\}$ of $\mathbb{R}^p$, we also use $H, T, S$ to represent subspaces. For example, the subspace induced by $H$ is the subspace defined as $\text{span}\{{\bm e}_i: i \in H \}$. We will clarify the difference only if confusion might occur. The restriction of ${\bm x}$ on $S$ is the vector ${\bm x}_S$ such that $i$-th entry $({\bm x}_S)_i= x_i$ if $i \in S$; otherwise 0. We denote the support of ${\bm x}$ as $\text{supp}({\bm x}) := \{i: x_i \ne 0\}$. We use $\|{\bm x}\|$ to denote $\ell^2$ norm of ${\bm x}$. The set complement is defined as $S^c = [p]\backslash S$. 

\begin{definition}[Subspace model~\cite{hegde2016fast}]
\label{definition_0}
Given the space $\mathbb{R}^p$, a subspace model $\mathbb{M}$ is defined as a family of linear subspaces of $\mathbb{R}^p$: $\mathbb{M} = \{S_1,S_2,\ldots, S_k,\ldots,\}$, where each $S_k$ is a subspace of $\mathbb{R}^p$. The set of corresponding vectors in these subspaces is denoted as 
\begin{equation}
\mathcal{M}(\mathbb{M}) = \{{\bm x}: {\bm x} \in V, 
\text{ for some } V \in \mathbb{M}\}.
\label{equation:structure_model}
\end{equation}
\end{definition}
\vspace{-2mm}
Equation (\ref{equation:structure_model}) is general enough to include many sparse models. For example, the $s$-sparse set $\mathcal{M}(\mathbb{M}_s)=\{{\bm x}: |\text{supp}({\bm x})| \leq s\}$~\cite{nguyen2017linear,yuan2018gradient,jain2014iterative,bahmani2013greedy}, where $\mathbb{M}_s$ contains subspaces spanned by $s$ different standard basis vectors, i.e., $\mathbb{M}_s=\{\text{span}\{{\bm e}_{i_1},{\bm e}_{i_2},\ldots,{\bm e}_{i_s}\}: \{i_1,i_2,\ldots,i_s\}\subseteq [p]\}$. Although the proposed algorithm and theorems apply to any $\mathbb{M}$ which admits efficient projections, in this paper, we mainly consider graph-structured sparsity constraint models such as the weighted graph model in~\citet{hegde2015nearly}.

\begin{definition}[Weighted Graph Model~\cite{hegde2015nearly}]
Given an underlying graph $\mathbb{G}=(\mathbb{V},\mathbb{E})$ defined on the coefficients of the unknown vector ${\bm x}$, where $\mathbb{V}=[p]$ and $\mathbb{E}\subseteq \mathbb{V}\times \mathbb{V}$, then the weighted graph model $(\mathbb{G},s,g,C)$-WGM can be defined as the following set of supports
\begin{align*}
\mathbb{M}=&\{S: |S|\leq s, \text{ there is an } 
F \subseteq \mathbb{V} \text{ with } \\
&\mathbb{V}_F = S, \gamma(F) = g, \text{ and } w(F)\leq C\},
\end{align*}
where $C$ is the budget on weight of edges $w$, $g$ is the number of connected components of $F$, and $s$ is the sparsity.
\end{definition}
$(\mathbb{G},s,g,C)$-WGM captures a broad range of graph structures such as groups, clusters, trees, and connected subgraphs. This flexibility requires an efficient projection operator onto $\mathcal{M}(\mathbb{M})$, i.e., ${\rm P}(\cdot, \mathcal{M}(\mathbb{M})): \mathbb{R}^{p} \rightarrow \mathbb{R}^p$ defined as
\begin{equation}
    {\rm P}({\bm x},\mathcal{M}(\mathbb{M})) = \argmin_{{\bm y} \in \mathcal{M}(\mathbb{M})} 
    \| {\bm x} - {\bm y} \|^2.
    \label{equ:projection_operator}
\end{equation}
The above operator is crucial for projected gradient descent-based methods, but to solve the operator is NP-hard in general. To explore~(\ref{equ:projection_operator}), one needs to solve the following equivalent minimization problem
\begin{equation*}
\min_{{\bm y} \in \mathcal{M}(\mathbb{M})} 
\| {\bm x} - {\bm y} \|^2 \Leftrightarrow 
\min_{S \in \mathbb{M} } \| {\bm x} - {\rm P}({\bm x}, S) \|^2.
\end{equation*}
Since ${\rm P}({\bm x},S)$ is an orthogonal projection operator that projects ${\bm x}$ onto subspace $S$, by projection theorem, it always has the following property
\begin{equation}
\|{\bm x} \|^2 - \| {\rm P}({\bm x}, S)\|^2 = \| {\bm x} - {\rm P}({\bm x}, S) \|^2.
\nonumber
\end{equation}
By adding minimization to both sides with respect to subspace $S$, we obtain
\begin{align}
\min_{S\in \mathbb{M}} \Big\{\|{\bm x} \|^2 - \| {\rm P}({\bm x}, S)\|^2 \Big\} 
&= \min_{S \in \mathbb{M}}\| {\bm x} - {\rm P}({\bm x}, S) \|^2 \nonumber \\
\|{\bm x} \|^2 + \max_{S\in \mathbb{M}} \| {\rm P}({\bm x},S)\|^2
&= \min_{S\in \mathbb{M}}\| {\bm x} - {\rm P}({\bm x}, S) \|^2. \label{equ:two_projection}
\end{align}
The above observations lead to a key insight that the NP-hard problem~(\ref{equ:projection_operator}) can be solved either by maximizing $\|{\rm P}({\bm x}, S)\|^2$ or by minimizing $\| {\bf x} - {\rm P}({\bm x}, S)\|^2$ over $S$. Instead of minimizing or maximizing these two squared norms over $S$ exactly, \citet{hegde2015approximation,hegde2015nearly} turned to approximated algorithms, and gave two important projections---head projection and tail projection.

%%%%%%%%%%%%%%%%%%%%%%%%%%%%%%%%%%%%%%%%%%%%%%%%%%%%%%%%%%%%%%%%%%%%%%%%%%%%%%%%
\begin{assumption}[Head Projection~\cite{hegde2016fast}] 
Let $\mathbb{M}$ and $\mathbb{M}_\mathcal{H}$ be the predefined subspace models. Given any vector ${\bm x}$, there exists a $(c_\mathcal{H}, \mathbb{M}, \mathbb{M}_\mathcal{H})$-Head-Projection which is to find a subspace $H\in \mathbb{M}_{\mathcal{H}}$ such that
\begin{equation}
\|{\rm P}({\bm x}, H) \|^2 \geq c_{\mathcal{H}} \cdot 
\max_{S \in \mathbb{M}} \| {\rm P}({\bm x}, S)\|^2,
\end{equation}
where $0 < c_\mathcal{H} \leq 1$. We denote ${\rm P}({\bm x}, H)$ as ${\rm P}({\bm x},\mathbb{M},\mathbb{M}_\mathcal{H})$.
\end{assumption}

\begin{assumption}[Tail Projection~\cite{hegde2016fast}]
Let $\mathbb{M}$  and $\mathbb{M}_\mathcal{T}$ be the predefined subspace models. Given any vector ${\bm x}$, there exists a $(c_\mathcal{T}, \mathbb{M}, \mathbb{M}_\mathcal{T})$-Tail-Projection which is to find a subspace $T\in \mathbb{M}_\mathcal{T}$ such that
\begin{equation}
\|{\rm P} ({\bm x}, T) - {\bm x} \|^2 \leq c_\mathcal{T} \cdot 
\min_{S \in \mathbb{M}} \| {\bm x} - {\rm P}({\bm x}, S)\|^2,
\end{equation}
where $c_\mathcal{T} \geq 1$. We denote ${\rm P}({\bm x}, T)$ as ${\rm P}({\bm x},\mathbb{M},\mathbb{M}_\mathcal{T})$.
\end{assumption}
Intuitively, head projection keeps large magnitudes while tail projection discards small magnitudes. Take $\mathbb{M}_s$ (a complete graph) as an example. Head projection is to find subspace $H=\text{span}\{{\bm e}_{i_1},{\bm e}_{i_2}, \ldots, {\bm e}_{i_s}\}$ where $\{i_1, i_2, \ldots, i_s\}\in [p]$ is the set of indices that keeps the largest $s$ components of ${\bm x}$, i.e, $|x_{i_1}| \geq |x_{i_2}| \geq \cdots \geq |x_{i_s}|$. As ~(\ref{equ:projection_operator}) could have multiple solutions, we could have $H\ne T$, but we must have $\|{\bm x} - {\bm x}_H\| = \|{\bm x} - {\bm x}_T\|$.
\begin{definition}[$(\alpha,\beta, \mathcal{M}(\mathbb{M}))$-RSC/RSS]
\label{definition_1}
We say a differentiable function $f(\cdot)$ satisfies the  $(\alpha,\beta, \mathcal{M}(\mathbb{M}))$-Restricted Strong Convexity (RSC)/Smoothness (RSS) property 
if there exist positive constants $\alpha$ and $\beta$ such that
\begin{align}
\frac{\alpha}{2} \| {\bm x} - {\bm y} \|^2 
\leq B_{f}({\bm x},{\bm y}) 
\leq \frac{\beta}{2} \| {\bm x} -{\bm y} \|^2,
\label{inequ:rssc}
\end{align}
for all ${\bm x},{\bm y} \in \mathcal{M}(\mathbb{M})$, where $B_{f}({\bm x}, {\bm y})$ is the Bregman divergence of $f$, i.e., $B_{f}({\bm x}, {\bm y}) = f({\bm x}) - f({\bm y}) - \langle \nabla f({\bm y}), {\bm x} -{\bm y} \rangle$. $\alpha$ and $\beta$ are strong convexity parameter and strong smoothness parameter respectively.
\end{definition}

RSC/RSS property was firstly introduced in~\citet{agarwal2012fast}\footnote{This property is related with the theory of paraconvexity~\cite{van2008paraconvex} as pointed out by~\citet{agarwal2012fast}.}. Since it captures sparsity of many functions, it has been widely used (e.g.,~\citet{jain2014iterative,yuan2018gradient,elenberg2018restricted,johnson2013accelerating,shen2017tight,zhou2018efficient,nguyen2017linear,zhou2018new}). For example, if $f$ is the least square loss, i.e., $f({\bm x}) = \| {\bm A} {\bm x} - {\bm y} \|^2$, then RSC/RSS property can be reduced to the canonical subspace Restricted Isometry Property (RIP)~\cite{candes2005decoding}. Next, we characterize the property of $F$ and each $f_i$ of~(\ref{equ:objective_function}) in Assumption~\ref{assumption:assumption_01}.

\begin{assumption}
Given the objective function $F({\bm x})$ in~(\ref{equ:objective_function}), we assume that $F({\bm x})$ satisfies $\alpha$-RSC in subspace model $\mathcal{M}(\mathbb{M}\oplus \mathbb{M}_\mathcal{H}\oplus \mathbb{M}_{\mathcal{T}})$. Each function $f_i({\bm x})$ satisfies $\beta$-RSS in $\mathcal{M}(\mathbb{M}\oplus\mathbb{M}_\mathcal{H}\oplus \mathbb{M}_{\mathcal{T}})$, where $\oplus$ of two models $\mathbb{M}_1$ and $\mathbb{M}_2$ is defined as $\mathbb{M}_1\oplus\mathbb{M}_2 := \{S_1\cup S_2 : S_1 \in \mathbb{M}_1, S_2 \in \mathbb{M}_2\}$.
\label{assumption:assumption_01}
\end{assumption}

%%%%%%%%%%%%%%%%%%%%%%%%%%%%%%%% Algorithms %%%%%%%%%%%%%%%%%%%%%%%%%%%%%%%%%%%%
%%%%%%%%%%%%%%%%%%%%%%%%%%%%%%%%%%%%%%%%%%%%%%%%%%%%%%%%%%%%%%%%%%%%%%%%%%%%%%%%
\vspace{-2mm}
\section{Algorithm: \textsc{GraphStoIHT}}
\label{section:algorithm}
\begin{algorithm}[H]
\caption{\textsc{GraphStoIHT}}
\begin{algorithmic}[1]
 \STATE \textbf{Input}: 
 $\eta_t, F(\cdot), \mathbb{M},\mathbb{M}_\mathcal{H},\mathbb{M}_\mathcal{T}$
 \STATE \textbf{Initialize}: ${\bm x}^0$ such that 
 $ \text{supp}({\bm x}) \in \mathbb{M}$ and $t=0$
 \FOR{$t=0, 1, 2, \ldots$}
 \STATE Choose $\xi_t$ from $[n]$ with probability $Pr(\xi_t)$
 \STATE ${\bm b}^t = {\rm P} (\nabla f_{\xi_t} ({\bm x}^{t}), \mathbb{M}\oplus\mathbb{M}_\mathcal{T}, {\mathbb{M}_\mathcal{H}})$
 \STATE ${\bm x}^{t+1} = {\rm P}({\bm x}^t - \eta_t {\bm b}^t,
 \mathbb{M},\mathbb{M}_\mathcal{T})$
 \ENDFOR
 \STATE \textbf{Return} $\bm x^{t+1}$
\end{algorithmic}\label{alg:graph-sto-iht}
\end{algorithm}
\vspace{-5mm}
The proposed algorithm is named \textsc{GraphStoIHT} presented in Algorithm~\ref{alg:graph-sto-iht}, a stochastic-based method for graph-structured sparsity problems. Initially, ${\bm x}^0 ={\bm 0}$ and $t=0$. At each iteration, it works as the following three steps: 

~ $\bullet$ \textbf{Step 1:} in Line 4, it randomly selects $\xi_t$ from $[n]$ with probability mass function $Pr(\xi_t)$, that is, the current loss $f_{\xi_t}(\cdot)$ has been chosen;

~ $\bullet$ \textbf{Step 2:} in Line 5, the head projection inputs the gradient of $f_{\xi_t}({\bm x})$ and outputs the projected vector ${\bm b}^t$ so that $ \text{supp}({\bm b}^t) \in \mathcal{M}(\mathbb{M}_{\mathcal{H}})$;
    
~ $\bullet$  \textbf{Step 3:} in Line 6, the next estimated ${\bm x}^{t+1}$ is then updated by using the tail projection.

The algorithm repeats the above three steps until stop condition is satisfied. The difference between \textsc{GraphStoIHT} and \textsc{StoIHT}~\cite{nguyen2017linear}, the latter of which is essentially a stochastic projected gradient descent, is two-fold: 

1) instead of directly using the gradient $\nabla f_{\xi_t}(\cdot)$ at each iteration as \textsc{StoIHT} did, \textsc{GraphStoIHT} uses a thresholded gradient. Thresholding $\nabla f_{\xi_t}(\cdot)$ at the first stage could be helpful under sparse setting, because most nonzero entries of $\nabla f_{\xi_t}(\cdot)$ are irrelevant.

2) instead of projecting on $\mathcal{M}(\mathbb{M}_s)$ as \textsc{StoIHT} did, the tail projection projects each estimation onto graph-structured subspaces $\mathcal{M}(\mathbb{M}_{\mathcal{T}})$. 

We call a batch version ( $n=1$) of this algorithm as \textsc{GraphIHT}, where we simply apply the head and tail projection to \textsc{AS-IHT} in ~\citet{hegde2016fast} by considering a general loss function.

\textbf{Time complexity Analysis.\quad} Due to the NP-hardness of the projection, there is a trade-off between the time cost and the quality of projection. The time complexity of two projections depends on the graph size $p$ and the number of edges $|\mathbb{E}|$. As proved in~\citet{hegde2015nearly}, the running time of both head projection and tail projection is bounded by $\mathcal{O}(|\mathbb{E}|\log^3(p))$. If the graph is sparse as common in real-world applications, i.e., $|\mathbb{E}| = \mathcal{O}(p)$, two projections have nearly-linear time complexity: $\mathcal{O}(p\log^3(p))$ with respect to the feature dimension $p$.  Hence, per-iteration cost $\mathcal{O}(p\log^3(p))$, is still cheaper than that of the deterministic GD algorithms \textsc{GraphIHT} in which the computation of the full gradient will be of cost $\mathcal{O}( n p)$.

%%%%%%%%%%%%%%%%%%%%%%%%%%% Convergence Analysis %%%%%%%%%%%%%%%%%%%%%%%%%%%%%%%
%%%%%%%%%%%%%%%%%%%%%%%%%%%%%%%%%%%%%%%%%%%%%%%%%%%%%%%%%%%%%%%%%%%%%%%%%%%%%%%%
\section{Convergence Analysis of \textsc{GraphStoIHT}}
\label{section:convergence_analysis}
In this section, firstly we give the convergence analysis of 
\textsc{GraphStoIHT} by characterizing the \textbf{\textit{estimation error}} between ${\bm x}^{t+1}$ and ${\bm x}^*$, i.e, $\|{\bm x}^{t+1} - {\bm x}^*\|$,  where ${\bm x}^*$ is an optimal solution of~(\ref{equ:objective_function}). Then we analyze two commonly used objective functions and prove our algorithm achieves a linear convergence up to a constant error. Our analysis is applicable not only to graph-structured sparsity model but also to any other head and tail projection. 

We denote  $\xi_{[t]} = (\xi_0,\xi_1,\ldots,\xi_t)$ as the history of stochastic process $\xi_0,\xi_1,\ldots$ up to time $t$, and all random variables $\xi_t$ are independent of each other. Define the \textit{inverse condition number} $\mu:=\alpha/\beta$. To simplify the analysis, we define the probability mass function $Pr(\xi_t=i)=1/n, 1\leq i \leq n$. Before presenting our main Theorem~\ref{theorem:theorem_01}, let's take a look at the following key lemma.
\vspace{-2mm}
\begin{lemma}
If each $f_{\xi_t}(\cdot)$ and $F({\bm x})$ satisfy Assumption~\ref{assumption:assumption_01} and, given head projection model $(c_\mathcal{H}, \mathbb{M}\oplus\mathbb{M}_\mathcal{T}, \mathbb{M}_\mathcal{H})$ and tail projection model $(c_\mathcal{T}, \mathbb{M}, \mathbb{M}_\mathcal{T})$, then we have the following inequality
\vspace{-4mm}
\begin{equation}
\mathbb{E}_{\xi_t}\|({\bm x}^t - {\bm x}^*)_{H^c}\|  
\leq \sqrt{1 - \alpha_0^2} \mathbb{E}_{\xi_t}\|{\bm x}^t - {\bm x}^*\| 
+ \sigma_1, \label{inequ:7}
\vspace{-4mm}
\end{equation}
where 
\begin{align*}
\sigma_1 &=\Big( \frac{\beta_0}{\alpha_0} + \frac{\alpha_0 \beta_0}{\sqrt{1-\alpha_0^2}} 
\Big)\mathbb{E}_{\xi_t}\|\nabla_I f_{\xi_t}({\bm x}^*)\|,\\
H &= \text{supp}({\rm P}(\nabla f_{\xi_t}({\bm x}^t), 
\mathbb{M} \oplus \mathbb{M}_\mathcal{T}, 
\mathbb{M}_{\mathcal{H}} )),\\
\alpha_0 &= c_\mathcal{H} \alpha \tau - 
\sqrt{\alpha\beta \tau^2 - 2\alpha\tau + 1}, \quad \beta_0 = (1+c_\mathcal{H})\tau,\\
I &= \argmax_{S \in \mathbb{M} \oplus 
\mathbb{M}_\mathcal{T} \oplus \mathbb{M}_\mathcal{H} } 
\mathbb{E}_{\xi_t}\|\nabla_S f_{\xi_t}({\bm x}^*)\|, \text{ and } \tau \in (0, 2/\beta).
\end{align*}
\label{lemma:lemma1}
\end{lemma}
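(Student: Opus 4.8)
The plan is to bound $\|({\bm x}^t - {\bm x}^*)_{H^c}\|$ by decomposing the error over the head support $H$ and its complement, and then to exploit the defining inequality of head projection to control the mass of the error that lives outside $H$. First I would introduce the shorthand ${\bm r}^t := {\bm x}^t - {\bm x}^*$ and observe that the head set $H$ is, by Line 5 of the algorithm, the support returned by the $(c_\mathcal{H}, \mathbb{M}\oplus\mathbb{M}_\mathcal{T}, \mathbb{M}_\mathcal{H})$-head projection applied to $\nabla f_{\xi_t}({\bm x}^t)$. Since $H^c$ carries exactly the part of ${\bm r}^t$ missed by the head subspace, I expect the natural route is to relate $\|{\bm r}^t_{H^c}\|^2 = \|{\bm r}^t\|^2 - \|{\bm r}^t_H\|^2$ and then lower-bound $\|{\bm r}^t_H\|$. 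This is where the head projection guarantee enters: the projected gradient captures at least a $c_\mathcal{H}$ fraction (in squared norm) of the best possible energy over $\mathbb{M}\oplus\mathbb{M}_\mathcal{T}$, and since ${\bm r}^t \in \mathcal{M}(\mathbb{M}\oplus\mathbb{M}_\mathcal{T})$ (as both iterate and optimum lie in the relevant model), the gradient correlates strongly with ${\bm r}^t$ on $H$.

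The key technical step would be to convert the head projection's statement about $\nabla f_{\xi_t}({\bm x}^t)$ into a statement about ${\bm r}^t$. The idea is to write $\nabla f_{\xi_t}({\bm x}^t) = \nabla f_{\xi_t}({\bm x}^t) - \nabla f_{\xi_t}({\bm x}^*) + \nabla f_{\xi_t}({\bm x}^*)$, so the gradient difference term can be handled by the RSS property (Assumption~\ref{assumption:assumption_01}, Definition~\ref{definition_1}) via the co-coercivity-type or linearization bounds that the Bregman divergence inequality provides, while the residual gradient $\nabla f_{\xi_t}({\bm x}^*)$ at the optimum contributes the noise term. Concretely, I would form the inner product $\langle \nabla f_{\xi_t}({\bm x}^t), {\bm r}^t_H\rangle$ and the squared norm $\|{\rm P}(\nabla f_{\xi_t}({\bm x}^t), H)\|$, and chain the RSC bound on $F$ (after taking $\mathbb{E}_{\xi_t}$, since $\mathbb{E}_{\xi_t}\nabla f_{\xi_t} = \nabla F$) against the RSS bound on each $f_{\xi_t}$. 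The parameters $\alpha_0 = c_\mathcal{H}\alpha\tau - \sqrt{\alpha\beta\tau^2 - 2\alpha\tau + 1}$ and $\beta_0 = (1+c_\mathcal{H})\tau$ strongly suggest that a surrogate of the form $\langle \tau \nabla f_{\xi_t}({\bm x}^t), {\bm r}^t\rangle$ is being compared, on the set $H$, to its total norm, with the step-size scaling $\tau$ threaded through both the strong-convexity and the smoothness estimates; the square-root term is the signature of completing a square to pass from a squared-norm inequality to the linear bound~(\ref{inequ:7}).

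The main obstacle, I expect, is the careful separation of the stochastic gradient at ${\bm x}^*$ from the useful signal, and in particular justifying that the head set $I$ over $\mathbb{M}\oplus\mathbb{M}_\mathcal{T}\oplus\mathbb{M}_\mathcal{H}$ dominates the projection of $\nabla f_{\xi_t}({\bm x}^*)$ onto $H$ so that $\|\nabla_H f_{\xi_t}({\bm x}^*)\| \leq \|\nabla_I f_{\xi_t}({\bm x}^*)\|$; this is what lets the noise be written uniformly in terms of $\mathbb{E}_{\xi_t}\|\nabla_I f_{\xi_t}({\bm x}^*)\|$. A secondary delicate point is ensuring the constant $1-\alpha_0^2$ remains positive and that $\alpha_0 \in (0,1)$ under the stated range $\tau \in (0, 2/\beta)$, which is needed both for the square-root to be real and for the contraction to be meaningful. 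Finally, all the manipulations must be performed under $\mathbb{E}_{\xi_t}$ with Jensen's inequality invoked to move the expectation inside the norms at the cost of the $\sqrt{\cdot}$ bound, and I would verify that the independence of the $\xi_t$ (stated before the lemma) legitimizes treating $\mathbb{E}_{\xi_t}$ as a conditional expectation given $\xi_{[t-1]}$.
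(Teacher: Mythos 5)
Your proposal is correct in outline and takes essentially the same approach as the paper: it sandwiches the head-projected gradient $\|\nabla_H f_{\xi_t}({\bm x}^t)\|$ between a lower bound coming from the head-projection guarantee plus the RSC of $F$ (after taking $\mathbb{E}_{\xi_t}$, using $\mathbb{E}_{\xi_t}\nabla f_{\xi_t} = \nabla F$) and an upper bound coming from the $\tau$-scaled co-coercivity/RSS estimate on each $f_{\xi_t}$, splits off $\nabla f_{\xi_t}({\bm x}^*)$ as noise dominated by $\nabla_I f_{\xi_t}({\bm x}^*)$, and then converts the resulting lower bound on $\|({\bm x}^t - {\bm x}^*)_H\|$ into the claim through $\|({\bm x}^t - {\bm x}^*)_{H^c}\|^2 = \|{\bm x}^t - {\bm x}^*\|^2 - \|({\bm x}^t - {\bm x}^*)_H\|^2$, exactly as the paper does. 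The one step you leave implicit --- passing from $\|({\bm x}^t - {\bm x}^*)_H\| \geq \alpha_0\|{\bm x}^t - {\bm x}^*\| - \beta_0\,\mathbb{E}_{\xi_t}\|\nabla_I f_{\xi_t}({\bm x}^*)\|$ to the stated linear bound, which in the paper requires a two-case analysis on the sign of that lower bound (producing the $\beta_0/\alpha_0$ term) together with a linearization of the resulting square root (producing the $\alpha_0\beta_0/\sqrt{1-\alpha_0^2}$ term) --- is precisely the ``completing the square'' conversion you anticipate, so the two arguments coincide.
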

\vspace{-5mm}
Inequality~(\ref{inequ:7}) means that the head projection always excludes a small fraction of the residual vector ${\bm x}^t - {\bm x}^*$  at each iteration. That is to say, most of the large magnitudes in  ${\bm x}^t - {\bm x}^*$ are captured by the head projection. More specifically, the head projection thresholds small magnitude entries of $\nabla f_{\xi_t}({\bm x}^t)$ to zeros. These small magnitude entries could lead the algorithm to a wrong direction. When $F({\bm x})$ is the least square loss, Lemma~\ref{lemma:lemma1} here is similar to Lemma 13 in~\citet{hegde2016fast}. However, there are two important differences: 1) our Lemma~\ref{lemma:lemma1} can be applied to any functions that satisfy Assumption~\ref{assumption:assumption_01} above,  while the RIP condition used in~\citet{hegde2016fast} can  be only applied to the least square loss; 2) since each $f_{\xi_t}({\bm x})$ is not strongly convex, the proof in~\citet{hegde2016fast} cannot be directly used. Instead, we use \textit{co-coercivity} in~\citet{nguyen2017linear} and then obtain the main theorem below.
\begin{theorem}
Let ${\bm x}^0$ be the start point of Algorithm~\ref{alg:graph-sto-iht}. If we choose a constant learning rate with $\eta_t = \eta$ and use Lemma~\ref{lemma:lemma1}, then the solution ${\bm x}^{t+1}$ of Algorithm~\ref{alg:graph-sto-iht} satisfies
\begin{equation}
\mathbb{E}_{\xi_{[t]}} \|{\bm x}^{t+1} - {\bm x}^* \| 
\leq \kappa^{t+1} \| {\bm x}^0 - {\bm x}^* \| + \frac{\sigma_2}{1-\kappa},
\label{theorem:inequ_01}
\end{equation}
where 
\vspace{-4mm}
\begin{align*}
\kappa &= (1 + c_\mathcal{T})
\Big(\sqrt{ \alpha\beta\eta^2 - 2 \alpha\eta + 1} + 
\sqrt{1 - \alpha_0^2}\Big),\\
\sigma_2 &= \sigma_1 + 
\eta \mathbb{E}_{\xi_{t}}\| \nabla_{I} f_{\xi_t} ({\bm x}^*) \|, \text{ and } \eta,\tau \in (0,2/\beta).
\end{align*}
\label{theorem:theorem_01}
\end{theorem}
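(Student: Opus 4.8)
The plan is to derive a one-step contraction of the form $\mathbb{E}_{\xi_t}\|\bm{x}^{t+1}-\bm{x}^*\| \le \kappa\|\bm{x}^t-\bm{x}^*\| + \sigma_2$ and then unroll it as a geometric series, peeling the two projections off one at a time. First I would dispose of the tail projection. Writing $\bm{w} := \bm{x}^t-\eta\bm{b}^t$, and noting that $\bm{x}^*\in\mathcal{M}(\mathbb{M})$ lies in some subspace of $\mathbb{M}$, we get $\min_{S\in\mathbb{M}}\|\bm{w}-{\rm P}(\bm{w},S)\| \le \|\bm{w}-\bm{x}^*\|$. Applying the Tail-Projection assumption to $\bm{x}^{t+1}={\rm P}(\bm{w},\mathbb{M},\mathbb{M}_\mathcal{T})$ together with the triangle inequality yields the deterministic estimate $\|\bm{x}^{t+1}-\bm{x}^*\| \le (1+c_\mathcal{T})\|\bm{w}-\bm{x}^*\|$, which is the source of the $(1+c_\mathcal{T})$ factor in $\kappa$.

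Next I would bound $\|\bm{w}-\bm{x}^*\|=\|(\bm{x}^t-\bm{x}^*)-\eta\bm{b}^t\|$. Since $\bm{b}^t$ is supported on $H$, decomposing the residual along $H$ and $H^c$ and using $\sqrt{a^2+b^2}\le a+b$ gives $\|\bm{w}-\bm{x}^*\| \le \|(\bm{x}^t-\bm{x}^*)_H-\eta\bm{b}^t\| + \|(\bm{x}^t-\bm{x}^*)_{H^c}\|$. The second term is exactly what Lemma~\ref{lemma:lemma1} controls, contributing the coefficient $\sqrt{1-\alpha_0^2}$ together with $\sigma_1$. For the first term I would add and subtract $\eta(\nabla f_{\xi_t}(\bm{x}^*))_H$ and split off $\eta\|(\nabla f_{\xi_t}(\bm{x}^*))_H\|$; after taking $\mathbb{E}_{\xi_t}$ this is at most $\eta\,\mathbb{E}_{\xi_t}\|\nabla_I f_{\xi_t}(\bm{x}^*)\|$, because $H$ is one of the subspaces maximized over in the definition of $I$, and this is precisely the extra term in $\sigma_2$.

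The leftover quantity $\|(\bm{x}^t-\bm{x}^*)_H-\eta(\nabla f_{\xi_t}(\bm{x}^t)-\nabla f_{\xi_t}(\bm{x}^*))_H\|$ I would bound by the same difference restricted to $R:=\text{supp}(\bm{x}^t)\cup\text{supp}(\bm{x}^*)\cup H$, which lies in $\mathbb{M}\oplus\mathbb{M}_\mathcal{H}\oplus\mathbb{M}_\mathcal{T}$ so that Assumption~\ref{assumption:assumption_01} is in force. I would then invoke the co-coercivity argument of~\citet{nguyen2017linear}: expanding $\mathbb{E}_{\xi_t}\|(\bm{x}^t-\bm{x}^*)-\eta(\nabla_R f_{\xi_t}(\bm{x}^t)-\nabla_R f_{\xi_t}(\bm{x}^*))\|^2$, the cross term averages to $\langle\nabla F(\bm{x}^t)-\nabla F(\bm{x}^*),\bm{x}^t-\bm{x}^*\rangle$ and is at least $\alpha\|\bm{x}^t-\bm{x}^*\|^2$ by $\alpha$-RSC of $F$, while convexity and $\beta$-RSS of each $f_{\xi_t}$ bound the squared-gradient term by $\beta$ times the same inner product. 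The two combine to $(1-2\alpha\eta+\alpha\beta\eta^2)\|\bm{x}^t-\bm{x}^*\|^2$, where the hypothesis $\eta\in(0,2/\beta)$ is exactly what keeps the coefficient $2\eta-\beta\eta^2$ nonnegative so that RSC may be applied; a Jensen step $\mathbb{E}\|\cdot\|\le\sqrt{\mathbb{E}\|\cdot\|^2}$ then yields the factor $\sqrt{\alpha\beta\eta^2-2\alpha\eta+1}$.

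Collecting the three estimates gives, conditionally on $\xi_{[t-1]}$, a one-step recursion $\mathbb{E}_{\xi_t}\|\bm{x}^{t+1}-\bm{x}^*\| \le \kappa\|\bm{x}^t-\bm{x}^*\| + \sigma_2$; taking total expectation and using the tower property over the independent draws promotes this to $\mathbb{E}_{\xi_{[t]}}\|\bm{x}^{t+1}-\bm{x}^*\| \le \kappa\,\mathbb{E}_{\xi_{[t-1]}}\|\bm{x}^t-\bm{x}^*\| + \sigma_2$. Unrolling from $t$ down to $0$ and summing $\sum_{j\ge0}\kappa^j=1/(1-\kappa)$, valid once $\kappa<1$, produces the claimed $\kappa^{t+1}\|\bm{x}^0-\bm{x}^*\|+\sigma_2/(1-\kappa)$. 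The main obstacle is the co-coercivity step: because the sampled $f_{\xi_t}$ are only convex and $\beta$-RSS and need not be strongly convex, no single sampled gradient step contracts on its own, and the contraction emerges only in expectation by trading per-sample smoothness against the $\alpha$-RSC of the aggregate $F$; the accompanying bookkeeping---checking that every vector appearing lives on a support inside $\mathcal{M}(\mathbb{M}\oplus\mathbb{M}_\mathcal{H}\oplus\mathbb{M}_\mathcal{T})$, that the random $H$ is handled consistently with Lemma~\ref{lemma:lemma1}, and that the tail-projection constant multiplying the residual error is tracked through the recursion---is the delicate part.
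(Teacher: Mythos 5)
Your proposal is correct and takes essentially the same route as the paper's own proof: the tail projection yields the $(1+c_\mathcal{T})$ factor, the $H$/$H^c$ split with Lemma~\ref{lemma:lemma1} controls the $H^c$ part, the add-and-subtract of $\nabla f_{\xi_t}({\bm x}^*)$ gives the $\eta\,\mathbb{E}_{\xi_t}\|\nabla_I f_{\xi_t}({\bm x}^*)\|$ term, and your inline co-coercivity expansion is exactly the paper's Lemma~\ref{lemma:lemma4}, after which the geometric-series unrolling is identical. As a minor note, the delicate bookkeeping you flag at the end is real: the per-step additive error actually comes out as $(1+c_\mathcal{T})\sigma_2$ rather than $\sigma_2$, a factor the paper's own proof also produces but which its theorem statement silently drops.
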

\vspace{-5mm}
Theorem~\ref{theorem:theorem_01} shows that Algorithm~\ref{alg:graph-sto-iht} still possibly enjoys linear convergence even if the full gradient is not available.  In order to make more detailed analysis of Theorem~\ref{theorem:theorem_01}, we call $\kappa$ in~(\ref{theorem:inequ_01}) the \textbf{\textit{contraction factor}}. A natural question  to ask is: under what condition can we get an effective contraction factor, $\kappa$, i.e., $\kappa < 1.$  When $\kappa < 1$, the estimation error is dominated by the term $\frac{\sigma_2}{1-\kappa}$. Suppose the random distribution of $\xi_t$ is uniform and the approximation factor of the head projection $c_\mathcal{H}$ can be arbitrarily boosted close to 1. Taking $\eta = 1 / \beta$, in order to get an effective contraction factor, $\kappa < 1$, the inverse of the condition number $\mu $ and tail projection factor $c_\mathcal{T}$ need to satisfy
\begin{equation}
\kappa =(1+c_\mathcal{T})(1+2\sqrt{\mu})\sqrt{ 1 - \mu } < 1.
\label{equation:effective_kappa}
\end{equation}
To be more specific, if $\kappa$ is a function of $\eta$, then it takes minimum at $\eta = 1/\beta$. By letting $\eta=1/\beta$, then $\kappa$ is simplified as $(1 + c_\mathcal{T}) (\sqrt{ 1 - \mu}  + \sqrt{1-\alpha_0^2})$. Furthermore, $\alpha_0$ is a concave function with respect to $\tau$ (recall that $\tau$ is a free parameter in $(0,2/\beta)$), and then we have its maximum at $\tau=\frac{1}{\beta}(1 + c_\mathcal{H} \sqrt{\frac{\beta - \alpha}{\beta - c_\mathcal{H}^2\alpha}})$ after calculation. By taking $c_\mathcal{H}\rightarrow 1^{-}$, we present $\kappa$ as $(1+c_\mathcal{T}) (1+2\sqrt{\mu})\sqrt{1- \mu}$

One of the assumptions of~(\ref{equation:effective_kappa}) is that $c_\mathcal{H}$ can be arbitrarily close to 1. One can boost the head projection by using the method proposed in~\citet{hegde2016fast}. In our experiment, we find that it is not necessary to boost the head projection because executing the head projection once is sufficient enough to obtain good performance. In the remainder of the section, we consider two popular loss  functions to discuss the conditions of getting effective contraction factors.

\textbf{Graph sparse linear regression.\quad} Given a design matrix ${\bm A}\in \mathbb{R}^{m\times p}$ and corresponding observed noisy vector ${\bm y} \in\mathbb{R}^m$ that are linked via the linear relationship
\begin{equation}
{\bm y} = {\bm A} {\bm x}^* + {\bm \epsilon},
\label{objective:least_square_model}
\end{equation}
where $\bm \epsilon\sim \mathcal{N}(\bm 0,\sigma^2\bm I)$. The graph sparse linear regression is to estimate the underlying sparse vector ${\bm x}^*$. The underlying graph $\mathbb{G}(\mathbb{V},\mathbb{E})$ is defined on ${\bm x}^*$. For example, $\text{supp}( {\bm x}^*)$ induces a connected subgraph with $s$ nodes, showcased in Figure~\ref{fig:simu_figs_re_00} in the experiment section. To estimate ${\bm x}^*$, naturally we consider the least square loss and formulate it as
\begin{align}
\argmin_{\text{supp}({\bm x}) \in \mathcal{M}(\mathbb{M})} F({\bm x}) := \frac{1}{n} \sum_{i=1}^{n} \frac{n}{2m} \| {\bm A}_{B_i} {\bm x} - {\bm y}_{B_i}\|^2,
\label{objective:least_square}
\end{align}
where $m$ observations have been partitioned into $n$ blocks, $B_1, B_2,\ldots, B_n$. Each block $B_i$ is indexed by $i$ with block size $b=m/n$. In Corollary~\ref{corollary:corollary_01}, we show that it is possible to get a linear convergence rate under RSC/RSS assumption. 

\begin{corollary}
If $F({\bm x})$ in ~(\ref{objective:least_square}) satisfies the $\alpha$-RSC property and each $f_i ({\bm x})=\frac{n}{2m}\|{\bm A}_{B_i} {\bm x} - {\bm y}_{B_i}\|^2$ satisfies the $\beta$-RSS property, then we have the following condition
\begin{equation}
\frac{\alpha}{2} \|{\bm x}\|^2 \leq \frac{1}{2m}\| 
{\bm A}{\bm x} \|^2 \leq \frac{\beta}{2} \|{\bm x}\|^2. \nonumber
\end{equation}
Let the strong convexity parameter $\alpha = 1-\delta$ and strong smoothness parameter $\beta=1+\delta$, where $0<\delta <1$. The condition of effective contraction factor is as the following
\begin{equation*}
(1+c_\mathcal{T})\Big(\sqrt{\frac{2}{1+\delta}} + \frac{2\sqrt{2(1-\delta)}}{1+\delta}\Big)\sqrt{\delta} < 1.
\end{equation*}
\label{corollary:corollary_01}
\vspace*{-0.2in}
\end{corollary}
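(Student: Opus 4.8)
The plan is to split the argument into the two independent pieces that mirror the two displayed claims. First I would verify the RSC/RSS characterization $\frac{\alpha}{2}\|\bm x\|^2 \le \frac{1}{2m}\|\bm A\bm x\|^2 \le \frac{\beta}{2}\|\bm x\|^2$ directly from Definition~\ref{definition_1}, exploiting that the losses are quadratics; then I would substitute $\alpha=1-\delta$ and $\beta=1+\delta$ into the already-simplified contraction factor of~(\ref{equation:effective_kappa}) and read off the stated inequality after elementary algebra.

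For the first piece, the key observation is that every $f_i$ and the aggregate $F$ are quadratics, so their Bregman divergences are exact. Summing the blocks gives $F(\bm x)=\frac{1}{n}\sum_i f_i(\bm x)=\frac{1}{2m}\|\bm A\bm x-\bm y\|^2$, whose Hessian is $\frac{1}{m}\bm A^\top\bm A$; hence $B_F(\bm x,\bm y)=\frac{1}{2m}\|\bm A(\bm x-\bm y)\|^2$, and likewise $B_{f_i}(\bm x,\bm y)=\frac{n}{2m}\|\bm A_{B_i}(\bm x-\bm y)\|^2$. Writing $\bm z=\bm x-\bm y$, the $\alpha$-RSC hypothesis on $F$ yields the lower bound $\frac{\alpha}{2}\|\bm z\|^2\le\frac{1}{2m}\|\bm A\bm z\|^2$ at once. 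For the upper bound I would average the per-block $\beta$-RSS inequalities $\frac{n}{2m}\|\bm A_{B_i}\bm z\|^2\le\frac{\beta}{2}\|\bm z\|^2$ over $i$ together with $B_F=\frac{1}{n}\sum_i B_{f_i}$, which collapses $\sum_i\|\bm A_{B_i}\bm z\|^2$ to $\|\bm A\bm z\|^2$ and delivers $\frac{1}{2m}\|\bm A\bm z\|^2\le\frac{\beta}{2}\|\bm z\|^2$; relabeling $\bm z$ then gives the claim.

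For the second piece, I would start from the simplified contraction factor $\kappa=(1+c_\mathcal{T})(1+2\sqrt{\mu})\sqrt{1-\mu}$ of~(\ref{equation:effective_kappa}), which already incorporates the choices $\eta=1/\beta$, the maximizing $\tau$, and $c_\mathcal{H}\to 1^-$. Substituting $\mu=\alpha/\beta=\frac{1-\delta}{1+\delta}$ gives $1-\mu=\frac{2\delta}{1+\delta}$, so $\sqrt{1-\mu}=\sqrt{\frac{2\delta}{1+\delta}}$ and $2\sqrt{\mu}\sqrt{1-\mu}=\frac{2\sqrt{2\delta(1-\delta)}}{1+\delta}$. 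Factoring $\sqrt{\delta}$ out of both terms converts $\kappa<1$ into $(1+c_\mathcal{T})\bigl(\sqrt{\tfrac{2}{1+\delta}}+\tfrac{2\sqrt{2(1-\delta)}}{1+\delta}\bigr)\sqrt{\delta}<1$, which is exactly the stated condition.

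I expect the only delicate point to be bookkeeping rather than mathematics: confirming that the $\beta$-RSS hypothesis is imposed on the individual $f_i$ (so the averaging constant is $1/n$ and the factor $n/(2m)$ cancels cleanly to $1/(2m)$), and checking that the maximizing $\tau$ with $c_\mathcal{H}=1$ indeed gives $\alpha_0=2\mu-1$, so that $\sqrt{1-\alpha_0^2}=2\sqrt{\mu(1-\mu)}$ — the identity underlying the form of~(\ref{equation:effective_kappa}). Both are short verifications, and no estimate beyond Theorem~\ref{theorem:theorem_01} and its surrounding discussion is required, so the corollary reduces to a specialization plus routine simplification.
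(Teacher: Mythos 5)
Your proposal is correct and follows essentially the same route as the paper's proof: establish the sandwich inequality from the $\alpha$-RSC hypothesis on $F$ together with the averaged $\beta$-RSS hypotheses on the $f_i$, then substitute $\mu=\alpha/\beta=\frac{1-\delta}{1+\delta}$ into the simplified contraction factor $(1+c_\mathcal{T})(1+2\sqrt{\mu})\sqrt{1-\mu}<1$ from~(\ref{equation:effective_kappa}) and factor out $\sqrt{\delta}$. Your use of the exact Bregman divergences of the quadratic losses makes the first step slightly cleaner than the paper's summing of gradient-difference bounds, but it is the same argument in substance.
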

Table~\ref{table:table_01} gives a contrast of contraction factor $\kappa$ in our method and in~\citet{hegde2016fast}. Both conditions are obtained by letting $c_\mathcal{H}\rightarrow 1^{-}$. 
\begin{table}[ht!]
\def\arraystretch{1.5}
\caption{$\kappa$ of two algorithms}
\centering
\begin{tabular}{cc}
\hline\hline
\textbf{Algorithm} & $\kappa$ \\
\hline
\textsc{GraphIHT} & $(1+c_\mathcal{T})\Big(\sqrt{\delta}+ 2\sqrt{1-\delta}\Big)\sqrt{\delta}$ \\
\hline
\textsc{GraphStoIHT} & $(1+c_\mathcal{T})\Big(\sqrt{\frac{2}{1+\delta}} + \frac{2\sqrt{2(1-\delta)}}{1+\delta}\Big)\sqrt{\delta}$ \\
\hline
\label{table:table_01}
\end{tabular}
\vspace*{-6mm}
\end{table}
Table~\ref{table:table_01} shows that $\kappa$ of the two algorithms could be effective if $\delta\rightarrow 0$. By using a sufficiently large number of observations $m$, we always have $\delta \rightarrow 0$. 
According to the preceding analysis, \textsc{GraphIHT} uses a constant learning rate $\eta=1$ while \textsc{GraphStoIHT} uses the learning rate $\eta=1/(1+\delta)$. However, the difference between the two learning rates disappears when $\delta \rightarrow 0$. To be more specific, $\kappa$ of \textsc{GraphIHT} is controlled by $\mathcal{O}(\sqrt{\delta}\cdot 2(1+c_\mathcal{T}))$ while for \textsc{GraphStoIHT}, $\kappa$ is controlled by $\mathcal{O}(\sqrt{\delta}\cdot 3\sqrt{2}(1+c_\mathcal{T}))$. Consequently,  the contraction factor $\kappa$ of \textsc{GraphStoIHT} is competitive with that of \textsc{GraphIHT} while \textsc{GraphStoIHT} has the advantage of cheaper per-iteration cost $\mathcal{O}( mp/n)$, if the size of blocks $n$ is large, compared to $\mathcal{O}(m p)$ of \textsc{GraphIHT}. To obtain $\kappa < 1$, $\delta \leq 0.0527$ for \textsc{GraphIHT} while $\delta \leq 0.0142$ for \textsc{GraphStoIHT}. The gap between the two $\kappa$ is mainly due to the randomness introduced in our algorithm. 

The error term $\sigma_2$ in Equation~(\ref{theorem:inequ_01}) mainly depends on $b$ and $\sigma$. The gradient of $f_{\xi_t}$ at $\bm x^*$ is $\bm A_{B_{\xi_t}}^\top \bm \epsilon_{B_{\xi_t}}/b$. As shown in~\citet{nguyen2017linear}, it can be bounded as $C \sqrt{\sigma^2|I|\log p/b}$, where $C$ is a constant independent of $b$ and $\sigma$.

Though \textsc{GraphStoIHT} needs more observations than \textsc{GraphIHT} theoretically, our experiment results demonstrate that \textsc{GraphStoIHT} uses less observations than \textsc{GraphIHT} to estimate ${\bm x}^*$ accurately under the same condition. Such kind of interesting experiment phenomenon is reported in~\citet{nguyen2017linear} but for sparsity constraint. 

\textbf{Graph logistic regression.\quad} Given a dataset $\{{\bm a_i}, y_i\}_{i=1}^m$, the graph logistic regression is formulated as the following problem
\vspace{-4mm}
\begin{align}
\argmin_{\text{supp}({\bm x}^*) \in \mathcal{M}(\mathbb{M})} F({\bm x}) := \frac{1}{n} \sum_{i=1}^{n} f_i({\bm x}),
\label{objective:logistic_regression}
\end{align}
where each $f_i({\bm x})$ is defined as
\begin{small}
\begin{equation}
f_i({\bm x}) =\frac{n}{m} \sum_{j=1}^{m/n}  \Big[ \log (1 + \exp{(- y_{i_j} {\bm a}_{i_j}^\mathsf{T}{\bm x})}) \Big] + \frac{\lambda}{2} \| {\bm x} \|^2.
\label{objective:logistic_regression_i}
\end{equation}
\end{small}
\noindent Here $\lambda$ is the regularization parameter. Again we divide $[m]$ into $n$ blocks, i.e., $B_1, B_2,\ldots,B_n$ and assume each block size $b$ is the same, i.e., $b =m/n$. Problem (\ref{objective:logistic_regression}) has many important applications. For example, in its application to breast cancer classification based on gene mirco-array dataset and gene-gene network, each ${\bm a}_i$ is the $i$-th sample representing the gene expression records of patient $i$, and $y_i$ is associated label with value -1 or 1.  $y_i=1$ means metastatic growth on patient $i$ while $y_i=-1$ non-metastatic growth on patient $i$. Assume that we have more prior information such as gene-gene network, the goal here is to minimize the objective function $F({\bm x})$ meanwhile to find a connected subgraph induced by $\text{supp}({\bm x})$ which is highly relevant with breast cancer-related genes. This kind of application is theoretically based on the following corollary.
\begin{corollary}
Suppose we have the logistic loss, $F({\bm x})$ in~(\ref{objective:logistic_regression}) and each sample ${\bm a}_i$ is normalized, i.e., $\| {\bm a}_i \| = 1$. Then $F({\bm x})$ satisfies $\lambda$-RSC and each $f_{i} ({\bm x})$ in~(\ref{objective:logistic_regression_i}) satisfies $(\alpha + (1+\nu) \theta_{max})$-RSS. The condition of getting effective contraction factor of \textsc{GraphStoIHT} is
as the following
\begin{align}
\frac{\lambda}{\lambda + n(1+\nu)\theta_{max} / 4m} \geq \frac{243}{250},
\label{condition:logistic_regresison}
\end{align}
with probability $1-p \exp{(-\theta_{max}\nu/4)}$, where $\theta_{max} = \lambda_{max} (\sum_{j=1}^{m/n} \mathbb{E}[{\bm a}_{i_j}{\bm a}_{i_j}^\mathsf{T}])$ and $\nu \geq 1$.
\end{corollary}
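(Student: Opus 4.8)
The plan is to split the statement into three parts and assemble them through the inverse condition number $\mu=\alpha/\beta$: establish the RSC/RSS structure from the form of the logistic Hessian, convert the structural smoothness bound into an explicit constant by a high-probability operator-norm estimate, and substitute $\mu$ into the effective-contraction inequality already obtained for Theorem~\ref{theorem:theorem_01}. First I would compute the Hessian of $f_i$ in~(\ref{objective:logistic_regression_i}). Each logistic term $\log(1+\exp(-y\bm{a}^\mathsf{T}\bm{x}))$ has Hessian $s(1-s)\,\bm{a}\bm{a}^\mathsf{T}$ with $s\in(0,1)$ the sigmoid value, so $\bm{0}\preceq s(1-s)\,\bm{a}\bm{a}^\mathsf{T}\preceq\tfrac14\bm{a}\bm{a}^\mathsf{T}$; adding the $\tfrac{\lambda}{2}\|\bm{x}\|^2$ term yields
\[
\lambda\bm{I}\preceq\nabla^2 f_i(\bm{x})\preceq\lambda\bm{I}+\frac{n}{4m}\sum_{j=1}^{m/n}\bm{a}_{i_j}\bm{a}_{i_j}^\mathsf{T}.
\]
The lower bound is deterministic, and integrating it along the segment between $\bm{x}$ and $\bm{y}$ bounds the Bregman divergence below, so by Definition~\ref{definition_1} the average $F$ is $\lambda$-RSC, i.e.\ $\alpha=\lambda$.

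The crux is the upper bound. On the restricted subspace $\mathcal{M}(\mathbb{M}\oplus\mathbb{M}_\mathcal{H}\oplus\mathbb{M}_\mathcal{T})$ the smoothness constant is at most $\lambda+\tfrac{n}{4m}\lambda_{\max}\big(\sum_{j}\bm{a}_{i_j}\bm{a}_{i_j}^\mathsf{T}\big)$, so it suffices to control this random maximum eigenvalue. I would apply the matrix Chernoff inequality to the independent positive-semidefinite summands $\bm{X}_j=\bm{a}_{i_j}\bm{a}_{i_j}^\mathsf{T}$: the normalization $\|\bm{a}_i\|=1$ forces $\lambda_{\max}(\bm{X}_j)=1$, while $\lambda_{\max}(\sum_j\mathbb{E}\bm{X}_j)=\theta_{\max}$ by definition, giving
\[
\Pr\!\big\{\lambda_{\max}(\textstyle\sum_j\bm{X}_j)\ge(1+\nu)\theta_{\max}\big\}\le p\exp\!\big(-\theta_{\max}[(1+\nu)\ln(1+\nu)-\nu]\big).
\]
The stated probability $1-p\exp(-\theta_{\max}\nu/4)$ then follows from the elementary inequality $(1+\nu)\ln(1+\nu)-\nu\ge\nu/4$ valid for all $\nu\ge1$, which I would verify by checking it at $\nu=1$ (where the left side is $2\ln2-1\approx0.386>\tfrac14$) and noting that the derivative of the difference, $\ln(1+\nu)-\tfrac14$, is positive for $\nu\ge1$. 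On this event each $f_i$ is $\big(\lambda+\tfrac{n(1+\nu)\theta_{\max}}{4m}\big)$-RSS, which fixes $\beta$.

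It then remains to substitute $\mu=\alpha/\beta=\lambda\big/\big(\lambda+n(1+\nu)\theta_{\max}/4m\big)$ into the effective-contraction condition. As derived in the discussion after Theorem~\ref{theorem:theorem_01}, letting $c_\mathcal{H}\to1^{-}$ reduces the contraction factor to $\kappa=(1+c_\mathcal{T})(1+2\sqrt{\mu})\sqrt{1-\mu}$; imposing $\kappa<1$ with the sharpest tail factor $c_\mathcal{T}\to1$ gives the scalar inequality $2(1+2\sqrt{\mu})\sqrt{1-\mu}<1$, and solving it for $\mu$ produces the threshold $\mu\ge243/250$, which is precisely the displayed condition~(\ref{condition:logistic_regresison}).

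The step I expect to be the main obstacle is the concentration estimate: recognizing that the appropriate tool is the matrix Chernoff bound (rather than matrix Bernstein), checking that the normalization hypothesis $\|\bm{a}_i\|=1$ delivers the uniform bound $R=1$ on the rank-one summands, and distilling the clean exponent $\nu/4$ from the scalar inequality above---this is where all the probabilistic content and the role of the hypothesis $\nu\ge1$ reside. A secondary subtlety worth flagging is that RSS is required only on the restricted subspace, for which the full operator-norm Chernoff bound is a valid though slightly conservative surrogate, and that RSC holds surely while RSS holds only on the high-probability event, which is exactly why the conclusion carries the probability qualifier.
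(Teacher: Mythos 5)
Your proposal is correct and follows essentially the same route as the paper: compute the logistic Hessian to get $\lambda\bm{I}\preceq\nabla^2 f_i\preceq\lambda\bm{I}+\frac{n}{4m}\bm{A}_{B_i}^\mathsf{T}\bm{A}_{B_i}$ (the paper writes the middle factor with $\mathrm{sech}^2$, equivalent to your $s(1-s)$ form), apply the matrix Chernoff bound with $R=1$ to obtain $\|\bm{A}_{B_i}^\mathsf{T}\bm{A}_{B_i}\|_2\le(1+\nu)\theta_{\max}$ with probability $1-p\exp(-\theta_{\max}\nu/4)$, and substitute $\mu=\lambda/(\lambda+n(1+\nu)\theta_{\max}/4m)$ into $(1+c_\mathcal{T})(1+2\sqrt{\mu})\sqrt{1-\mu}<1$ to extract $\mu\ge 243/250$. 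The only cosmetic difference is your monotonicity argument for $(1+\nu)\ln(1+\nu)-\nu\ge\nu/4$ when $\nu\ge1$, where the paper instead chains $\nu-(1+\nu)\log(1+\nu)\le-\nu\log(1+\nu)/2\le-\nu^2/(2(1+\nu))\le-\nu/4$; both are valid.
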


Given a sufficiently large number of observations $m$ and a suitable $n$, it is still possible to find a bound such that the inequality~(\ref{condition:logistic_regresison}) is valid. In~\citet{bahmani2013greedy}, they show when $\mu > 1/(1+\frac{(1+\nu)\theta_{\max}}{4\lambda})$, it is sufficient for \textsc{GraSP}~\cite{bahmani2013greedy} to get a linear convergence rate. Our corollary shares the similar spirit.

%%%%%%%%%%%%%%%%%%%%%%%%%%%%%%% Experiments %%%%%%%%%%%%%%%%%%%%%%%%%%%%%%%%%%%%
%%%%%%%%%%%%%%%%%%%%%%%%%%%%%%%%%%%%%%%%%%%%%%%%%%%%%%%%%%%%%%%%%%%%%%%%%%%%%%%%
\section{Experiments}
\label{section:experiments}
We conduct experiments on both synthetic and real datasets to evaluate the performance of \textsc{GraphStoIHT}\footnote{Implementation details of the head and tail projection are provided in Supplementary material.}. We consider two applications, graph sparse linear regression and breast cancer metastatic classification. All codes are written in Python and C language. All experiments are tested on 56 CPUs of Intel Xeon(R) E5-2680 with 251GB of RAM.

%%%%%%%%%%%%%%%%%%%%%%%%%%%%%%%%%%%%%%%%%%%%%%%%%%%%%%%%%%%%%%%%%%%%%%%%%%%%%%%%
\subsection{Graph sparse linear regression}
\label{section:section_5.1}
\textbf{Experimental setup.\quad} The graph sparse linear regression is to recover a graph-structured vector ${\bm x}^*\in \mathbb{R}^p$ by using a Gaussian matrix ${\bm A}\in \mathbb{R}^{m\times p}$, and the observation vector ${\bm y}$ where ${\bm y}$ is measured by ${\bm y} = {\bm A} {\bm x}^* + {\bm \epsilon}$ as defined in~(\ref{objective:least_square_model}). The entries of the design matrix ${\bm A}$ are sampled from $\mathcal{N}(0,1/\sqrt{m})$ independently and nonzero entries of ${\bm x}^*$ from $\mathcal{N}(0,1)$ independently for the simulation study. ${\bm \epsilon}$ is potentially a Gaussian noise vector and ${\bm \epsilon}={\bm 0}$ the noiseless case.  We mainly follow the experimental settings in~\citet{nguyen2017linear}. All results are averaged on 50 trials of trimmed results by excluding the best $5\%$ and the worst $5\%$. All methods terminate when $\|{\bm A} {\bm x}^{t+1} - {\bm y}\|\leq 10^{-7}$ (corresponding to convergence ) or $t/n\geq 500$ (corresponding to the maximum number of epochs allowed). For \textsc{GraphStoIHT}, one epoch contains $n$ iterations. We recall that ${\bm A}$ has been partitioned into $n$ blocks with block size $b$. We say ${\bm x}^*$ is successfully recovered if the \textit{estimation error} $\|{\bm x}^{t+1} - {\bm x}^*\| \leq 10^{-6}$.
\begin{figure}[ht]
\hfill
\subfigure[$s=20$]{\includegraphics[width=3cm]{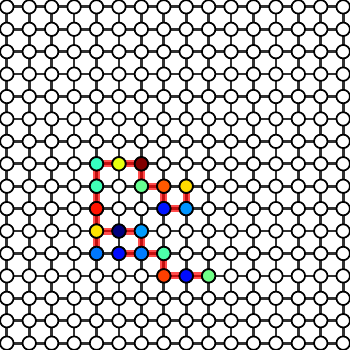}}
\hfill
\subfigure[$s=28$]{\includegraphics[width=3cm]{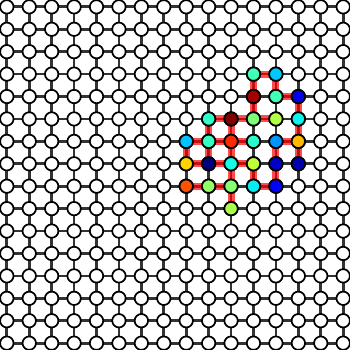}}
\hfill
\caption{
Two simulated graphs. Subgraphs (colored nodes and edges) are generated by \textit{random walk}. Each node $v_i$ is associated with $x_i$. Colored-face nodes have values sampled from $\mathcal{N}(0, 1)$ while white-face nodes have values 0.0. }
\label{fig:simu_figs_re_00}
\end{figure}

To simulate a graph structure on ${\bm x}^*\in \mathbb{R}^p$ that mimics realistic subgraph patterns (e.g., malicious activities) in a network~\cite{yu2016survey}, we fix $p=256$ and generate its $s$ nonzero entries by using \textit{random walk}~\cite{lovasz1993random} on a $16 \times 16$ grid graph so that $\|{\bm x}^*\|_0$ forms a connected subgraph.  Each edge has unit weight $1.0$. Specifically, the procedure of \textit{random walk} follows three main steps: 1) select an initial node (we choose the center of the grid); 2) move to its neighbor with probability $1/d(v_t)$, where $d(v_t)$ is the degree of node $v_t$; 3) repeat 2) until $s$ different nodes have been visited. Figure~\ref{fig:simu_figs_re_00} presents two \textit{random walks} of $s=20$ (on the left) and $s=28$ (on the right).

\begin{figure}[ht!]
\centering
{\includegraphics[width=8cm,height=3cm]{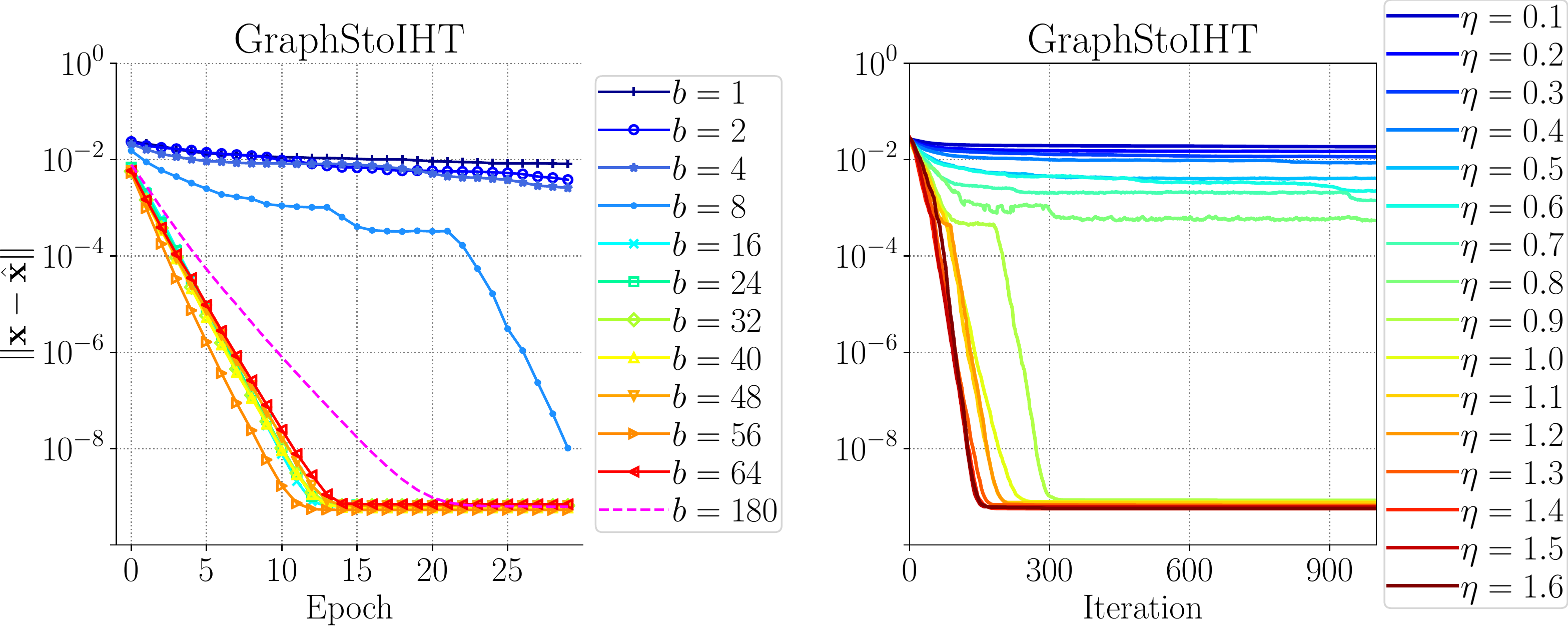}}
\caption{
Choice of $b$ and $\eta$. The left part illustrates the \textit{estimation error} as a function of epochs for different choices of $b$. When $b=180$, it degenerates to \textsc{GraphIHT} (the dashed line). The right part shows the \textit{estimation error} as a function of iterations for different choices of $\eta$.}
\label{fig:simu_figs_re_02}
\vspace*{-0.1in}
\end{figure} 

\textbf{Choice of $b$ and $\eta$.\quad} We first consider how block size $b$ and learning rate $\eta$ affect the performance of \textsc{GraphStoIHT}. We fix the sparsity $s=8$ and $n=180$ and try different  $b$ from set $\{ 1, 2, 4, 8, 16, 24, 32,40, 48, 56, 64, 180 \}$. When $b=1$, only 1 observation has  been used at each iteration; when $b=180$, all measurements used, corresponding to \textsc{GraphIHT}. Results are presented on the left of  Figure~\ref{fig:simu_figs_re_02}. In order to successfully recover ${\bm x}^*$ within 30 epochs, the block size should be at least $s$. A potential explanation is that, since $b < s$, the Hessian matrix of $f_{\xi_t}$, i.e., ${\bm A}_{B_{\xi_t}}^\top{\bm A}_{B_{\xi_t}}$ is not positive definite. Thus it is hard to converge to ${\bm x}^*$ in a short time. Another interesting finding is that \textsc{GraphStoIHT} converges faster than \textsc{GraphIHT} when block size is suitable, say between 32 and 64.

To further explore learning rate $\eta$, we use the similar setting above except $b=8, m=80$ (when $m=80$, \textsc{GraphIHT} successfully recovers ${\bm x}^*$ with high probability as shown in Figure~\ref{fig:simu_figs_re_01} (a). We consider 16 different $\eta$ from set $\{0.1, 0.2,\ldots,1.5, 1.6\}$. The right part of Figure~\ref{fig:simu_figs_re_02} shows that \textsc{GraphStoIHT} converges even when the learning rate is relatively large say $\eta \geq 1.5$. It means we can choose relatively larger learning rate so that the algorithm can achieve optimal solution faster. By Theorem~\ref{theorem:theorem_01}, the optimal learning rate $\eta$ is chosen by $\eta = 1/\beta$, so a potential explanation is that the strong smoothness parameter $\beta$ is relatively small due to the head projection inequality $\| \nabla_{H} f_{\xi_t}({\bm x}^{t+1}) -\nabla_{H} f_{\xi_t}({\bm x}^*) \|\leq \beta \|{\bm x}^{t+1} - {\bm x}^*\|$. 

\begin{figure}[ht!]
\centering
{\includegraphics[width=8.0cm,height=4.0cm]{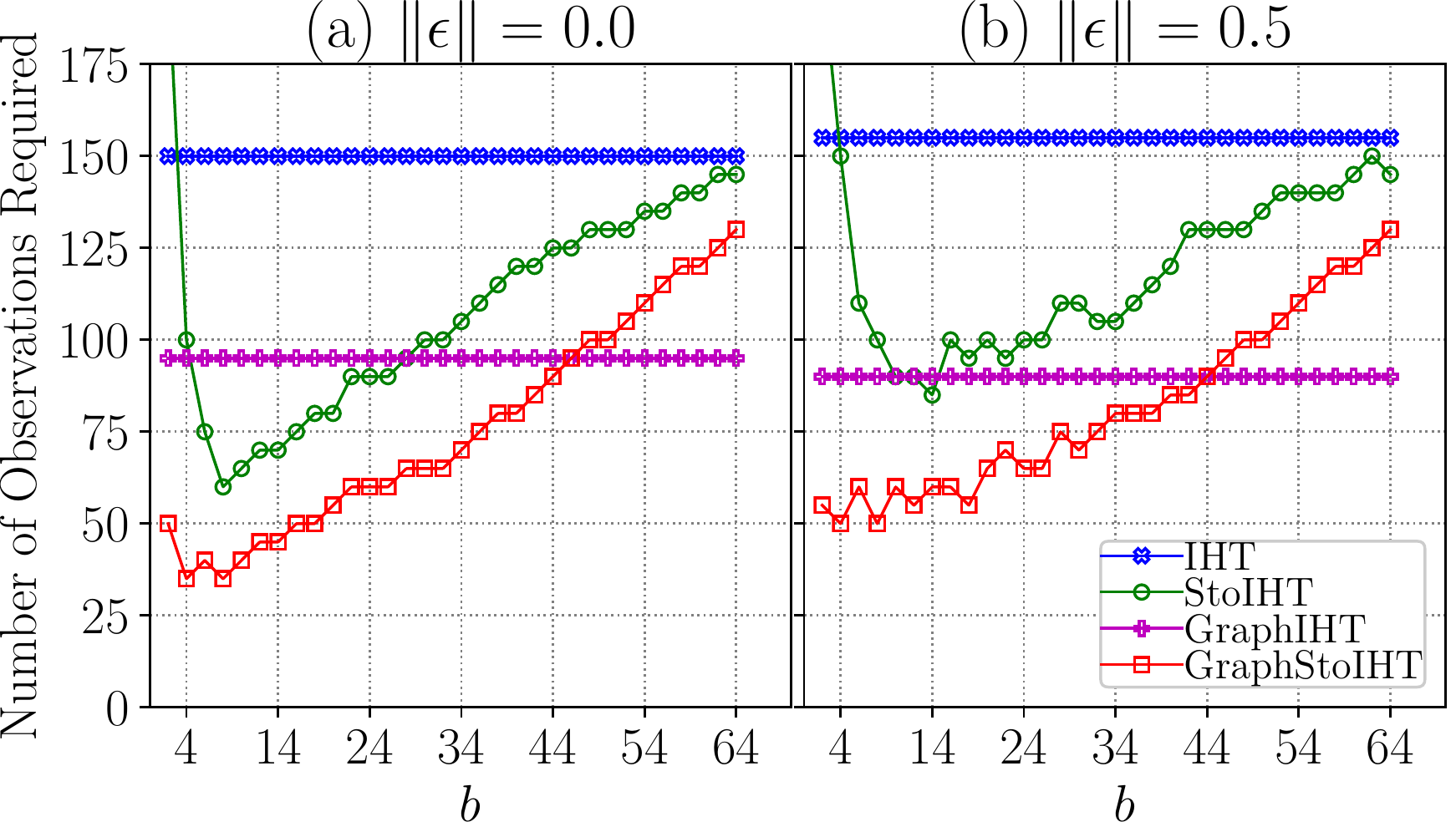}}
\vspace*{-0.1in}
\caption{Robustness to noise ${\bm \epsilon}$. The number of observations required is a function of different block sizes.}
\label{fig:simu_figs_re_04}
\vspace*{-0.1in}
\end{figure}
\textbf{Robustness to noise ${\bm \epsilon}$.\quad} To explore the performance under noise setting, we consider two noise conditions: $\|{\bm \epsilon}\| = 0.0$ (without noise) and $\|{\bm \epsilon}\| = 0.5$, where ${\bm \epsilon} \sim \mathcal{N}({\bm 0},{\bm I})$.  When $\| {\bm \epsilon} \| = 0.5$, the sparse vector is successfully recovered if $\|{\bm x}^{t+1} - {\bm x}^*\| \leq 0.5$. We try the block sizes $b$ from the set $\{2, 4, 6, 8, 10, \ldots, 62, 64\}$, and then measure the number of observations $m$ required by the algorithms (so that ${\bm x}^*$ can be successfully recovered with probability 1.0). The minimum number of observations is required such that the recovery error  $\|{\bm x}^{t+1} - {\bm x}^* \| \leq 10^{-6}$ for $\| {\bm \epsilon}\| = 0$ and $\|{\bm x}^{t+1} - {\bm x}^* \| \leq 0.5$ for $\| {\bm \epsilon}\| = 0.5$ in all trials. The results are reported in Figure~\ref{fig:simu_figs_re_04}. We can see both \textsc{GraphStoIHT} and \textsc{StoIHT} are robust to noise. In particular, when block size is between 4 and 8, the number of observations required is the least for our method. Compared with the noiseless (the left), the performance of noise case (the right) degrades gracefully.

\begin{figure}[ht!]
\centering
{\includegraphics[width=8cm,height=6cm]{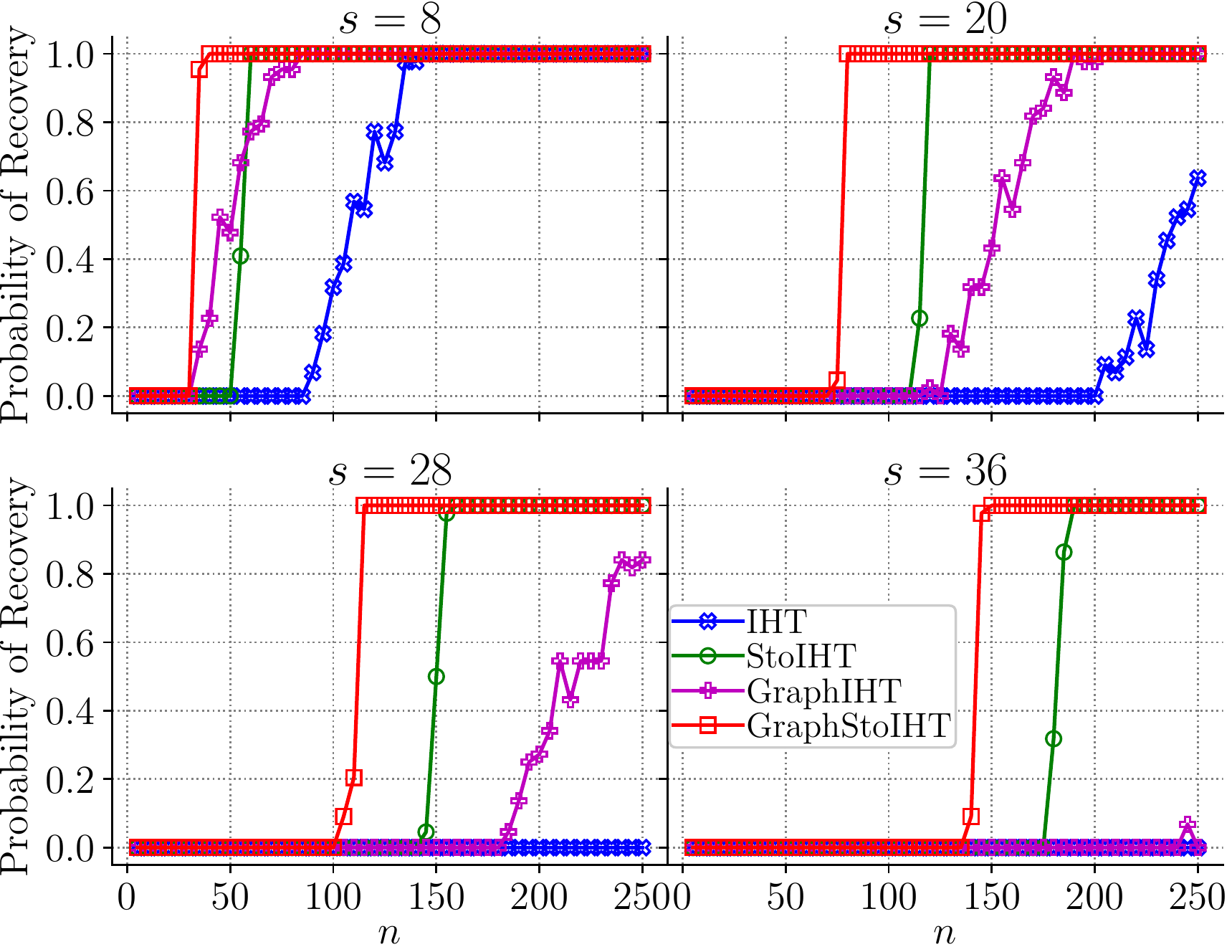}}
\vspace*{-0.1in}
\caption{Probability of recovery on synthetic dataset. The probability of recovery is a function of number of observations $m$.}
\label{fig:simu_figs_re_01}
\vspace*{-0.1in}
\end{figure}

\textbf{Results from synthetic dataset.\quad} We explore the performance of \textsc{GraphStoIHT} on the \textit{probability of recovery}, which is defined as the total number of successful trials divided by the total number of trimmed trials. Recall that ${\bm x}^*$ is successfully recovered if the \textit{estimation error} $\|{\bm x}^{t+1} - {\bm x}^* \| \leq 10^{-6}$. We compare \textsc{GraphStoIHT} with three baseline methods, i.e., Iterative Hard Thresholding (\textsc{IHT})~\cite{blumensath2009iterative}, Stochastic Iterative Hard Thresholding (\textsc{StoIHT})~\cite{nguyen2017linear}, and \textsc{GraphIHT}~\cite{hegde2016fast}. We consider four different sparsity levels, i.e., $s\in \{8,20,28,36\}$. For each trial, ${\bm x}^*$ is generated by \textit{random walk}. To be consistent with the setting in~\citet{nguyen2017linear}, the block size is set by $b=\min(s,m)$ with the number of observations $m$ chosen from $\{5, 10, \ldots, 245, 250\}$. All of the four methods, including ours, use a constant learning rate $\eta =1$. Our method uses less observations to successfully recover ${\bm x}^*$ due to the randomness (compared with \textsc{IHT} and \textsc{GraphIHT}) and the graph-structured projection (compared with \text{IHT} and \textsc{StoIHT}) as shown in Figure~\ref{fig:simu_figs_re_01}. It indicates that \textsc{GraphStoIHT} outperforms the other three baselines in terms of probability of recovery. It also shows that SGD-based methods are more stable than batch methods with respect to small perturbation of data as recently shown by the works of~\citet{hardt2016train} and~\citet{charles2018stability}. 

\begin{figure}[ht!]
\centering
{\includegraphics[width=8.0cm,height=5cm]{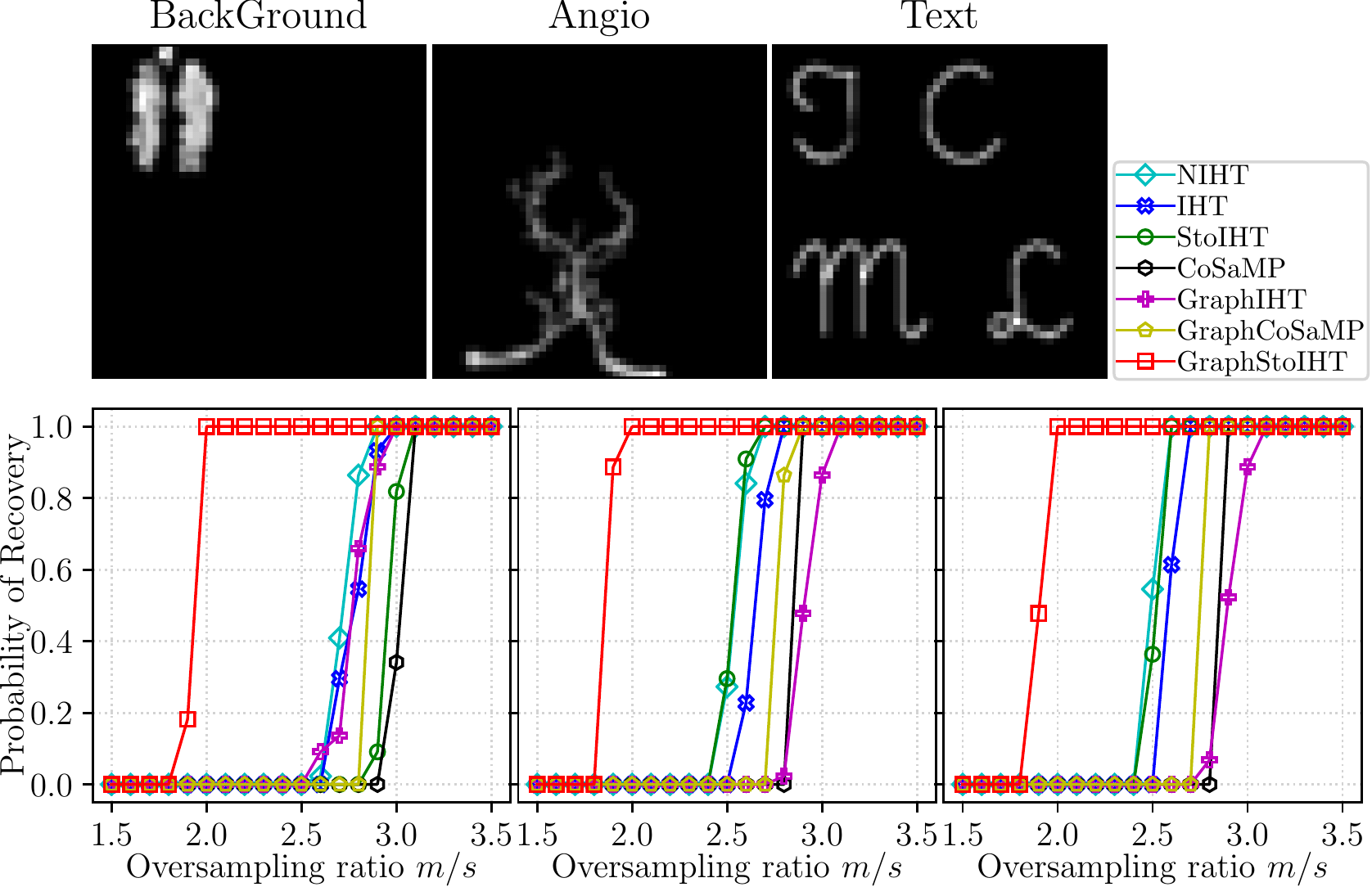}}
\caption{Probability of recovery on three $50\times 50$ resized real images: (a) Background, (b) Angio, and (c) Text~\cite{hegde2015nearly}. The probability of recovery is a function of the oversampling ratio $m/s$.}
\label{fig:simu_figs_re_07}
\vspace*{-0.1in}
\end{figure}

\begin{table*}[ht!]
\caption{AUC score $\pm$ standard deviation on the breast cancer dataset.}
\centering
\scriptsize
\begin{tabular}{ccccccccccc}
\hline
Folding ID & $\ell^1$-\textsc{Pathway} & $\ell^1/\ell^2$-\textsc{Pathway} & $\ell^1$-\textsc{Edge} & $\ell^1/\ell^2$-\textsc{Edge} 
& \textsc{IHT} & \textsc{StoIHT} & \textsc{GraphIHT} & \textsc{GraphStoIHT} \\
\hline
Folding 00 & 0.705$\pm$0.09 & 0.715$\pm$0.07 & 0.726$\pm$0.07 & 0.724$\pm$0.07 & 0.726$\pm$0.07 & \textbf{0.731}$\pm$0.06 & 0.716$\pm$0.02 & 0.718$\pm$0.03 \\
Folding 01 & 0.665$\pm$0.11 & \textbf{0.718}$\pm$0.03 & 0.688$\pm$0.04 & 0.703$\pm$0.07 & 0.680$\pm$0.07 & 0.683$\pm$0.07 & 0.710$\pm$0.06 & 0.704$\pm$0.05 \\
Folding 02 & 0.640$\pm$0.07 & \textbf{0.729}$\pm$0.06 & 0.714$\pm$0.04 & 0.717$\pm$0.04 & 0.716$\pm$0.06 & 0.723$\pm$0.06 & 0.724$\pm$0.07 & 0.720$\pm$0.08 \\
Folding 03 & 0.691$\pm$0.05 & 0.705$\pm$0.05 & 0.720$\pm$0.05 & \textbf{0.722}$\pm$0.04 & 0.699$\pm$0.05 & 0.703$\pm$0.05 & 0.687$\pm$0.04 & 0.687$\pm$0.04 \\
Folding 04 & 0.680$\pm$0.05 & 0.690$\pm$0.05 & 0.705$\pm$0.06 & \textbf{0.721}$\pm$0.05 & 0.694$\pm$0.05 & 0.695$\pm$0.06 & 0.687$\pm$0.05 & 0.688$\pm$0.03 \\
Folding 05 & 0.648$\pm$0.07 & 0.694$\pm$0.04 & 0.658$\pm$0.06 & 0.683$\pm$0.07 & 0.671$\pm$0.07 & 0.671$\pm$0.07 & 0.675$\pm$0.07 & \textbf{0.712}$\pm$0.05 \\
Folding 06 & 0.682$\pm$0.04 & 0.733$\pm$0.04 & 0.701$\pm$0.05 & 0.701$\pm$0.06 & 0.712$\pm$0.07 & 0.733$\pm$0.06 & \textbf{0.742}$\pm$0.06 & 0.741$\pm$0.06 \\
Folding 07 & 0.674$\pm$0.04 & 0.682$\pm$0.07 & 0.695$\pm$0.04 & 0.704$\pm$0.03 & 0.711$\pm$0.08 & 0.704$\pm$0.07 & \textbf{0.725}$\pm$0.07 & 0.714$\pm$0.08 \\
Folding 08 & 0.686$\pm$0.06 & 0.705$\pm$0.06 & 0.696$\pm$0.07 & 0.691$\pm$0.07 & 0.724$\pm$0.07 & 0.720$\pm$0.07 & 0.703$\pm$0.03 & \textbf{0.729}$\pm$0.03 \\
Folding 09 & 0.671$\pm$0.07 & 0.690$\pm$0.07 & 0.660$\pm$0.05 & 0.687$\pm$0.03 & \textbf{0.712}$\pm$0.06 & 0.712$\pm$0.06 & 0.703$\pm$0.06 & 0.704$\pm$0.06 \\
Folding 10 & 0.693$\pm$0.09 & \textbf{0.735}$\pm$0.09 & 0.706$\pm$0.10 & 0.718$\pm$0.06 & 0.701$\pm$0.08 & 0.717$\pm$0.09 & 0.710$\pm$0.08 & 0.707$\pm$0.07 \\
Folding 11 & 0.669$\pm$0.04 & 0.697$\pm$0.07 & 0.711$\pm$0.05 & 0.704$\pm$0.05 & 0.707$\pm$0.05 & 0.706$\pm$0.04 & \textbf{0.733}$\pm$0.06 & 0.733$\pm$0.06 \\
Folding 12 & 0.670$\pm$0.05 & \textbf{0.716}$\pm$0.06 & 0.703$\pm$0.05 & 0.701$\pm$0.03 & 0.714$\pm$0.06 & 0.715$\pm$0.07 & 0.711$\pm$0.07 & 0.711$\pm$0.08 \\
Folding 13 & 0.678$\pm$0.07 & 0.688$\pm$0.04 & 0.703$\pm$0.04 & 0.697$\pm$0.04 & 0.699$\pm$0.06 & 0.701$\pm$0.06 & 0.703$\pm$0.04 & \textbf{0.715}$\pm$0.05 \\
Folding 14 & 0.653$\pm$0.02 & 0.700$\pm$0.02 & 0.703$\pm$0.03 & 0.692$\pm$0.03 & 0.695$\pm$0.04 & 0.699$\pm$0.04 & \textbf{0.721}$\pm$0.06 & 0.721$\pm$0.06 \\
Folding 15 & 0.663$\pm$0.07 & 0.682$\pm$0.07 & 0.697$\pm$0.08 & 0.687$\pm$0.07 & 0.724$\pm$0.07 & 0.712$\pm$0.06 & \textbf{0.725}$\pm$0.07 & 0.716$\pm$0.07 \\
Folding 16 & 0.687$\pm$0.07 & 0.720$\pm$0.04 & 0.697$\pm$0.06 & \textbf{0.729}$\pm$0.06 & 0.721$\pm$0.05 & 0.723$\pm$0.05 & 0.719$\pm$0.04 & 0.715$\pm$0.04 \\
Folding 17 & 0.684$\pm$0.05 & 0.703$\pm$0.05 & 0.677$\pm$0.07 & 0.716$\pm$0.05 & 0.712$\pm$0.04 & 0.712$\pm$0.04 & \textbf{0.730}$\pm$0.03 & 0.720$\pm$0.04 \\
Folding 18 & 0.660$\pm$0.06 & 0.714$\pm$0.06 & 0.675$\pm$0.08 & 0.685$\pm$0.08 & 0.713$\pm$0.06 & 0.706$\pm$0.06 & 0.735$\pm$0.05 & \textbf{0.735}$\pm$0.05 \\
Folding 19 & 0.694$\pm$0.02 & 0.692$\pm$0.04 & \textbf{0.727}$\pm$0.05 & 0.713$\pm$0.04 & 0.719$\pm$0.02 & 0.703$\pm$0.03 & 0.725$\pm$0.05 & 0.715$\pm$0.05 \\
\hline
Averaged  & 0.675$\pm$0.06 & 0.705$\pm$0.06 & 0.698$\pm$0.06 & 0.705$\pm$0.06 & 0.707$\pm$0.06 & 0.708$\pm$0.06 & 0.714$\pm$0.06 & \textbf{0.715}$\pm$0.06 \\
\hline
\end{tabular}
\label{table:auc_score}
\end{table*}

\textbf{Results from three real-world images.\quad} In order to further demonstrate the merit of \textsc{GraphStoIHT}, we compare it with another three popular methods: \textsc{NIHT}~\cite{blumensath2010normalized}, \textsc{CoSaMP}~\cite{needell2009cosamp}, \textsc{GraphCoSaMP}~\cite{hegde2015nearly}. We test all of them on three $50\times 50$ resized real images: Background, Angio, and Text provided in~\citet{hegde2015nearly}, where the first two images have one connected component while Text has four. \textsc{NIHT} and \textsc{CoSaMP} have sparsity $s$ as input parameter. \textsc{GraphStoIHT} shares the same head and tail projection as \textsc{GraphCoSaMP}. The learning rates $\eta$ of \textsc{IHT}, \textsc{StoIHT}, \textsc{GraphIHT} and \textsc{GraphStoIHT} are tuned from the set $\{0.2, 0.4, 0.6, 0.8\}$, and the block sizes $b$ of \textsc{StoIHT} and \textsc{GraphStoIHT} are tuned from the set $\{m/5, m/10 \}$. We tune $b$ and $\eta$ on an additional validation dataset with 100 observations. To clarify, the design matrix ${\bm A}$ used here is Gaussian matrix, different from the Fourier Matrix used in~\citet{hegde2015nearly}. Our method outperforms the others consistently.

%%%%%%%%%%%%%%%%%%%%%%%%%%%%%%%%%%%%%%%%%%%%%%%%%%%%%%%%%%%%%%%%%%%%%%%%%%%%%%%%
\subsection{Graph sparse logistic regression}
To further test our method on real-world dataset, we apply it to the breast cancer dataset in~\citet{van2002gene}, which contains 295 training samples including 78 positives (metastatic) and 217 negatives (non-metastatic). Each ${\bm a}_i$ in~(\ref{objective:logistic_regression}) is the training sample of patient $i$ with dimension $p=8,141$ (genes). Label $y_i=1$ if patient $i$ has metastatic growth; otherwise $-1$. We use the Protein-Protein Interaction (PPI) network in~\citet{jacob2009group}\footnote{This network was originally proposed by~\citet{chuang2007network}.}. There are 637 pathways in this PPI network. We restrict our analysis on 3,243 genes (nodes) which form a connected graph with 19,938 edges. Due to lack of edge weights, all weights have been set to 1.0. We fold the dataset uniformly into 5 subfolds as done by~\citet{jenatton2011structured} to make comparison later. All related parameters are tuned by 5-fold-cross-validation on each training dataset.  More experiment details are available in the Supplementary Material. We repeat the folding strategy 20 times, and Table~\ref{table:auc_score} reports AUC scores with standard deviation.

\textbf{Metastasis classification.\quad}  We compare our algorithm with the three aforementioned non-convex based methods and four $\ell^1/\ell^2$ mixed norm-based algorithms, $\ell^1$-\textsc{Pathway}, $\ell^1/\ell^2$-\textsc{Pathway}, $\ell^1$-\textsc{Edge}, and $\ell^1/\ell^2$-\textsc{Edge}\footnote{The code of these four methods is sourced from \url{http://cbio.ensmp.fr/~ljacob/documents/overlasso-package.tgz}.}. $\ell^1$-\textsc{Pathway} and $\ell^1/\ell^2$-\textsc{Pathway} use pathways as groups while $\ell^1$-\textsc{Edge} and $\ell^1/\ell^2$-\textsc{Edge} use edges as groups. On average, \textsc{GraphStoIHT} achieves 0.715 AUC score, the highest among the eight methods. 

\textbf{Gene identification.\quad} We also investigate different numbers of breast cancer-related genes identified by each method. Out of 25 breast cancer-related genes, \textsc{GraphStoIHT} finds 24\% of them, more than \textsc{GraphIHT} (20\%), \textsc{StoIHT} (16\%) and \textsc{IHT} (8\%).  The mixed norm-based methods $\ell^1$-\textsc{Pathway}, $\ell^1/\ell^2$-\textsc{Pathway}, $\ell^1$-\textsc{Edge}, and $\ell^1/\ell^2$-\textsc{Edge} find 16\%, 8\%, 12\%, and 12\% respectively. Our method outperforms other methods in finding cancer-related genes. 

%%%%%%%%%%%%%%%%%%%%%%%%%%%%%%%%%%%%%%%%%%%%%%%%%%%%%%%%%%%%%%%%%%%%%%%%%%%%%%%%
\section{Conclusion and Future Work}
\label{section:discussion}
In this paper, we have proposed \textsc{GraphStoIHT}, a stochastic gradient-based method for solving graph-structured sparsity constraint problems. We proved that it enjoys a linear convergence property. Experimental evaluation shows our method consistently outperforms other algorithms for both graph sparse linear regression and graph logistic regression on real-world datasets. In future, it would be interesting to see if one can apply the variance reduction techniques such as SAGA~\cite{defazio2014saga} and SVRG~\cite{johnson2013accelerating} to \textsc{GraphStoIHT}.

%%%%%%%%%%%%%%%%%%%%%%%%%%%%%%%%%%%%%%%%%%%%%%%%%%%%%%%%%%%%%%%%%%%%%%%%%%%%%%%%
\section*{Acknowledgement}
The authors would like to thank the five anonymous reviewers for their helpful comments on the paper. The work of Yiming Ying is supported by the National Science Foundation (NSF) under Grant No \#1816227. Baojian Zhou would like to thank Lin Xiao for her valuable proofreading on a draft of the paper.

\bibliography{references}
\bibliographystyle{icml2019}
%%%%%%%%%%%%%%%%%%%%%%%%%%%%%%%%   END    %%%%%%%%%%%%%%%%%%%%%%%%%%%%%%%%%%%%%%
%%%%%%%%%%%%%%%%%%%%%%%%%%%%%%%%%%%%%%%%%%%%%%%%%%%%%%%%%%%%%%%%%%%%%%%%%%%%%%%%

\onecolumn

\appendix
\section{Proofs}
\label{section:proofs}
%%%%%%%%%%%%%%%%%%%%%%%%%%%%%%%%%%%%%%%%%%%%%%%%%%%%%%%%%%%%%%%%%%%%%%%%%%%%%%%%
Before proving Lemma~\ref{lemma:lemma1}, we first prove Lemma~\ref{lemma:lemma2}, \ref{lemma:lemma3}, \ref{lemma:lemma4}, \ref{lemma:lemma5}, \ref{lemma:lemma6}. We prove the main Theorem~\ref{theorem:theorem_01} and three Corollaries after proving Lemma~\ref{lemma:lemma1}. In the following proofs, given any $\mathbb{M}$, $\overline{\mathbb{M}}$ 
is the closure of $\mathbb{M}$ under taking subsets, which is defined as $\overline{\mathbb{M}}=\{\Omega: \Omega \subseteq S, \text{ for some } S \in \mathbb{M}\}$. 

\begin{lemma}[co-coercivity]
If $f$ is convex differentiable and satisfies the $(\alpha,
\beta,\mathcal{M}(\mathbb{M}))$-RSS/RSC property, then we have 
\begin{align}
\| \nabla_\Omega f({\bm x}) - \nabla_\Omega f({\bm y}) \|^2 &\leq \beta
\langle {\bm x} - {\bm y}, \nabla f({\bm x}) - \nabla f({\bm y}) \rangle, 
\label{lemma:lemma2:inequ01}
\end{align}
where $\| {\bm x} \|_0 \cup \| {\bm y} \|_0 \subseteq \Omega$ and $ \Omega \in \overline{\mathbb{M}}$.
\label{lemma:lemma2}
\end{lemma}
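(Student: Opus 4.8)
The plan is to adapt the classical co-coercivity argument for $\beta$-smooth convex functions to the restricted subspace setting, the only delicate point being that the RSS bound in Definition~\ref{definition_1} is available solely for pairs of vectors lying in $\mathcal{M}(\mathbb{M})$. First I would introduce, for the fixed point $\bm x$, the auxiliary function $\phi(\bm z) := f(\bm z) - \langle \nabla f(\bm x), \bm z\rangle$. Since $f$ is convex and differentiable, $\phi$ is convex with $\nabla\phi(\bm x) = \bm 0$, so $\bm x$ is a global minimizer of $\phi$ over all of $\mathbb{R}^p$. Crucially, the linear term cancels in the Bregman divergence, $B_\phi = B_f$, so $\phi$ inherits the $(\alpha,\beta,\mathcal{M}(\mathbb{M}))$-RSS/RSC property of $f$.

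Next comes the key step that forces the argument to respect the subspace. Starting from $\bm y$ (whose support, like that of $\bm x$, is contained in $\Omega$), I would take a descent step using the \emph{restricted} gradient, setting $\bm w := \bm y - \frac{1}{\beta}\nabla_\Omega\phi(\bm y)$. Because $\text{supp}(\bm y)\subseteq\Omega$ and $\nabla_\Omega\phi(\bm y)$ is supported on $\Omega\in\overline{\mathbb{M}}$, the probe point $\bm w$ is supported on $\Omega$, hence $\bm w\in\mathcal{M}(\mathbb{M})$ and the RSS inequality applies to the pair $(\bm w,\bm y)$. Expanding $\phi(\bm w)\le\phi(\bm y)+\langle\nabla\phi(\bm y),\bm w-\bm y\rangle+\frac{\beta}{2}\|\bm w-\bm y\|^2$, and using that $\langle\nabla\phi(\bm y),\nabla_\Omega\phi(\bm y)\rangle=\|\nabla_\Omega\phi(\bm y)\|^2$ (the restriction kills the off-$\Omega$ components), the two $\|\nabla_\Omega\phi(\bm y)\|^2$ contributions combine to give $\phi(\bm w)\le\phi(\bm y)-\frac{1}{2\beta}\|\nabla_\Omega\phi(\bm y)\|^2$. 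Invoking $\phi(\bm x)\le\phi(\bm w)$ and rewriting $\nabla_\Omega\phi(\bm y)=\nabla_\Omega f(\bm y)-\nabla_\Omega f(\bm x)$ together with $\phi(\bm x)-\phi(\bm y)=-B_f(\bm y,\bm x)$ yields $\frac{1}{2\beta}\|\nabla_\Omega f(\bm x)-\nabla_\Omega f(\bm y)\|^2\le B_f(\bm y,\bm x)$.

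Finally I would symmetrize: repeating the construction with the roles of $\bm x$ and $\bm y$ exchanged (using $\psi(\bm z)=f(\bm z)-\langle\nabla f(\bm y),\bm z\rangle$ and the probe point $\bm x-\frac{1}{\beta}\nabla_\Omega\psi(\bm x)$) gives the twin bound $\frac{1}{2\beta}\|\nabla_\Omega f(\bm x)-\nabla_\Omega f(\bm y)\|^2\le B_f(\bm x,\bm y)$. Adding the two inequalities and using the identity $B_f(\bm y,\bm x)+B_f(\bm x,\bm y)=\langle\nabla f(\bm x)-\nabla f(\bm y),\bm x-\bm y\rangle$ collapses the right-hand side to the desired inner product, producing $\frac{1}{\beta}\|\nabla_\Omega f(\bm x)-\nabla_\Omega f(\bm y)\|^2\le\langle\bm x-\bm y,\nabla f(\bm x)-\nabla f(\bm y)\rangle$, which is exactly~(\ref{lemma:lemma2:inequ01}) after multiplying by $\beta$. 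The main obstacle is the one already flagged: in the standard (unrestricted) proof the descent step $\bm y-\frac{1}{\beta}\nabla\phi(\bm y)$ generically leaves $\mathcal{M}(\mathbb{M})$, so RSS cannot be invoked; replacing $\nabla\phi$ by $\nabla_\Omega\phi$ and exploiting $\Omega\in\overline{\mathbb{M}}$ is precisely what keeps the probe point in the subspace, and checking that this substitution leaves the cancellations intact is the crux of the estimate.
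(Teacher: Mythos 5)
Your proposal is correct and follows essentially the same route as the paper's proof: the same surrogate function $f(\bm z) - \langle \nabla f(\bm x), \bm z\rangle$, the same restricted-gradient probe point $\bm y - \frac{1}{\beta}\nabla_\Omega\varphi(\bm y)$ (with the same observation that its support stays in $\Omega\in\overline{\mathbb{M}}$ so RSS applies), and the same final symmetrization by adding the two one-sided bounds. The only cosmetic difference is that you phrase the last step via the Bregman identity $B_f(\bm x,\bm y)+B_f(\bm y,\bm x)=\langle\nabla f(\bm x)-\nabla f(\bm y),\bm x-\bm y\rangle$, which the paper carries out by interchanging $\bm x$ and $\bm y$ in its inequality (*) and adding.
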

%%%%%%%%%%%%%%%%%%%%%%%%%%%%%%%%%%%%%%%%%%%%%%%%%%%%%%%%%%%%%%%%%%%%%%%%%%%%%%%%
\begin{proof}
By the definition of $(\alpha,\beta,\mathcal{M}(\mathbb{M}))$-RSS/RSC property,
 we have
\begin{equation*}
B_{f}({\bm x}, {\bm y}) := f({\bm x}) - f({\bm y}) - 
\langle \nabla f({\bm y}), {\bm x} -{\bm y} \rangle 
\leq \frac{\beta}{2} \| {\bm x} - {\bm y}\|^2. 
\end{equation*}
Let ${\bm x}_0 \in \mathcal{M}(\mathbb{M})$ be any fixed vector. We define a 
surrogate function $\varphi({\bm y}) := f({\bm y}) - 
\langle \nabla f({\bm x}_0), {\bm y} \rangle$. 
$\varphi$ is also $\beta$-smoothness in $\mathcal{M}(\mathbb{M})$. 
To see this, notice that
\begin{align*}
B_\varphi({\bm x}, {\bm y}) &= \varphi({\bm x}) - \varphi({\bm y}) - 
\langle \nabla \varphi({\bm y}),{\bm x} - {\bm y} \rangle \\
&= f({\bm x}) - f({\bm y}) - 
\langle \nabla f({\bm x}_0),{\bm x} - {\bm y} \rangle 
 - \langle \nabla f({\bm y}) - \nabla f({\bm x}_0), 
 {\bm x} - {\bm y} \rangle \nonumber\\
&= f({\bm x}) - f({\bm y}) - \langle \nabla f({\bm y}), 
{\bm x} - {\bm y} \rangle \\
&= B_{f}({\bm x}, {\bm y})  \leq \frac{\beta}{2} \| {\bm x} -{\bm y}\|^2.
\end{align*}
%%%%%%%%%%%%%%%%%%%%%%%%%%%%%%%%%%%%%%%%%%%%%%%%%%%%%%%%%%%%%%%%%%%%%%%%%%%%%%%%
Since $\nabla \varphi({\bm x}_0) = {\bm 0}$, $\varphi$ gets its minimum at 
${\bm x}_0$ . By using the above inequality 
on $\varphi$ with ${\bm x}$ replaced by ${\bm y} - 1/\beta
\nabla_{\Omega} \varphi({\bm y})$ (noticing that 
$\|{\bm y} - 1/\beta
\nabla_{\Omega} \varphi({\bm y})\|_0 \subseteq \Omega$), we get
\begin{align*}
&\varphi({\bm x}) - \varphi({\bm y}) - 
\langle \nabla \varphi({\bm y}),{\bm x} - {\bm y} \rangle
\leq \frac{\beta}{2} \| {\bm x} -{\bm y} \|^2 \\
&\Leftrightarrow \varphi({{\bm y} - \frac{1}{\beta} \nabla_{\Omega} 
\varphi ({\bm y})}) - \varphi({\bm y}) - 
\langle \nabla \varphi({\bm y}), { - \frac{1}{\beta} 
\nabla_{\Omega} \varphi ({\bm y})} \rangle \leq \frac{\beta}{2} \| - 
\frac{1}{\beta} \nabla_{\Omega} \varphi ({\bm y}) \|^2 \\
&\Leftrightarrow \varphi({\bm x}_0)\leq \varphi ({\bm y} - 
\frac{1}{\beta} \nabla_{\Omega} \varphi({\bm y})) 
\leq \varphi({\bm y}) - \frac{1}{2 \beta} 
\| \nabla_{\Omega} \varphi({\bm y}) \|^2.
\end{align*}
By using the definition of $\varphi$ on the above inequality, we finally have
\begin{align}
&f({\bm x}_0) - \langle \nabla f ({\bm x}_0),{\bm x}_0 
\rangle \leq f({\bm y}) - \langle \nabla f({\bm x}_0 ), {\bm y} 
\rangle - \frac{1}{2\beta} \| \nabla_{\Omega} f
({\bm y}) - \nabla_{\Omega} f({\bm x}_0) \|^2 \nonumber \\
&\Leftrightarrow  f ({\bm x}_0) + \langle \nabla f ({\bm x}_0),{\bm y} - 
{\bm x}_0 \rangle  + \frac{1}{2\beta} \| \nabla_{\Omega} f
({\bm y}) - \nabla_{\Omega} f ({\bm x}_0) \|^2 \leq 
f ({\bm y}) \nonumber \\
&\Leftrightarrow \frac{1}{2\beta} \| \nabla_{\Omega} f ({\bm y}) - 
\nabla_{\Omega} f({\bm x}_0) \|^2 \leq f ({\bm y}) - 
f ({\bm x}_0) + \langle \nabla f ({\bm x}_0),{\bm x}_0 -
 {\bm y} \rangle \tag{*}
\end{align}
Let ${\bm x}_0 = {\bm x}$. By adding two copies of
~(*) with ${\bm x}$ and ${\bm y}$ 
interchanged, we prove the lemma.
\end{proof}
\begin{remark}
The proof of Lemma~\ref{lemma:lemma2} mainly follows~\citet{nesterov2013introductory} where the function considered is convex differentiable and has a Lipschitz continuous gradient in $\mathbb{R}^p$. Instead of proving the property in $\mathbb{R}^p$, we prove it in the subspace model $\mathcal{M}(\mathbb{M})$. Similar uses of this property can be found in~\citet{nguyen2017linear,shen2017tight}. One should notice that this co-coercivity property does not depend on the strong-convexity parameter $\alpha$, i.e., $\alpha \geq 0$.
\end{remark}

\begin{lemma}
For any convex differentiable function $f$ satisfies $(\alpha,\beta,\mathcal{M}(\mathbb{M}))$-RSS/RSC property with strongly convex parameter $\alpha$ and strongly smoothness parameter $\beta$, we have the following inequalities
\begin{align}
\alpha \| {\bm x} - {\bm y} \| 
&\leq \| \nabla_{\Omega} f({\bm x}) - \nabla_{\Omega} f({\bm y}) \|
\leq \beta \|{\bm x} - {\bm y}\| 
\label{lemma:lemma3:inequ01}
\end{align}
where $\|{\bm x} \|_0 \cup \|{\bm y}\|_0 \subseteq \Omega$ and 
$\Omega \in \overline{\mathbb{M}}$.
\label{lemma:lemma3}
\end{lemma}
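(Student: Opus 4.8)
The plan is to prove the two bounds separately, using the co-coercivity estimate of Lemma~\ref{lemma:lemma2} for the upper inequality and the defining RSC half of Definition~\ref{definition_1} for the lower one. The single structural fact that drives everything is that ${\bm x}$ and ${\bm y}$ share their support inside $\Omega$: since $\text{supp}({\bm x}) \cup \text{supp}({\bm y}) \subseteq \Omega$, the difference ${\bm x} - {\bm y}$ is supported on $\Omega$, so for any vector ${\bm v}$ we have $\langle {\bm x} - {\bm y}, {\bm v}\rangle = \langle {\bm x} - {\bm y}, {\bm v}_\Omega\rangle$. This lets me silently replace full gradients $\nabla f$ by their restrictions $\nabla_\Omega f$ inside every inner product.

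For the upper bound I would start from~\eqref{lemma:lemma2:inequ01}, namely $\| \nabla_\Omega f({\bm x}) - \nabla_\Omega f({\bm y})\|^2 \leq \beta \langle {\bm x} - {\bm y}, \nabla f({\bm x}) - \nabla f({\bm y})\rangle$, rewrite the right-hand inner product as $\langle {\bm x} - {\bm y}, \nabla_\Omega f({\bm x}) - \nabla_\Omega f({\bm y})\rangle$ using the support remark, and apply Cauchy--Schwarz. Cancelling one factor of $\|\nabla_\Omega f({\bm x}) - \nabla_\Omega f({\bm y})\|$ (the case where it vanishes being trivial) yields
\[
\|\nabla_\Omega f({\bm x}) - \nabla_\Omega f({\bm y})\| \leq \beta \|{\bm x} - {\bm y}\|.
\]

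For the lower bound I would add the RSC half of~\eqref{inequ:rssc} to its copy with ${\bm x}$ and ${\bm y}$ interchanged; the two gradient-linearization terms telescope into the standard monotonicity estimate $\alpha \|{\bm x} - {\bm y}\|^2 \leq \langle \nabla f({\bm x}) - \nabla f({\bm y}), {\bm x} - {\bm y}\rangle$. Restricting this inner product to $\Omega$ and invoking Cauchy--Schwarz bounds the right side by $\|\nabla_\Omega f({\bm x}) - \nabla_\Omega f({\bm y})\| \cdot \|{\bm x} - {\bm y}\|$, and dividing through by $\|{\bm x} - {\bm y}\|$ gives
\[
\alpha \|{\bm x} - {\bm y}\| \leq \|\nabla_\Omega f({\bm x}) - \nabla_\Omega f({\bm y})\|.
\]

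Both arguments are elementary once Lemma~\ref{lemma:lemma2} is in hand; the only place requiring care — and the sole role of the hypothesis $\Omega \in \overline{\mathbb{M}}$ with $\text{supp}({\bm x}) \cup \text{supp}({\bm y}) \subseteq \Omega$ — is justifying the passage between $\nabla f$ and $\nabla_\Omega f$ inside the inner products and checking that the points at which RSC/RSS and co-coercivity are invoked genuinely lie in $\mathcal{M}(\mathbb{M})$ (or its closure) so that Definition~\ref{definition_1} and Lemma~\ref{lemma:lemma2} apply. I do not anticipate any deeper obstacle.
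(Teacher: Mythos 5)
Your proposal is correct and follows essentially the same route as the paper's own proof: the lower bound via summing the two RSC inequalities to get strong monotonicity, restricting the inner product to $\Omega$ (using $\mathrm{supp}({\bm x}-{\bm y})\subseteq\Omega$) and applying Cauchy--Schwarz, and the upper bound via the co-coercivity of Lemma~\ref{lemma:lemma2} followed by the same restriction, Cauchy--Schwarz, and cancellation. Your explicit handling of the degenerate cancellation cases matches the paper's parenthetical remark that the claim is trivial when ${\bm x}={\bm y}$.
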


%%%%%%%%%%%%%%%%%%%%%%%%%%%%%%%%%%%%%%%%%%%%%%%%%%%%%%%%%%%%%%%%%%%%%%%%%%%%%%%%
\begin{proof}
Since $f({\bm x})$ satisfies $(\alpha,\beta,\mathcal{M}(\mathbb{M}))$-RSS/RSC 
property, by using left inequality of~(\ref{inequ:rssc}) and 
exchanging ${\bm x }, {\bm y}$ and summing them together, we can get
\begin{align*}
\alpha \|{\bm x} -{\bm y} \|^2 &\leq \langle \nabla f({\bm x}) - 
\nabla f({\bm y}), {\bm x} - {\bm y} \rangle \\
&= \langle \nabla_{\Omega} f({\bm x}) - 
\nabla_{\Omega} f({\bm y}), {\bm x} - {\bm y} \rangle \tag{*}\\
&\leq \| \nabla_{\Omega}f({\bm x}) - \nabla_{\Omega} f({\bm y}) 
\| \cdot \| {\bm x} - {\bm y} \|,
\end{align*}
where the first equality follows by $\|{\bm x}\|_0 \cup \|{\bm y}\|_0
 \subseteq \Omega$ and the second inequality follows by the 
Cauchy-Schwarz inequality. Therefore, the LHS of inequality
~(\ref{lemma:lemma2:inequ01}) is obtained by eliminating 
$\|{\bm x} - {\bm y}\|$ on both sides(Notice that when ${\bm x} = {\bm y}$, 
two inequalities stated in this lemma are trivially true). Furthermore, 
$\| \nabla_{\Omega} f({\bm x}) - \nabla_{\Omega} f({\bm y}) \|^2$ can be 
upper bounded as
\begin{align*}
\| \nabla_{\Omega} f({\bm x}) - \nabla_{\Omega} f ({\bm y}) 
\|^2 &= \| \nabla_{\Omega} f({\bm x}) - \nabla_{\Omega} f({\bm y}) \|^2 \\
&\leq \beta \langle \nabla f({\bm x}) - \nabla f({\bm y}), 
{\bm x} -{\bm y} \rangle \\
&= \beta  \langle \nabla_\Omega f({\bm x}) - \nabla_\Omega f({\bm y}), 
{\bm x} - {\bm y} \rangle \\
&\leq \beta \| \nabla_{\Omega} f({\bm x}) - \nabla_{\Omega} f({\bm y}) 
\|\cdot \| {\bm x} - {\bm y} \|,
\end{align*}
where the first inequality follows by Lemma~\ref{lemma:lemma2}.
Hence, the RHS of~(\ref{lemma:lemma2:inequ01}) is obtained by eliminating 
$\| \nabla_{\Omega} f({\bm x}) - \nabla_{\Omega } f({\bm y}) \|$ 
on both sides. We prove the lemma.
\end{proof}

%%%%%%%%%%%%%%%%%%%%%%%%%%%%%%%%%%%%%%%%%%%%%%%%%%%%%%%%%%%%%%%%%%%%%%%%%%%%%%%%
\begin{lemma}[~\citet{nguyen2017linear}]
Let $\xi_t$ be a discrete random variable defined on $[n]$ and its probability
mass function is defined as $Pr(\xi_t = i):=1/n$, which means the probability
of $\xi_t$ selects $i$th block at time $t$. For any fixed sparse vectors 
${\bm x}$, ${\bm y}$ and $0<\tau<\frac{2}{\beta}$,
we have
\begin{align}
\mathbb{E}_{\xi_t} \Big\| {\bm x} - {\bm y} - \tau\Big(\nabla_{\Omega} 
f_{\xi_t}({\bm x}) - \nabla_{\Omega}f_{\xi_t}({\bm y})\Big)\Big\|
&\leq \sqrt{\alpha\beta\tau^2 - 2\alpha\tau + 1} \| {\bm x} - {\bm y} \|, 
\label{lemma:lemma4_equ01}
\end{align}
where $\Omega$ be such set that 
$\|{\bm x}\|_0 \cup \|{\bm y}\|_0 \subseteq \Omega$ and 
$\Omega \in \overline{\mathbb{M}}$. 
\label{lemma:lemma4}
\end{lemma}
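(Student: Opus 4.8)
The plan is to pass to the second moment, where the quadratic structure can be exploited, and only at the very end return to the first moment via Jensen's inequality. Writing $\bm\Delta := {\bm x} - {\bm y}$ and $\bm g_{\xi_t} := \nabla_\Omega f_{\xi_t}({\bm x}) - \nabla_\Omega f_{\xi_t}({\bm y})$, I would first observe that by concavity of $\sqrt{\cdot}$ we have $\mathbb{E}_{\xi_t}\|\bm\Delta - \tau \bm g_{\xi_t}\| \leq \sqrt{\mathbb{E}_{\xi_t}\|\bm\Delta - \tau \bm g_{\xi_t}\|^2}$, so it suffices to establish $\mathbb{E}_{\xi_t}\|\bm\Delta - \tau \bm g_{\xi_t}\|^2 \leq (\alpha\beta\tau^2 - 2\alpha\tau + 1)\|\bm\Delta\|^2$. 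Expanding the square produces three pieces, $\|\bm\Delta\|^2 - 2\tau\langle \bm\Delta, \mathbb{E}_{\xi_t}\bm g_{\xi_t}\rangle + \tau^2\, \mathbb{E}_{\xi_t}\|\bm g_{\xi_t}\|^2$, which I would treat one at a time.

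Since $Pr(\xi_t = i) = 1/n$ and $F = \frac1n\sum_i f_i$, linearity of expectation gives $\mathbb{E}_{\xi_t}[\nabla_\Omega f_{\xi_t}({\bm x})] = \nabla_\Omega F({\bm x})$, so the averaged gradient difference is $\nabla_\Omega F({\bm x}) - \nabla_\Omega F({\bm y})$. Because $\bm\Delta$ is supported inside $\Omega$ (as $\|{\bm x}\|_0 \cup \|{\bm y}\|_0 \subseteq \Omega$), inside the inner product I may replace the restricted gradient by the full gradient and then invoke the $\alpha$-RSC of $F$ from Assumption~\ref{assumption:assumption_01} --- adding Definition~\ref{definition_1} to its copy with ${\bm x},{\bm y}$ swapped, exactly as in the proof of Lemma~\ref{lemma:lemma3} --- to obtain $\langle \bm\Delta, \nabla F({\bm x}) - \nabla F({\bm y})\rangle \geq \alpha\|\bm\Delta\|^2$. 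For the quadratic term I would apply the co-coercivity Lemma~\ref{lemma:lemma2} to each $f_{\xi_t}$, which is $\beta$-RSS, yielding $\|\bm g_{\xi_t}\|^2 \leq \beta\, \langle \bm\Delta, \nabla f_{\xi_t}({\bm x}) - \nabla f_{\xi_t}({\bm y})\rangle$; taking expectations and using linearity again bounds $\mathbb{E}_{\xi_t}\|\bm g_{\xi_t}\|^2$ by $\beta\, \langle \bm\Delta, \nabla F({\bm x}) - \nabla F({\bm y})\rangle$.

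Collecting the three contributions, the second moment is at most $\|\bm\Delta\|^2 - (2\tau - \beta\tau^2)\,\langle \bm\Delta, \nabla F({\bm x}) - \nabla F({\bm y})\rangle$. The decisive point is the sign of the coefficient $2\tau - \beta\tau^2 = \tau(2 - \beta\tau)$: the hypothesis $0 < \tau < 2/\beta$ makes it strictly positive, which is precisely what permits applying the RSC lower bound $\langle \bm\Delta, \nabla F({\bm x}) - \nabla F({\bm y})\rangle \geq \alpha\|\bm\Delta\|^2$ in the correct direction, giving $(1 - 2\alpha\tau + \alpha\beta\tau^2)\|\bm\Delta\|^2$; taking square roots then finishes the proof. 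I expect the main obstacle to be bookkeeping around the support rather than any deep estimate: one must verify that $\bm\Delta \in \Omega$ so that $\langle \bm\Delta, \nabla_\Omega(\cdot)\rangle = \langle \bm\Delta, \nabla(\cdot)\rangle$, and that Lemma~\ref{lemma:lemma2} is applicable because $\|{\bm x}\|_0 \cup \|{\bm y}\|_0 \subseteq \Omega \in \overline{\mathbb{M}}$. The other genuinely delicate step is ensuring the coefficient's positivity, since it is the sole place where the constraint on $\tau$ enters and where RSC must be used as a lower rather than an upper bound.
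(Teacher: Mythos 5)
Your proposal is correct and follows essentially the same route as the paper's own proof: pass to the second moment via Jensen, expand the square, use unbiasedness of the stochastic gradient and the support condition to reduce the cross term to $\langle \bm\Delta, \nabla F({\bm x}) - \nabla F({\bm y})\rangle$, bound the quadratic term with the co-coercivity Lemma~\ref{lemma:lemma2}, and only then invoke the RSC of $F$ on the combined coefficient $\tau^2\beta - 2\tau < 0$. Your explicit remark that the sign of $\tau(2-\beta\tau)$ is the sole place where the constraint on $\tau$ enters is a point the paper uses implicitly but does not spell out.
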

%%%%%%%%%%%%%%%%%%%%%%%%%%%%%%%%%%%%%%%%%%%%%%%%%%%%%%%%%%%%%%%%%%%%%%%%%%%%%%%%
\begin{proof} We first try to obtain an upper bound  
$\mathbb{E}_{\xi_t} \| {\bm x} - {\bm y} - \tau
(\nabla_\Omega f_{\xi_t}({\bm x}) - \nabla_\Omega f_{\xi_t}({\bm y}))\|^2$ as 
the following
\begin{align*}
&\mathbb{E}_{\xi_t} \Big\| {\bm x} - {\bm y} - \tau
\Big(\nabla_\Omega f_{\xi_t}({\bm x}) - \nabla_\Omega f_{\xi_t}({\bm y})\Big) 
\Big\|^2 \\
&= \| {\bm x} - {\bm y} \|^2 - 2\tau \mathbb{E}_{\xi_t} \Big\langle {\bm x} - 
{\bm y}, \nabla_\Omega f_{\xi_t}({\bm x}) - \nabla_\Omega 
 f_{\xi_t}({\bm y}) \Big\rangle + \tau^2 \mathbb{E}_{\xi_t}
 \Big\| \nabla_{\Omega} f_{\xi_t}({\bm x}) - 
 \nabla_{\Omega} f_{\xi_t}({\bm y}) \Big\|^2 \\
&= \| {\bm x} - {\bm y} \|^2 - 2\tau\Big\langle {\bm x} - {\bm y}, 
\mathbb{E}_{\xi_t}\Big[\nabla f_{\xi_t}({\bm x}) -  
\nabla f_{\xi_t}({\bm y})\Big]\Big\rangle + \tau^2 \mathbb{E}_{\xi_t} 
\| \nabla_{\Omega} f_{\xi_t}({\bm x}) - 
\nabla_{\Omega} f_{\xi_t}({\bm y}) \|^2 \\
&\leq \| {\bm x} - {\bm y} \|^2 - 2\tau\Big\langle {\bm x} - {\bm y}, 
\mathbb{E}_{\xi_t}\Big[\nabla f_{\xi_t}({\bm x}) -  
\nabla f_{\xi_t}({\bm y})\Big]\Big\rangle + \tau^2\beta
\Big\langle {\bm x} - {\bm y}, \mathbb{E}_{\xi_t}\Big[
\nabla f_{\xi_t}({\bm x}) - \nabla f_{\xi_t}({\bm y})\Big]\Big\rangle \\
&= \| {\bm x} - {\bm y} \|^2 + (\tau^2\beta- 2\tau)
\Big\langle {\bm x} - {\bm y}, F(\bm x) - F(\bm y)\Big\rangle \\
&\leq (\alpha\beta\tau^2 - 2\alpha\tau + 1)\| {\bm x} - {\bm y} \|^2,
\end{align*}
where the second equality uses the fact that $\|\bm x \|_0 \cup \|\bm y\|_0 \subseteq \Omega$, the first inequality follows from Lemma~\ref{lemma:lemma2}, the third equality is obtained by using the fact that $\mathbb{E}_{\xi_t}[\nabla f_{\xi_t}({\bm x}) - \nabla f_{\xi_t}({\bm y})]
=\nabla F({\bm x}) - \nabla F({\bm y})$ and the last inequality is due to the restricted strong convexity of $F({\bm x})$ on $\mathcal{M}(\mathbb{M})$. We complete the proof by taking the square root of both sides and using the fact: for any random variable $X$, we have $(\mathbb{E}[X])^2 \leq \mathbb{E}X^2$.
\end{proof}
%%%%%%%%%%%%%%%%%%%%%%%%%%%%%%%%%%%%%%%%%%%%%%%%%%%%%%%%%%%%%%%%%%%%%%%%%%%%%%%%
\begin{lemma}[Matrix Chernoff~(\citet{tropp2012user})]
Consider a finite sequence {${\bm M}_1, {\bm M}_2,\cdots, {\bm M}_r$} of $d \times d$, independent, random, self-adjoint matrices that satisfy
$ {\bm M}_i \succeq 0 \text{ and } \lambda_{max} ({\bm M}_i ) \leq R$ almost surely. Let $\theta_{max} = \lambda_{max} ( \sum_{i=1}^{r} \mathbb{E}[{\bm M}_i])$. Then for $\nu \geq 0$,
\begin{equation}
Pr \Big\{ \lambda_{max} \Big( \sum_{i=1}^{r} {{\bm M}_i} \Big) \geq (1+\nu) \theta_{max} \Big\} \leq d e^{\frac{\theta_{max}}{R
} h(\nu)}, \label{inequ:27}
\end{equation}
where $h(\nu) = \nu - (1+\nu) \log (1+\nu)$.
\label{lemma:lemma5}
\end{lemma}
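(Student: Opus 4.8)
The plan is to follow the matrix Laplace transform method. Write $\bm{Y} = \sum_{i=1}^r \bm{M}_i$. First I would convert the eigenvalue tail bound into a trace-exponential moment bound: for any $t > 0$, Markov's inequality applied to the increasing map $s \mapsto e^{ts}$ gives
\begin{equation*}
Pr\big\{\lambda_{max}(\bm Y) \geq (1+\nu)\theta_{max}\big\} \leq e^{-t(1+\nu)\theta_{max}}\, \mathbb{E}\, e^{t\lambda_{max}(\bm Y)}.
\end{equation*}
Since $\bm Y$ is self-adjoint, $e^{t\lambda_{max}(\bm Y)} = \lambda_{max}(e^{t\bm Y}) \leq \operatorname{tr} e^{t\bm Y}$ because $e^{t\bm Y} \succeq 0$. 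Hence the task reduces to controlling the matrix moment generating function $\mathbb{E}\operatorname{tr} e^{t\bm Y} = \mathbb{E}\operatorname{tr}\exp\big(\sum_i t\bm M_i\big)$.

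The central step is to decouple this quantity across the independent summands. Because matrix exponentials do not factor for non-commuting matrices, I would invoke Lieb's concavity theorem (the concavity of $\bm A \mapsto \operatorname{tr}\exp(\bm H + \log \bm A)$) which, combined with Jensen's inequality and iteration over the independent $\bm M_i$, yields
\begin{equation*}
\mathbb{E}\operatorname{tr}\exp\Big(\sum_{i=1}^r t\bm M_i\Big) \leq \operatorname{tr}\exp\Big(\sum_{i=1}^r \log \mathbb{E}\, e^{t\bm M_i}\Big).
\end{equation*}
Next I would bound each factor using the hypotheses $\bm 0 \preceq \bm M_i$ and $\lambda_{max}(\bm M_i) \leq R$. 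On the scalar interval $[0,R]$, convexity of $x\mapsto e^{tx}$ gives $e^{tx} \leq 1 + \frac{e^{tR}-1}{R}x$; transferring this to matrices and taking expectations produces $\mathbb{E}\, e^{t\bm M_i} \preceq \bm I + \frac{e^{tR}-1}{R}\mathbb{E}\bm M_i$. Applying the operator inequality $\log(\bm I + \bm A) \preceq \bm A$ and summing, the exponent becomes $\frac{e^{tR}-1}{R}\sum_i\mathbb{E}\bm M_i$, whose largest eigenvalue is exactly $\frac{e^{tR}-1}{R}\theta_{max}$.

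Finally, using $\operatorname{tr}\exp(\bm A) \leq d\, e^{\lambda_{max}(\bm A)}$ for a $d \times d$ matrix, I obtain $\mathbb{E}\operatorname{tr} e^{t\bm Y} \leq d\exp\big(\frac{e^{tR}-1}{R}\theta_{max}\big)$, so that
\begin{equation*}
Pr\big\{\lambda_{max}(\bm Y) \geq (1+\nu)\theta_{max}\big\} \leq d \inf_{t>0}\exp\Big(-t(1+\nu)\theta_{max} + \tfrac{e^{tR}-1}{R}\theta_{max}\Big).
\end{equation*}
Differentiating the exponent shows the optimum is attained at $t = \frac{1}{R}\log(1+\nu)$; substituting this value collapses the bracket to $\frac{\theta_{max}}{R}\big(\nu - (1+\nu)\log(1+\nu)\big) = \frac{\theta_{max}}{R}h(\nu)$, which is the claimed inequality.

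The main obstacle is the decoupling step: unlike the scalar Chernoff argument, one cannot split $\mathbb{E}\exp(\sum_i t\bm M_i)$ into a product of per-summand expectations because the $\bm M_i$ need not commute. Everything hinges on replacing this naive factorization by Lieb's concavity theorem (equivalently, on a careful use of the Golden--Thompson inequality in the Ahlswede--Winter approach); the remaining scalar-to-matrix transfer and the optimization over $t$ are routine. Since the statement is quoted verbatim from~\citet{tropp2012user}, in practice one would simply cite that reference, but the sketch above records the self-contained argument.
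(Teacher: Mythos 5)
Your proof sketch is correct, and it is precisely the argument behind the paper's citation: the paper does not prove this lemma at all but imports it verbatim from Tropp (2012), whose proof is exactly the matrix Laplace transform method you describe (Markov's inequality on $\lambda_{max}$, the trace bound, decoupling via Lieb's concavity theorem, the scalar-to-matrix transfer of $e^{tx}\leq 1+\frac{e^{tR}-1}{R}x$ together with $\log(\bm I+\bm A)\preceq \bm A$, and the optimal choice $t=\frac{1}{R}\log(1+\nu)$). Since the paper's ``proof'' is the citation itself, your reconstruction matches it in substance, and your closing remark --- that in practice one would simply cite the reference --- is exactly what the paper does.
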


\begin{lemma} Given the matrix $\mathbf{A}_{B_i}^\mathsf{T} \mathbf{A}_{B_i} = \sum_{j=1}^{m/n} {\bm a}_{i_j}{\bm a}_{i_j}^\mathsf{T}$ and $\|{\bm a}_{i_j}\|=1$. ${\bm a}_{i_1},{\bm a}_{i_2},\ldots$ are independent and random. The matrix norm of $\mathbf{A}_{B_i}^\mathsf{T} \mathbf{A}_{B_i}$ can be bounded as $ \|\mathbf{A}_{B_i}^\mathsf{T} \mathbf{A}_{B_i} \|_2 < (1+\nu) \theta_{max}$ with probability $1-p \exp{(-\theta_{max}\nu/4)}$, where $\theta_{max} = \lambda_{max} (\sum_{j=1}^{n/m} \mathbb{E}[{\bm a}_{i_j}{\bm a}_{i_j}^\mathsf{T}])$, and $\nu \geq 1$.
\label{lemma:lemma6}
\end{lemma}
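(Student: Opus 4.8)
The plan is to recognize Lemma~\ref{lemma:lemma6} as a direct specialization of the Matrix Chernoff inequality (Lemma~\ref{lemma:lemma5}), followed by a short scalar estimate on the function $h(\nu)$. First I would set ${\bm M}_j = {\bm a}_{i_j}{\bm a}_{i_j}^\mathsf{T}$ for $j=1,\ldots,m/n$ and verify the three hypotheses of Lemma~\ref{lemma:lemma5}: each ${\bm M}_j$ is a $p\times p$ self-adjoint matrix; it is positive semidefinite since ${\bm y}^\mathsf{T}{\bm M}_j{\bm y} = \langle {\bm a}_{i_j},{\bm y}\rangle^2 \geq 0$; the ${\bm M}_j$ are independent by assumption; and because each is rank one, its only nonzero eigenvalue is $\|{\bm a}_{i_j}\|^2 = 1$, so $\lambda_{\max}({\bm M}_j)=1$ almost surely. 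This identifies the Matrix Chernoff parameters as $d=p$ and $R=1$, while $\theta_{\max} = \lambda_{\max}(\sum_{j}\mathbb{E}[{\bm a}_{i_j}{\bm a}_{i_j}^\mathsf{T}])$ already matches the statement.

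Next I would observe that $\mathbf{A}_{B_i}^\mathsf{T}\mathbf{A}_{B_i} = \sum_{j=1}^{m/n}{\bm M}_j$ is symmetric and positive semidefinite, so its spectral norm coincides with its largest eigenvalue, i.e.\ $\|\mathbf{A}_{B_i}^\mathsf{T}\mathbf{A}_{B_i}\|_2 = \lambda_{\max}(\sum_j {\bm M}_j)$. Substituting $d=p$ and $R=1$ into~(\ref{inequ:27}) then yields
\begin{equation*}
Pr\Big\{\|\mathbf{A}_{B_i}^\mathsf{T}\mathbf{A}_{B_i}\|_2 \geq (1+\nu)\theta_{\max}\Big\} \leq p\, e^{\theta_{\max} h(\nu)},
\end{equation*}
with $h(\nu) = \nu - (1+\nu)\log(1+\nu)$. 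Passing to the complementary event turns this into the desired lower bound on the probability that $\|\mathbf{A}_{B_i}^\mathsf{T}\mathbf{A}_{B_i}\|_2 < (1+\nu)\theta_{\max}$.

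The remaining step, which is the only place genuine work is required, is to control the exponent so that it matches the claimed failure probability $p\exp(-\theta_{\max}\nu/4)$: I would show that $h(\nu)\leq -\nu/4$ for all $\nu\geq 1$. Define $g(\nu)=h(\nu)+\nu/4 = \tfrac{5}{4}\nu - (1+\nu)\log(1+\nu)$; then $g'(\nu)=\tfrac14 - \log(1+\nu)$, which is negative whenever $\log(1+\nu)>\tfrac14$, in particular for every $\nu\geq 1$ since $\log 2 > \tfrac14$. Hence $g$ is decreasing on $[1,\infty)$, and since $g(1)=\tfrac54 - 2\log 2 < 0$ we obtain $g(\nu)<0$, i.e.\ $h(\nu)\leq -\nu/4$, throughout the range $\nu\geq 1$. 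Feeding this into $p\,e^{\theta_{\max}h(\nu)}\leq p\,e^{-\theta_{\max}\nu/4}$ and using the spectral-norm identification completes the proof. The main obstacle here is essentially cosmetic, as the whole argument is an application of the black-box Lemma~\ref{lemma:lemma5}; the only quantitative ingredient is the elementary monotonicity estimate $h(\nu)\leq -\nu/4$ on $[1,\infty)$, which is precisely what produces the constant $1/4$ in the exponent.
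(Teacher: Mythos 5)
Your proposal is correct and takes essentially the same approach as the paper: both specialize the Matrix Chernoff bound of Lemma~\ref{lemma:lemma5} with $R=1$, $d=p$, $r=m/n$, identify $\|\mathbf{A}_{B_i}^\mathsf{T}\mathbf{A}_{B_i}\|_2$ with $\lambda_{\max}\bigl(\sum_j {\bm M}_j\bigr)$, and then reduce everything to the scalar estimate $h(\nu)\leq -\nu/4$ for $\nu\geq 1$. The only difference is how that last elementary estimate is proved --- you use a monotonicity argument on $g(\nu)=h(\nu)+\nu/4$, whereas the paper chains the inequalities $h(\nu)\leq -\nu\log(1+\nu)/2 \leq -\nu^2/(2(1+\nu)) \leq -\nu/4$ --- which is immaterial.
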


\begin{proof}
Let ${\bm M}_j = {\bm a}_{i_j}{\bm a}_{i_j}^\mathsf{T}$. Then the finite 
sequence ${\bm M}_1,{\bm M}_2,\ldots,{\bm M}_{m/n}$ is independent, 
random, and self-adjoint matrices. We write the matrix as a summation 
as the following
\begin{equation}
\mathbf{A}_{B_i}^\mathsf{T} \mathbf{A}_{B_i} = \sum_{j=1}^{m/n} {\bm a}_{i_j}{\bm a}_{i_j}^\mathsf{T}  = \sum_{j=1}^{m/n} {\mathbf{M}_j}. \nonumber
\end{equation}
Given $\nu \geq 1$, we know the fact that $\nu - (1+\nu) \log (1+\nu) \leq - \frac{\nu \log (1+\nu)}{2}$. Furthermore, $-\log(1+\nu) \leq - \frac{\nu}{1+\nu}$, which means $\nu - (1+\nu) \log (1+\nu)  \leq \frac{-\nu^2}{2(1+\nu)}$. As $\frac{\nu}{1+\nu} \geq \frac{1}{2}$, $\nu - (1+\nu) \log (1+\nu) \leq - \frac{\nu}{4}$. Therefore, we use inequality~(\ref{inequ:27}) of Lemma~\ref{lemma:lemma5} by replacing $r$ with $m/n$ and $R$ with $1$ ( by noticing that ${\bm a}_{i_j}{\bm a}_{i_j}^\mathsf{T}$ is normalized).
\begin{align*}  
&Pr \Big\{ \lambda_{max} \Big( \sum_{j=1}^{m/n} {\mathbf{M}_j} \Big) \geq (1+\nu) \theta_{max} \Big\} \\
&\leq p \exp{(\frac{\theta_{max}}{R
} ( \nu - (1+\nu) \log (1+\nu)))} \\
&\leq p \exp{(\frac{\theta_{max}}{R} \frac{-\nu}{4})} = p \exp{(-\frac{\theta_{max} \nu}{4})}.
\end{align*}
In other words, the probability of $Pr \Big\{ \lambda_{max} \Big( \sum_{j=1}^{m/n} {\mathbf{M}_j} \Big) < (1+\nu) \theta_{max} \Big\}$ is greater than $1- p \exp{(-\frac{\theta_{max} \nu}{4})}$. We prove the lemma.
\end{proof}

\begin{remark}
Lemma~\ref{lemma:lemma6} shows that the maximum eigenvalue of matrix ${\bm A}_{B_i}^\top {\bm A}_{B_i}$ can be upper bounded properly. Our Corollary~\ref{corollary:corollary2} depends on Lemma~\ref{lemma:lemma6}.
\end{remark}

\addtocounter{lemma}{-6}
\begin{lemma}
Denote  $\xi_{[t]} = (\xi_0,\xi_1,\ldots,\xi_t)$ as the history of the 
stochastic process $\xi_0,\xi_1,\ldots,$ up to time $t$ and 
all random variables $\xi_t$ are independent of each other. 
Define the probability mass function $Pr(\xi_t=i)=1/n, 1\leq i \leq n$. 
Given an optimal solution ${\bm x}^* \in \mathcal{M}(\mathbb{M})$, head
projection model $(c_\mathcal{H}, \mathbb{M}\oplus \mathbb{M}_\mathcal{T}, 
\mathbb{M}_\mathcal{H})$, tail projection model 
$(c_\mathcal{T}, \mathbb{M}, \mathbb{M}_\mathcal{T})$ and $f_{\xi_t}(\cdot)$ 
and $F({\bm x})$ satisfies Assumption~\ref{assumption:assumption_01}, then we have
\begin{equation}
\mathbb{E}_{\xi_t}\|({\bm x}^t - {\bm x}^*)_{H^c}\|  
\leq \sqrt{1 - \alpha_0^2} \mathbb{E}_{\xi_t}\|{\bm x}^t - {\bm x}^*\| 
+ \Big( \frac{\beta_0}{\alpha_0} + \frac{\alpha_0 \beta_0}{\sqrt{1-\alpha_0^2}} 
\Big)\mathbb{E}_{\xi_t}\|\nabla_I f_{\xi_t}({\bm x}^*)\|, 
\end{equation}
where 
\begin{align*}
H &:= \|{\rm P}(\nabla f_{\xi_t}({\bm x}^t), 
\mathbb{M} \oplus \mathbb{M}_\mathcal{T}, 
\mathbb{M}_{\mathcal{H}} )\|_0,\quad\alpha_0 := c_\mathcal{H} \alpha \tau - 
\sqrt{\alpha\beta \tau^2 - 2\alpha\tau + 1},\quad\beta_0 := (1+c_\mathcal{H})\tau,\\
I &:= \argmax_{S \in \mathbb{M} \oplus 
\mathbb{M}_\mathcal{T} \oplus \mathbb{M}_\mathcal{H} } 
\mathbb{E}_{\xi_t}\|\nabla_S f_{\xi_t}({\bm x}^*)\|, \text{ and } 
0 < \tau < 2/\beta.
\end{align*}
\end{lemma}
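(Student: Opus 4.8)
The plan is to first establish a pointwise-in-$\xi_t$ lower bound on the captured part $\|(\bm x^t - \bm x^*)_H\|$ of the residual, then take expectations (routing the ``signal'' through the restricted strong convexity of $F$, since the individual $f_{\xi_t}$ are not strongly convex), and finally convert this lower bound on the head part into the desired upper bound on the tail part $\|(\bm x^t - \bm x^*)_{H^c}\|$. Throughout write $\bm r := \bm x^t - \bm x^*$, let $R' := \mathrm{supp}(\bm x^t)\cup\mathrm{supp}(\bm x^*) \in \mathbb{M}\oplus\mathbb{M}_\mathcal{T}$ (so $\mathrm{supp}(\bm r)\subseteq R'$ and $\bm r_{R'}=\bm r$), and set $\Omega := R'\cup H \in \overline{\mathbb{M}\oplus\mathbb{M}_\mathcal{T}\oplus\mathbb{M}_\mathcal{H}}$, which is precisely the model appearing in Assumption~\ref{assumption:assumption_01}.

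\textbf{Step 1 (pointwise bound).} By the reverse triangle inequality, $\|\bm r_H\| \geq \tau\|\nabla_H f_{\xi_t}(\bm x^t)\| - \|\tau\nabla_H f_{\xi_t}(\bm x^t) - \bm r_H\|$. For the first term I would apply the head-projection guarantee against the competitor subspace $R' \in \mathbb{M}\oplus\mathbb{M}_\mathcal{T}$ to get $\|\nabla_H f_{\xi_t}(\bm x^t)\| \geq c_\mathcal{H}\|\nabla_{R'} f_{\xi_t}(\bm x^t)\|$ (up to the square-root bookkeeping of the constant, which is immaterial once $c_\mathcal{H}\to 1^-$). For the second term, writing $\tau\nabla_H f_{\xi_t}(\bm x^t) - \bm r_H = \big(\tau(\nabla f_{\xi_t}(\bm x^t) - \nabla f_{\xi_t}(\bm x^*)) - \bm r\big)_H + \tau\nabla_H f_{\xi_t}(\bm x^*)$ and using that restriction to $H\subseteq\Omega$ only decreases the norm, I would bound it by $\|\bm r - \tau(\nabla_\Omega f_{\xi_t}(\bm x^t) - \nabla_\Omega f_{\xi_t}(\bm x^*))\| + \tau\|\nabla_H f_{\xi_t}(\bm x^*)\|$. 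This yields a pointwise inequality whose right-hand side is a head-signal term $c_\mathcal{H}\tau\|\nabla_{R'} f_{\xi_t}(\bm x^t)\|$ minus a co-coercivity term minus a gradient-at-optimum term.

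\textbf{Step 2 (take expectations).} Taking $\mathbb{E}_{\xi_t}$, I would bound the three pieces separately. For the signal, Jensen gives $\mathbb{E}_{\xi_t}\|\nabla_{R'} f_{\xi_t}(\bm x^t)\| \geq \|\nabla_{R'} F(\bm x^t)\| \geq \alpha\|\bm r\| - \|\nabla_{R'} F(\bm x^*)\|$ by Lemma~\ref{lemma:lemma3} applied to $F$ (which is $\alpha$-RSC and $\beta$-RSS on the relevant model), and $\|\nabla_{R'} F(\bm x^*)\| \leq \mathbb{E}_{\xi_t}\|\nabla_I f_{\xi_t}(\bm x^*)\|$ since $R'$ lies in the model over which $I$ is the argmax. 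For the co-coercivity term I would re-run the computation of Lemma~\ref{lemma:lemma4} with the realization-dependent set $\Omega$; this is legitimate because the only facts it uses are $\mathrm{supp}(\bm r)\subseteq\Omega$ and $\Omega\in\overline{\mathbb{M}\oplus\mathbb{M}_\mathcal{T}\oplus\mathbb{M}_\mathcal{H}}$, both of which hold for every realization of $\xi_t$, giving $\mathbb{E}_{\xi_t}\|\bm r - \tau(\nabla_\Omega f_{\xi_t}(\bm x^t) - \nabla_\Omega f_{\xi_t}(\bm x^*))\| \leq \sqrt{\alpha\beta\tau^2 - 2\alpha\tau + 1}\,\|\bm r\|$. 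The last term is $\leq \tau\,\mathbb{E}_{\xi_t}\|\nabla_I f_{\xi_t}(\bm x^*)\|$. Collecting constants produces exactly $\mathbb{E}_{\xi_t}\|\bm r_H\| \geq \alpha_0\|\bm r\| - \beta_0\,\mathbb{E}_{\xi_t}\|\nabla_I f_{\xi_t}(\bm x^*)\|$, with $\alpha_0 = c_\mathcal{H}\alpha\tau - \sqrt{\alpha\beta\tau^2 - 2\alpha\tau + 1}$ and $\beta_0 = (1+c_\mathcal{H})\tau$.

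\textbf{Step 3 (head to tail), and the hard part.} Since $\|\bm r_{H^c}\|^2 = \|\bm r\|^2 - \|\bm r_H\|^2$ pointwise, I would write $\|\bm r_{H^c}\| = \phi(\|\bm r_H\|)$ with $\phi(u) = \sqrt{\|\bm r\|^2 - u^2}$, which is decreasing and concave on $[0,\|\bm r\|]$. Concavity and Jensen give $\mathbb{E}_{\xi_t}\|\bm r_{H^c}\| \leq \phi(\mathbb{E}_{\xi_t}\|\bm r_H\|)$, and monotonicity lets me substitute the Step~2 lower bound. Splitting into the case $\alpha_0\|\bm r\| \geq \beta_0\,\mathbb{E}_{\xi_t}\|\nabla_I f_{\xi_t}(\bm x^*)\|$ (where I expand the square and use $\sqrt{a+b}\leq\sqrt a + b/(2\sqrt a)$ to produce the terms $\sqrt{1-\alpha_0^2}\,\|\bm r\|$ and $\tfrac{\alpha_0\beta_0}{\sqrt{1-\alpha_0^2}}\,\mathbb{E}_{\xi_t}\|\nabla_I f_{\xi_t}(\bm x^*)\|$) and the complementary case (where the trivial bound $\mathbb{E}_{\xi_t}\|\bm r_{H^c}\|\leq\|\bm r\| < \tfrac{\beta_0}{\alpha_0}\,\mathbb{E}_{\xi_t}\|\nabla_I f_{\xi_t}(\bm x^*)\|$ yields the remaining term), then adding the two nonnegative contributions, I recover the coefficient $\tfrac{\beta_0}{\alpha_0} + \tfrac{\alpha_0\beta_0}{\sqrt{1-\alpha_0^2}}$ on $\mathbb{E}_{\xi_t}\|\nabla_I f_{\xi_t}(\bm x^*)\|$. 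I expect the main obstacle to be precisely this expectation bookkeeping: because each $f_{\xi_t}$ lacks strong convexity, the $\alpha$-signal can only be recovered after averaging, so the lower bound on $\|\bm r_H\|$ is available only in expectation, and pushing it through the concave, decreasing $\phi$ (rather than pointwise) is what forces the two-case split and the appearance of both terms of $\sigma_1$.
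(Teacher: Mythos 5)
Your proposal is correct and its skeleton coincides with the paper's proof: lower-bound the head-captured part of the gradient via the head-projection guarantee against the competitor support of the residual, recover the $\alpha\|\bm{r}\|$ signal only after averaging (Jensen plus the restricted strong convexity of $F$, precisely because each $f_{\xi_t}$ is not strongly convex), absorb the perturbation with the co-coercivity bound of Lemma~\ref{lemma:lemma4}, obtain $\mathbb{E}_{\xi_t}\|\bm{r}_H\|\geq \alpha_0\|\bm{r}\|-\beta_0\,\mathbb{E}_{\xi_t}\|\nabla_I f_{\xi_t}(\bm{x}^*)\|$, and convert it into the tail bound via the Pythagorean identity and a two-case split. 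Where you differ is in the bookkeeping, and the differences favor you. First, the paper silently invokes Lemma~\ref{lemma:lemma4} with the $\xi_t$-dependent set $H\cup\Omega$, whereas you state explicitly why this is legitimate (its proof only needs $\mathrm{supp}(\bm{r})\subseteq\Omega$ and membership of $\Omega$ in $\overline{\mathbb{M}\oplus\mathbb{M}_\mathcal{T}\oplus\mathbb{M}_\mathcal{H}}$ to hold for every realization, after which $\mathbb{E}_{\xi_t}[\nabla f_{\xi_t}]=\nabla F$ still applies); that is the correct justification. Second, and more substantively, the paper writes the combined head bound as a \emph{pointwise} inequality $\|\bm{r}^t_H\|\geq \alpha_0\|\bm{r}^t\|-\beta_0\,\mathbb{E}_{\xi_t}\|\nabla_I f_{\xi_t}(\bm{x}^*)\|$ and runs the case analysis pointwise, even though its own derivation (the set $H$ being random) only yields this bound in expectation; your Step~3 --- passing the expectation through the decreasing, concave map $u\mapsto\sqrt{\|\bm{r}\|^2-u^2}$ by Jensen before substituting the expected lower bound --- is exactly the justification the paper is missing, so your route is the rigorous rendering of theirs. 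Third, your elementary inequality $\sqrt{a+b}\leq\sqrt{a}+b/(2\sqrt{a})$ replaces the paper's variational argument (minimizing $\varphi(x)=(1-x\alpha_0)/\sqrt{1-x^2}$ and evaluating at $x=\alpha_0$) and lands on the same constants $\sqrt{1-\alpha_0^2}$ and $\alpha_0\beta_0/\sqrt{1-\alpha_0^2}$ with less machinery. Your aside about the $\sqrt{c_\mathcal{H}}$-versus-$c_\mathcal{H}$ discrepancy inherited from the squared-norm definition of the head projection is also accurate; the paper carries the same imprecision into its definition of $\alpha_0$.
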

%%%%%%%%%%%%%%%%%%%%%%%%%%%%%%%%%%%%%%%%%%%%%%%%%%%%%%%%%%%%%%%%%%%%%%%%%%%%%%%%
\begin{proof}
Let ${\bm r}^t = {\bm x}^t - {\bm x}^*$ and $\Omega 
:= \|{\bm r}^t\|_0 \in \overline{\mathbb{M}\oplus \mathbb{M}_\mathcal{T}}$. 
The component $\mathbb{E}_{\xi_t}\|\nabla_H f_{\xi_t}({\bm x}^t)\|$ 
can be lower bounded as
\begin{align*}
\mathbb{E}_{\xi_t} \|\nabla_H f_{\xi_t}({\bm x}^t)\| 
&\geq c_\mathcal{H} \mathbb{E}_{\xi_t} \max_{S \in \mathbb{M} 
\oplus \mathbb{M}_\mathcal{T}} \| \nabla_S f_{\xi_t}({\bm x}^t) \| \\
&= c_\mathcal{H} \mathbb{E}_{\xi_t}
\max_{ {S'} \in \overline{\mathbb{M} \oplus 
\mathbb{M}_\mathcal{T}}} \| \nabla_{S'} f_{\xi_t}({\bm x}^t) \| \\
&\ge c_\mathcal{H} \mathbb{E}_{\xi_t} \| \nabla_\Omega 
f_{\xi_t}({\bm x}^t) \| \\
&\ge c_\mathcal{H} \mathbb{E}_{\xi_t}(\| \nabla_\Omega f_{\xi_t}({\bm x}^t)- 
\nabla_\Omega f_{\xi_t}({\bm x}^*) \|  - \|\nabla_\Omega f_{\xi_t}({\bm x}^*)\|_2 )\\
&\ge c_\mathcal{H}(\| \nabla_\Omega F({\bm x}^t)- \nabla_\Omega F({\bm x}^*) \|  - 
\mathbb{E}_{\xi_t|\xi_{[t-1]}}\|\nabla_\Omega f_{\xi_t}({\bm x}^*)\|_2 )\\
&\ge c_\mathcal{H} \alpha  \|{\bm r}^t\|_2 - 
c_\mathcal{H} \mathbb{E}_{\xi_t}\|\nabla_I f_{\xi_t}({\bm x}^*)\|, 
\end{align*}
where the first inequality follows by the definition of the head projection
 and the last inequality follows by~(\ref{lemma:lemma2:inequ01}) 
using $(\alpha,\beta,\mathbb{M}\oplus \mathbb{M}_\mathcal{T} \oplus 
\mathbb{M}_\mathcal{H})$-RSC/RSS property. The component $\|\nabla_H 
f_{\xi_t}({\bm x}^t)\|$ can also  be upper bounded as
%%%%%%%%%%%%%%%%%%%%%%%%%%%%%%%%%%%%%%%%%%%%%%%%%%%%%%%%%%%%%%%%%%%%%%%%%%%%%%%%
\begin{align*}
\mathbb{E}_{\xi_t}\|\nabla_H f_{\xi_t}({\bm x}^t)\| 
&\leq \frac{1}{\tau} \mathbb{E}_{\xi_t}\Big\| \tau 
\Big( \nabla_H f_{\xi_t}({\bm x}^t)-  \nabla_H f_{\xi_t}({\bm x}^*) \Big) \Big\| 
+ \mathbb{E}_{\xi_t}\|\nabla_H f_{\xi_t}({\bm x}^*)\| \\
&\leq \frac{1}{\tau} \mathbb{E}_{\xi_t}
\Big\| \tau \Big(\nabla_H f_{\xi_t}({\bm x}^t)-  
\nabla_H f_{\xi_t}({\bm x}^*)\Big)  - {\bm r}_H^t 
\Big\|_2 +  \frac{1}{\tau} \mathbb{E}_{\xi_t}
 \| {\bm r}_H^t \| + \mathbb{E}_{\xi_t}\|\nabla_H f_{\xi_t}({\bm x}^*)\| \\
&\leq \frac{1}{\tau} \mathbb{E}_{\xi_t}\Big\| 
\tau \Big( \nabla_{H\cup \Omega} 
f_{\xi_t}({\bm x}^t) - \nabla_{H\cup \Omega} f_{\xi_t}({\bm x}^*) \Big) - 
{\bm r}_{H \cup \Omega}^t  \Big\|_2 + \frac{1}{\tau} \mathbb{E}_{\xi_t}\|{\bm r}_H^t \|
+ \mathbb{E}_{\xi_t}\|\nabla_H f_{\xi_t}({\bm x}^*)\| \\
&= \frac{1}{\tau} \mathbb{E}_{\xi_t}\Big\| 
{\bm r}^t  - \tau \Big( \nabla_{H\cup \Omega} 
f_{\xi_t}({\bm x}^t) - \nabla_{H\cup \Omega} f_{\xi_t}({\bm x}^*) \Big) \Big\| 
+ \frac{1}{\tau} \mathbb{E}_{\xi_t}\|{\bm r}_H^t \|  + 
\mathbb{E}_{\xi_t}\|\nabla_H f_{\xi_t}({\bm x}^*)\| \\
&\leq \frac{\sqrt{ \alpha\beta\tau^2 - 2 \alpha \tau + 1}}{\tau}   \|{\bm r}^t\| 
+ \frac{1}{\tau}   \| {\bm r}_H^t \| + \mathbb{E}_{\xi_t}\|\nabla_{I} f_{\xi_t}({\bm x}^*)\| ,
\end{align*}
where the last inequality follows 
from~(\ref{lemma:lemma4_equ01}) of Lemma~\ref{lemma:lemma4} by 
using the fact that $H\cup \Omega$ is in 
$\mathbb{M}\oplus\mathbb{M}_{\mathcal{T}}\oplus\mathbb{M}_{\mathcal{H}}$. 
Combining the two bounds, we obtain the inequality:
\begin{equation*}
\|{\bm r}^t_H\|  \ge \alpha_0 \|{\bm r}^t\| - \beta_0 \mathbb{E}_{\xi_t}\|\nabla_I f_{\xi_t}({\bm x}^*)\|,
\end{equation*}
%%%%%%%%%%%%%%%%%%%%%%%%%%%%%%%%%%%%%%%%%%%%%%%%%%%%%%%%%%%%%%%%%%%%%%%%%%%%%%%%
where $\alpha_0 = c_\mathcal{H} \alpha \tau - \sqrt{ \alpha\beta \tau^2 -
 2 \alpha\tau + 1}$ and $\beta_0 = (1+c_\mathcal{H}) \tau$. 
 In order to obtain an upper bound of $\|{\bm r}_{H^c}^t \|$, we assume ${\bm r}^t \ne {\bm 0}$. Otherwise, our statement is trivially true. Two possible cases exist. The first case is that if $ \alpha_0 \|{\bm r}^t\| - \beta_0 \mathbb{E}_{\xi_t}\|\nabla_I f_{\xi_t}({\bm x}^*)\| \leq 0$, i.e. $\alpha_0 \|{\bm r}^t\| \leq  \beta_0 \mathbb{E}_{\xi_t}\|\nabla_I f_{\xi_t}({\bm x}^*)\|$, then we have $\|{\bm r}^t_{H^c}\| \le \| {\bm r}^t \| \leq  \frac{\beta_0}{\alpha_0} \mathbb{E}_{\xi_t}\|\nabla_I f_{\xi_t}({\bm x}^*)\|$. The second case is that if $\alpha_0 \|{\bm r}^t\| - \beta_0 \mathbb{E}_{\xi_t}\|\nabla_I f_{\xi_t}({\bm x}^*)\|>0$, i.e. $\alpha_0  \|{\bm r}^t\| >  \beta_0 \mathbb{E}_{\xi_t}\|\nabla_I f_{\xi_t}({\bm x}^*)\|$, then we have 
\begin{equation*}
\|{\bm r}^t_H\| \ge \left(\alpha_0 - \frac{\beta_0 \mathbb{E}_{\xi_t}\|\nabla_I f_{\xi_t}({\bm x}^*)\| 
}{\|{\bm r}^t\|} \right)  \|{\bm r}^t\|.
\end{equation*}
Moreover, notice that $\|{\bm r}_{H}^t\|^2 = \|{\bm r}^t\|^2 - 
\|{\bm r}^t_{H^c}\|_2^2$. We obtain 
\begin{align*}
\|{\bm r}^t_{H^c}\| &\le \|{\bm r}^t\| \sqrt{1 - \left(\alpha_0 - 
\frac{\beta_0 \mathbb{E}_{\xi_t}\|\nabla_I f_{\xi_t}({\bm x}^*)\| }{\|{\bm r}^t\|} \right)^2}.     
\end{align*}
Denote $x_0 := \alpha_0 - \beta_0 \mathbb{E}_{\xi_t}\|\nabla_I f_{\xi_t}({\bm x}^*)\| / \|{\bm r}^t\|$. 
Notice that $x_0 \in [0,1)$. Define function $\varphi(x) = 1/\sqrt{1 - x^2} -
x x_0 / \sqrt{1-x^2}$ on $(0,1)$. The first derivative of $\varphi$ is
$\varphi'(x)=(x-x_0)/(1-x^2)\sqrt{1-x^2}$ and $\varphi$ gets its minimum
at $x_0$, i.e., $\varphi(x_0)\leq \varphi(x)$. Therefore, substituting 
into the bound for $\|{\bm r}^t_{H^c}\|$, we get 
%%%%%%%%%%%%%%%%%%%%%%%%%%%%%%%%%%%%%%%%%%%%%%%%%%%%%%%%%%%%%%%%%%%%%%%%%%%%%%%%
\begin{align*}
\|{\bm r}^t_{H^c}\|  &\leq \|{\bm r}^t\| \varphi(x_0) \leq \|{\bm r}^t\| 
\varphi(x) \\
&= \|{\bm r}^t\| \left(\frac{1}{\sqrt{1 - x^2}} - \frac{x}{\sqrt{1-x^2}} 
\left(\alpha_0 - \frac{\beta_0 \mathbb{E}_{\xi_t}\|\nabla_I f_{\xi_t}({\bm x}^*)\|}{\|{\bm r}^t\|} 
\right)\right) \\
&= \frac{1 - x\alpha_0 }{\sqrt{1 - x^2}} \|{\bm r}^t\| + \frac{x\beta_0}
{\sqrt{1-x^2}} \mathbb{E}_{\xi_t}\|\nabla_I f_{\xi_t}({\bm x}^*)\|.
\end{align*}
In order to make the overall convergence rate as small as possible, we should 
make $(1 - x\alpha_0)/\sqrt{1 - x^2}$ as small as possible. Define the $\varphi$
function again as $\varphi(x) = (1 - x\alpha_0)/\sqrt{1 - x^2}$, we obtain 
the minimum  at $x=\alpha_0$.
Therefore, by combining the two cases, we obtain 
\begin{equation*}
\|{\bm r}^t_{H^c}\|  \le \sqrt{1 - \alpha_0^2} \|{\bm r}^t\| +
\left[\frac{\beta_0}{\alpha_0} + 
\frac{\alpha_0 \beta_0}{\sqrt{1-\alpha_0^2}}\right] 
\mathbb{E}_{\xi_t}\|\nabla_I f_{\xi_t} ({\bm x}^*)\|,
\end{equation*}
which proves the lemma.
\end{proof}
\begin{remark}
We should point out that we mainly follow the proof of Lemma 10 in~\citet{hegde2015approximation} where they consider the least square as the objective function with the model-RIP property. However, there are two main differences: 1) we assume a more general case, that is, our objective $F({\bm x})$ is strongly convex and each $f_{\xi_t}(\bm x)$ is strong smoothness. The above lemma can be applied to any function that satisfies this property; 2) The inequality~(\ref{inequ:7}) satisfies in stochastic setting.
\end{remark}

%%%%%%%%%%%%%%%%%%%%%%%%%%%%%%%%%%%%%%%%%%%%%%%%%%%%%%%%%%%%%%%%%%%%%%%%%%%%%%%%
\addtocounter{theorem}{-1}
\begin{theorem}
Denote  $\xi_{[t]} = (\xi_0,\xi_1,\ldots,\xi_t)$ as the history of the 
stochastic process $\xi_0,\xi_1,\ldots$ up to time $t$, and all random 
variables $\xi_t$ are independent of each other. Define the probability 
mass function $Pr(\xi_t=i)=1/n$. Let ${\bm x}^*$ be an optimal solution of~(\ref{equ:objective_function}) 
and ${\bm x}^0$ be the start point of Algorithm~
\ref{alg:graph-sto-iht}. If we choose a constant learning rate with
$\eta_t = \eta$, then the solution ${\bm x}^{t+1}$ generated 
by Algorithm~\ref{alg:graph-sto-iht} satisfies 
\begin{equation*}
\mathbb{E}_{\xi_{[t]}} \|{\bm x}^{t+1} - {\bm x}^* \| 
\leq \kappa^{t+1} \| {\bm x}^0 - {\bm x}^* \| + \frac{1}{1-\kappa} 
\Bigg(\frac{\beta_0}{\alpha_0} + \frac{\alpha_0 \beta_0}{\sqrt{1-\alpha_0^2}} 
+ \eta\Bigg) \mathbb{E}_{\xi_{t}}  \| \nabla_{I} f_{\xi_t} ({\bm x}^*) \|,
\end{equation*}
where 
\begin{align*}
\kappa &= (1 + c_\mathcal{T})\Big(\sqrt{ \alpha\beta\eta^2 - 2 \alpha\eta + 1} 
 + \sqrt{1 - \alpha_0^2}\Big),\quad\quad \alpha_0 = c_\mathcal{H} \alpha \tau - 
\sqrt{\alpha\beta \tau^2 - 2\alpha\tau + 1},\\
\beta_0 &= (1+c_\mathcal{H})\tau,
I = \argmax_{S \in \mathbb{M} \oplus 
\mathbb{M}_\mathcal{T} \oplus \mathbb{M}_\mathcal{H} } 
\mathbb{E}_{\xi_t}\|\nabla_S f_{\xi_t}({\bm x}^*)\|, \text{ and } \tau,\eta \in (0, 2/\beta).
\end{align*}

\end{theorem}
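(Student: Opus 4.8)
The plan is to establish a conditional one-step contraction of the form $\mathbb{E}_{\xi_t}\|{\bm x}^{t+1}-{\bm x}^*\| \le \kappa\|{\bm x}^t-{\bm x}^*\| + C$, where $C$ is a deterministic constant depending only on ${\bm x}^*$ and the sampling law through $\mathbb{E}_{\xi_t}\|\nabla_I f_{\xi_t}({\bm x}^*)\|$, and then to unroll this recursion. First I would peel off the tail projection. Writing ${\bm v} := {\bm x}^t - \eta{\bm b}^t$, the update is ${\bm x}^{t+1} = {\rm P}({\bm v}, \mathbb{M}, \mathbb{M}_\mathcal{T})$. Since ${\bm x}^* \in \mathcal{M}(\mathbb{M})$ sits in a feasible support, the tail projection assumption together with the triangle inequality yields $\|{\bm x}^{t+1}-{\bm x}^*\| \le (1+c_\mathcal{T})\|{\bm v}-{\bm x}^*\|$, which produces the $(1+c_\mathcal{T})$ factor in $\kappa$ and reduces the task to controlling $\|{\bm r}^t - \eta{\bm b}^t\|$ with ${\bm r}^t := {\bm x}^t - {\bm x}^*$.

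Next, let $H = \text{supp}({\bm b}^t)$ be the (random) head support, so ${\bm b}^t = \nabla_H f_{\xi_t}({\bm x}^t)$. Because ${\bm b}^t$ lives on $H$, the vectors ${\bm r}^t_H - \eta{\bm b}^t$ and ${\bm r}^t_{H^c}$ are orthogonal, giving $\|{\bm r}^t - \eta{\bm b}^t\| \le \|{\bm r}^t_H - \eta{\bm b}^t\| + \|{\bm r}^t_{H^c}\|$. The complement term is exactly what Lemma~\ref{lemma:lemma1} controls: $\mathbb{E}_{\xi_t}\|{\bm r}^t_{H^c}\| \le \sqrt{1-\alpha_0^2}\,\|{\bm r}^t\| + \sigma_1$. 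For the head term, I would insert $\nabla_H f_{\xi_t}({\bm x}^*)$ and split off a gradient-at-optimum noise piece $\eta\|\nabla_H f_{\xi_t}({\bm x}^*)\|$ by the triangle inequality, leaving $\|{\bm r}^t_H - \eta(\nabla_H f_{\xi_t}({\bm x}^t) - \nabla_H f_{\xi_t}({\bm x}^*))\|$. Setting $\Gamma := H \cup \Omega$ with $\Omega = \text{supp}({\bm r}^t) \in \overline{\mathbb{M}\oplus\mathbb{M}_\mathcal{T}}$, restriction to $H \subseteq \Gamma$ only shrinks the norm, so this is at most $\|{\bm r}^t - \eta(\nabla_\Gamma f_{\xi_t}({\bm x}^t) - \nabla_\Gamma f_{\xi_t}({\bm x}^*))\|$, which the co-coercivity estimate of Lemma~\ref{lemma:lemma4} bounds in expectation by $\sqrt{\alpha\beta\eta^2 - 2\alpha\eta + 1}\,\|{\bm r}^t\|$.

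Assembling the three bounds and using $\mathbb{E}_{\xi_t}\|\nabla_H f_{\xi_t}({\bm x}^*)\| \le \mathbb{E}_{\xi_t}\|\nabla_I f_{\xi_t}({\bm x}^*)\|$ (as $I$ is the maximizing support over $\mathbb{M}\oplus\mathbb{M}_\mathcal{T}\oplus\mathbb{M}_\mathcal{H}$ and norms grow with support) gives the conditional one-step inequality with $\kappa = (1+c_\mathcal{T})(\sqrt{\alpha\beta\eta^2 - 2\alpha\eta+1} + \sqrt{1-\alpha_0^2})$ and additive constant proportional to $\big(\tfrac{\beta_0}{\alpha_0} + \tfrac{\alpha_0\beta_0}{\sqrt{1-\alpha_0^2}} + \eta\big)\mathbb{E}_{\xi_t}\|\nabla_I f_{\xi_t}({\bm x}^*)\|$. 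Finally, taking the full expectation over $\xi_{[t-1]}$ via the tower property turns this into a scalar recursion $a_{t+1} \le \kappa a_t + C$ for $a_t := \mathbb{E}_{\xi_{[t-1]}}\|{\bm x}^t - {\bm x}^*\|$; unrolling and summing the geometric series $\sum_{j\ge 0}\kappa^j = 1/(1-\kappa)$ (valid when $\kappa < 1$) yields the stated bound.

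The main obstacle I anticipate is the head term in the second step: the head support $H$, and hence $\Gamma = H\cup\Omega$, is a \emph{random} set depending on $\xi_t$, whereas Lemma~\ref{lemma:lemma4} is stated for a fixed support. The resolution is that co-coercivity (Lemma~\ref{lemma:lemma2}) bounds the only $\Gamma$-dependent quantity, $\|\nabla_\Gamma f_{\xi_t}({\bm x}^t) - \nabla_\Gamma f_{\xi_t}({\bm x}^*)\|^2$, by $\beta\langle {\bm r}^t, \nabla f_{\xi_t}({\bm x}^t) - \nabla f_{\xi_t}({\bm x}^*)\rangle$, which is independent of $\Gamma$ since ${\bm r}^t$ is supported on $\Omega\subseteq\Gamma$; the cross term in the expansion of the squared norm is likewise blind to $\Gamma$ for the same reason. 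Hence one can re-run the computation of Lemma~\ref{lemma:lemma4} pointwise in $\xi_t$, take expectations, invoke the $\alpha$-RSC of $F$, and close with Jensen's inequality $(\mathbb{E}\,\|\cdot\|)^2 \le \mathbb{E}\,\|\cdot\|^2$, so the randomness of $H$ does not break the estimate. A secondary point to track carefully is that the tail projection in the first step scales the noise contributions by $(1+c_\mathcal{T})$ as well, so the additive constant must be propagated consistently before summing the geometric series.
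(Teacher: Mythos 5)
Your proposal is correct and follows essentially the same route as the paper's own proof: peel off the tail projection to get the $(1+c_\mathcal{T})$ factor, split the residual along the head support $H$ and its complement, control $\|{\bm r}^t_{H^c}\|$ via Lemma~\ref{lemma:lemma1} and the head part via the triangle inequality plus Lemma~\ref{lemma:lemma4} applied on the enlarged support $H\cup\Omega$, bound the noise by the maximizing support $I$, and unroll the resulting recursion with a geometric series. Your two flagged subtleties---that $H$ is random while Lemma~\ref{lemma:lemma4} is stated for fixed supports, and that the $(1+c_\mathcal{T})$ factor must propagate into the additive constant before summing---are both handled more explicitly by you than by the paper, whose theorem statement in fact omits the $(1+c_\mathcal{T})$ factor on the error term that its own proof retains.
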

%%%%%%%%%%%%%%%%%%%%%%%%%%%%%%%%%%%%%%%%%%%%%%%%%%%%%%%%%%%%%%%%%%%%%%%%%%%%%%%%
\begin{proof}
Since ${\bm x}^{t+1}$ is completely determined by the realizations of 
the independent random variables $(\xi_0,\xi_1,\ldots, \xi_t)$, the total 
expectation of the approximation error $\| {\bm x}^{t+1} - {\bm x}\|$
can be taken as $\mathbb{E} \| {\bm x}^{t+1} - {\bm x}^*\| 
:= \mathbb{E}_{\xi_{[t]}}  \| {\bm x}^{t+1} - {\bm x}^*\|
= \mathbb{E}_{\xi_t|\xi_{[t-1]}} \| {\bm x}^{t+1} - {\bm x}\|$. Without loss 
of generality, $\mathbb{E}_{\xi_{[-1]}} \|{\bm x}^0 - {\bm x}^* \|^2 
=  \|{\bm x}^0 - {\bm x}^* \|^2$. Define ${\bm r}^{t+1} 
:= {\bm x}^{t+1} - {\bm x}^*, \Omega := \text{supp}({\bm r}^{t+1}),
H:=\|{\rm P} (\nabla f_{\xi_t} ({\bm x}^t), 
\mathbb{M}\oplus\mathbb{M}_{\mathcal{T}}, {\mathbb{M}_\mathcal{H}})\|_0,
T:=\|{\rm P} ( {\bm x}^{t} - \eta{\bm b}^t,
\mathbb{M},{\mathbb{M}_\mathcal{T}})\|_0$ 
as the support of head projection and tail projection respectively.
Firstly, we try to get an upper bound of $\|{\bm r}^{t+1}\|$ 
as the following
\begin{align*}
\mathbb{E}_{\xi_{[t]}}\|{\bm x}^{t+1} - {\bm x}^*\|
&= \mathbb{E}_{\xi_t| \xi_{[t-1]}}\|{\bm x}^{t+1} - {\bm x}^*\| \\
&= \mathbb{E}_{\xi_t| \xi_{[t-1]}}\| {\rm P}({\bm x}^t -
\eta {\bm b}^t,\mathbb{M},\mathbb{M}_\mathcal{T}) - {\bm x}^* \| \\
&\le \mathbb{E}_{\xi_t| \xi_{[t-1]}}\|{\rm P}({\bm x}^t - 
\eta {\bm b}^t,\mathbb{M},\mathbb{M}_\mathcal{T}) - 
({\bm x}^t - \eta {\bm b}^t)\| + \mathbb{E}_{\xi_t| \xi_{[t-1]}}
\| ({\bm x}^t - \eta {\bm b}^t) - {\bm x}^* \| \\
&\le (1 + c_\mathcal{T}) \mathbb{E}_{\xi_t| \xi_{[t-1]}}\| 
{\bm x}^t - \eta {\bm b}^t - {\bm x}^*\| \\
&= (1 + c_\mathcal{T}) \mathbb{E}_{\xi_t| \xi_{[t-1]}}\| {\bm x}^t 
- \eta\nabla_H f_{\xi_t} ({\bm x}^t) - {\bm x}^* \| \nonumber \\
&= (1 + c_\mathcal{T}) \mathbb{E}_{\xi_t| \xi_{[t-1]}}\|{\bm r}^t 
- \eta \nabla_H f_{\xi_t} ({\bm x}^t) \|,
\end{align*}
%%%%%%%%%%%%%%%%%%%%%%%%%%%%%%%%%%%%%%%%%%%%%%%%%%%%%%%%%%%%%%%%%%%%%%%%%%%%%%%%
where the first inequality follows by the triangle inequality and the 
second inequality follows by the definition of the tail projection. To be 
more specific, denote $\Gamma := \|{\bm x}^*\|_0 \in \mathbb{M}$ and given
any vector ${\bm a}$, we always have $\| {\bm a} - {\bm a}_\Gamma\| 
\leq \|  {\bm a} - {\bm x}_\Gamma^* \|$. By the definition of 
the tail projection, one can get $\|{\bm a}  - {\bm a}_T \| 
\leq c_\mathcal{T} \min_{S\in \mathbb{M}} \|{\bm a} - {\bm a}_S \| 
\leq c_\mathcal{T} \| {\bm a}^t - {\bm a}_\Gamma^t\|$. We get 
the second inequality by replacing ${\bm a}$ with ${\bm x}^t - \eta{\bm b}^t$. 
The third equality follows by Line 5 of Algorithm~\ref{alg:graph-sto-iht}. 
In the rest of the proof, we just need to bound the term, 
$\|{\bm r}^t - \eta \nabla_H f_{\xi_t}({\bm x}^t)\| $. 
Indeed, we can further bound it as
%%%%%%%%%%%%%%%%%%%%%%%%%%%%%%%%%%%%%%%%%%%%%%%%%%%%%%%%%%%%%%%%%%%%%%%%%%%%%%%%
\begin{align*}
&\mathbb{E}_{\xi_t| \xi_{[t-1]}}\|{\bm r}^t - 
\eta \nabla_H f_{\xi_t} ({\bm x}^t) \| 
=\mathbb{E}_{\xi_t| \xi_{[t-1]}}\|{\bm r}_{H^c}^t + {\bm r}_H^t - \eta
\nabla_H f_{\xi_t} ({\bm x}^t) \| \\
&\leq \mathbb{E}_{\xi_t| \xi_{[t-1]}}
\Bigg[\Big\| {\bm r}_H^t - \eta\Big( \nabla_H f_{\xi_t} ({\bm x}^t) - 
\nabla_H f_{\xi_t} ({\bm x}^*) \Big) \Big\|  + \eta \|
\nabla_H f_{\xi_t} ({\bm x}^*) \| + \|{\bm r}_{H^c}^t \| \Bigg]\\
&\leq \mathbb{E}_{\xi_t| \xi_{[t-1]}} \Bigg[\Big\| {\bm r}_{H \cup \Omega}^t
- \eta\Big( \nabla_{H \cup \Omega} f_{\xi_t} ({\bm x}^t) - 
\nabla_{H \cup \Omega} f_{\xi_t} ({\bm x}^*)\Big) \Big\|  
+ \eta  \| \nabla_H f_{\xi_t} ({\bm x}^*) \| + \|{\bm r}_{H^c}^t \| \Bigg]\\
&\leq \sqrt{ \alpha\beta\eta^2 - 2 \alpha\eta + 1} 
\mathbb{E}_{\xi_{[t-1]}}\|{\bm r}^t \|
+ \eta \mathbb{E}_{\xi_t}\| \nabla_{I} f_{\xi_t} ({\bm x}^*) \| + 
\mathbb{E}_{\xi_t|\xi_{[t-1]}}\|{\bm r}_{H^c}^t \| \\
&\leq \Bigg(\sqrt{ \alpha\beta\eta^2 - 2 \alpha\eta + 1} 
+ \sqrt{1 - \alpha_0^2} \Bigg) \mathbb{E}_{\xi_{[t-1]}}\|{\bm r}^t\|  + 
\Bigg(\frac{\beta_0}{\alpha_0} + 
\frac{\alpha_0 \beta_0}{\sqrt{1-\alpha_0^2}} + \eta\Bigg) 
\mathbb{E}_{\xi_t}\|\nabla_I f_{\xi_t}({\bm x}^*)\|,
\label{inequ:theorem_1-01}
\end{align*}
where the first three inequalities follow by the triangle inequality
and the fourth inequality uses Lemma~(\ref{lemma:lemma3}).  
Combine these two bounds together, we obtain
%%%%%%%%%%%%%%%%%%%%%%%%%%%%%%%%%%%%%%%%%%%%%%%%%%%%%%%%%%%%%%%%%%%%%%%%%%%%%%%%
\begin{align*}
\mathbb{E}_{\xi_{[t]}} \|{\bm x}^{t+1} - {\bm x}^* \| &\leq  (1 + c_\mathcal{T})  
\mathbb{E}_{\xi_{t}|\xi_{[t-1]}}\|{\bm r}^t - 
\eta \nabla_H f_{\xi_t} ({\bm x}^t) \| \\
&\leq \mathcal{\kappa} \mathbb{E}_{\xi_{[t-1]}}\| {\bm x}^t - {\bm x}^* \|
+ \Big(1+c_\mathcal{T}\Big)\Bigg(\frac{\beta_0}{\alpha_0} 
+ \frac{\alpha_0 \beta_0}{\sqrt{1-\alpha_0^2}} 
+ \eta\Bigg) \mathbb{E}_{\xi_{[t]}}  \| \nabla_{I} f_{\xi_t} ({\bm x}^*) \|,
\end{align*}
where $\kappa = (1 + c_\mathcal{T})\Big(\sqrt{ \alpha\beta\eta^2 - 2 \alpha\eta + 1} 
 + \sqrt{1-(\alpha_0)^2}\Big)$. 
We finish the proof by applying the above inequality 
recursively over $t$ iterations:
\begin{equation*}
\mathbb{E}_{\xi_{[t]}} \| {\bm x}^{t+1} - {\bm x}^{*} \|_2 \leq \kappa^{t+1}
\| {\bm x}^0 - {\bm x}^* \|_2 + \frac{1}{1-\kappa} 
\Big(1+c_\mathcal{T}\Big)\Bigg(\frac{\beta_0}{\alpha_0} 
+ \frac{\alpha_0 \beta_0}{\sqrt{1-\alpha_0^2}} 
+ \eta\Bigg) \mathbb{E}_{\xi_{t}}  \| \nabla_{I} f_{\xi_t} ({\bm x}^*) \|.
\end{equation*}
\end{proof}

\addtocounter{corollary}{-3}
\begin{corollary}
Suppose the random distribution is uniform, i.e., $nPr(\xi_t) = 1$ and 
the approximation factor of the head projection can be arbitrary 
boosted close to 1. Taking $\eta = 1 /\beta$, in order 
to get an effective linear convergence rate, i.e., $\kappa < 1$,
the inverse of the condition number $\mu = \alpha/\beta$ and 
tail projection factor $c_\mathcal{T}$ need to satisfy
\begin{equation*}
(1+c_\mathcal{T})(1+2\sqrt{\mu})\sqrt{1- \mu} < 1.
\end{equation*}
\end{corollary}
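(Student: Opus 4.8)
The plan is to begin from the contraction factor $\kappa = (1 + c_\mathcal{T})\big(\sqrt{\alpha\beta\eta^2 - 2\alpha\eta + 1} + \sqrt{1 - \alpha_0^2}\big)$ established in Theorem~\ref{theorem:theorem_01}, and to simplify its two square-root terms separately under the stated choices $\eta = 1/\beta$, $Pr(\xi_t) = 1/n$, and $c_\mathcal{H} \to 1^-$. First I would handle the first term: the inner quadratic $g(\eta) = \alpha\beta\eta^2 - 2\alpha\eta + 1$ has derivative $2\alpha\beta\eta - 2\alpha$, which vanishes at $\eta = 1/\beta$ (the minimizer, which also justifies calling this the optimal step size), and substituting gives $g(1/\beta) = \mu - 2\mu + 1 = 1 - \mu$ with $\mu = \alpha/\beta$. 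Hence the first term collapses to $\sqrt{1 - \mu}$.

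Next I would optimize $\alpha_0(\tau) = c_\mathcal{H}\alpha\tau - \sqrt{q(\tau)}$, where $q(\tau) = \alpha\beta\tau^2 - 2\alpha\tau + 1$, over the free parameter $\tau \in (0, 2/\beta)$, since making $\alpha_0$ as large as possible makes $\sqrt{1 - \alpha_0^2}$ as small as possible. I would first confirm concavity of $\alpha_0$ by checking $2 q''(\tau)\, q(\tau) - q'(\tau)^2 = 4\alpha(\beta - \alpha) \ge 0$, which shows $\sqrt{q}$ is convex and thus $-\sqrt{q}$ concave. Setting $\alpha_0'(\tau) = c_\mathcal{H}\alpha - \alpha(\beta\tau - 1)/\sqrt{q(\tau)} = 0$ gives the stationarity relation $c_\mathcal{H}\sqrt{q(\tau)} = \beta\tau - 1$; squaring and solving the resulting quadratic (whose discriminant factors cleanly into $c_\mathcal{H}^2(\beta - \alpha)$) yields the unique interior maximizer $\tau^* = \tfrac{1}{\beta}\big(1 + c_\mathcal{H}\sqrt{(\beta - \alpha)/(\beta - c_\mathcal{H}^2\alpha)}\big)$ quoted in the text, with the branch fixed by the sign requirement $\beta\tau^* - 1 > 0$.

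The key computation is then to evaluate $\alpha_0(\tau^*)$ and pass to the limit. Writing $\rho := \sqrt{(\beta - \alpha)/(\beta - c_\mathcal{H}^2\alpha)}$, the stationarity relation gives $\sqrt{q(\tau^*)} = (\beta\tau^* - 1)/c_\mathcal{H} = \rho$, and since $\alpha\tau^* = \mu(1 + c_\mathcal{H}\rho)$ I would substitute directly to obtain the closed form $\alpha_0(\tau^*) = c_\mathcal{H}\mu + \rho(c_\mathcal{H}^2\mu - 1)$. Taking $c_\mathcal{H} \to 1^-$ sends $\rho \to 1$ and hence $\alpha_0 \to 2\mu - 1$, so $1 - \alpha_0^2 \to 4\mu(1 - \mu)$ and $\sqrt{1 - \alpha_0^2} \to 2\sqrt{\mu}\,\sqrt{1 - \mu}$. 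Combining the two simplified terms gives $\kappa \to (1 + c_\mathcal{T})\big(\sqrt{1 - \mu} + 2\sqrt{\mu}\,\sqrt{1 - \mu}\big) = (1 + c_\mathcal{T})(1 + 2\sqrt{\mu})\sqrt{1 - \mu}$, and imposing $\kappa < 1$ is exactly the claimed condition.

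I expect the main obstacle to be the $\tau$-optimization together with the limiting argument, rather than the two algebraic substitutions. As $c_\mathcal{H} \to 1^-$ the maximizer $\tau^*$ drifts toward the boundary $2/\beta$ of the admissible interval $(0, 2/\beta)$, so some care is needed to confirm that $\tau^*$ remains strictly interior for every $c_\mathcal{H} < 1$ and that the limit is approached from inside that interval. The concavity check and the sign condition $\beta\tau^* - 1 > 0$ (which legitimizes the squaring step when solving for $\tau^*$) are the two technical points that turn the otherwise routine calculus into a rigorous derivation.
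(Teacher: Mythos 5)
Your proposal is correct and follows essentially the same route as the paper's own proof: minimize the first square-root term at $\eta = 1/\beta$ to get $\sqrt{1-\mu}$, maximize the concave function $\alpha_0(\tau)$ over $\tau \in (0,2/\beta)$ to obtain $\tau^* = \frac{1}{\beta}\bigl(1 + c_\mathcal{H}\sqrt{(\beta-\alpha)/(\beta - c_\mathcal{H}^2\alpha)}\bigr)$, pass to the limit $c_\mathcal{H}\to 1^-$ so that $\alpha_0 \to 2\mu - 1$ and $\sqrt{1-\alpha_0^2}\to 2\sqrt{\mu}\sqrt{1-\mu}$, and combine. Your minor refinements --- rejecting the second root of the squared stationarity equation via the sign condition $\beta\tau^* - 1 > 0$ rather than by evaluating its limit (the paper keeps both roots and discards $\tau_2$ because $\alpha_0(\tau_2)\to -1$), and verifying that $\tau^*$ stays strictly interior for every $c_\mathcal{H} < 1$ --- are sound and, if anything, slightly more careful than the paper's treatment, but do not constitute a different argument.
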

\begin{proof}
To analyze $\kappa$, we write $\kappa$ as a function of $\eta$, 
i.e., $\kappa(\eta) = (1 + c_\mathcal{T}) (\sqrt{ \alpha\beta\eta^2 
- 2 \alpha \eta + 1}  + \sqrt{1-\alpha_0^2})$. 
We claim that $\kappa(\eta)$ takes its minimum at 
$\eta = 1/\beta$. To see this, we have 
\begin{align*}
\kappa(\eta) &= (1 + c_\mathcal{T}) \Big(\sqrt{\alpha\beta(\eta -1/\beta 
)^2 + 1 - \mu}  + \sqrt{1-\alpha_0^2}\Big) \\
&\geq (1 + c_\mathcal{T}) \Big(\sqrt{ 1 - \mu}  + 
\sqrt{1-\alpha_0^2}\Big).
\end{align*}
Therefore, when $\eta = 1/\beta$, $\kappa(\eta)$ takes 
its minimum. Recall $\alpha_0 = c_\mathcal{H} \alpha \tau - 
\sqrt{\alpha\beta\tau^2 - 2\alpha\tau + 1}$, $\tau \in (0,2/\beta)$. Again, 
we redefine $\alpha_0$ as a function of $\tau$
\begin{equation*}
\alpha_0(\tau) = c_\mathcal{H} \alpha \tau - 
\sqrt{\alpha\beta\tau^2 - 2\alpha\tau + 1}, \tau \in (0, 2/\beta).
\end{equation*}
$\alpha_0(\tau)$ is a concave function with respect to $\tau$. 
To see this, the first and second derivative of $\alpha_0(\tau)$ are
\begin{align*}    
\alpha_0{'}(\tau) &= c_\mathcal{H} \alpha - \frac{\alpha\beta\tau - 
\alpha}{\sqrt{\alpha\beta\tau^2 - 2\alpha\tau + 1}},\quad\quad\quad\quad
 \alpha_0{''}(\tau) =  \Big(\alpha\beta\tau^2 - 2\alpha\tau + 1 
 \Big)^{-3/2} \Big( \alpha^2 - \alpha\beta\Big).
\end{align*}
Hence, $\alpha_0^-{''}(\tau) < 0$ for $\tau \in (0,2/\beta)$. To get the 
maximum of $\alpha_0(\tau)$, let ${\alpha_0}'(\tau) = 0$,
 and solve the following equation
\begin{equation*}
 c_\mathcal{H} \alpha = \frac{\alpha\beta \tau - 
 \alpha}{\sqrt{\alpha\beta\tau^2 - 2\alpha\tau + 1}}.
\end{equation*}
We get two solutions 
\begin{equation*}
\tau_{1} = \frac{1}{\beta} \Bigg(1 + c_\mathcal{H} 
\sqrt{\frac{\beta - \alpha}{\beta - 
c_\mathcal{H}^2\alpha}} \Bigg),\quad\quad\quad\quad 
\tau_{2} = \frac{1}{\beta} \Bigg(1 - c_\mathcal{H} 
\sqrt{\frac{\beta - \alpha}{\beta - c_\mathcal{H}^2\alpha}} \Bigg).
\end{equation*}
Since we assume that the approximation factor $c_\mathcal{H}$ 
of head-oracle can be boosted to any arbitrary constant value 
close to $1$, i.e., $c_\mathcal{H} \rightarrow 1^-$. 
Therefore, 
\begin{equation*}
\lim_{c_\mathcal{H}\rightarrow 1^-} \alpha_0(\tau_1)=2\mu-1,
\lim_{c_\mathcal{H}\rightarrow 1^-} \alpha_0(\tau_2)=-1.
\end{equation*}
As we require $\alpha_0 >0$, we only take $\alpha_0(\tau_1)$. Finally,
the upper bound of $\kappa$ can be simplified as 
\begin{align*}
\lim_{c_\mathcal{H}\rightarrow 1^{-}} \kappa &= 
(1 + c_\mathcal{T}) 
\Bigg( \sqrt{ 1 - \mu} + \sqrt{ 1 - \lim_{c_\mathcal{H}\rightarrow 1^{-}} \alpha_0(\tau_1)^2} \Bigg) \\
&= (1+c_\mathcal{T})\Big(\sqrt{1- \mu} + \sqrt{1-(2\mu - 1)^2 } \Big) \\
&= (1+c_\mathcal{T}) \Big(\sqrt{1- \mu} + 2\sqrt{\mu-\mu^2} \Big) \\
&= (1+c_\mathcal{T}) (1+2\sqrt{\mu}) \sqrt{1- \mu} < 1.
\end{align*}
Hence we prove the corollary.
\end{proof}

\begin{corollary}
If $F({\bm x})$ in ~(\ref{objective:least_square_model}) satisfies the $\alpha$-RSC property and each $f_i ({\bm x})=\frac{n}{2m}\|{\bm A}_{B_i} {\bm x} - {\bm y}_{B_i}\|^2$ satisfies the $\beta$-RSS property, then we have the following condition
\begin{equation}
\frac{\alpha}{2} \|{\bm x}\|^2 \leq \frac{1}{2m}\| 
{\bm A}{\bm x} \|^2 \leq \frac{\beta}{2} \|{\bm x}\|^2. \nonumber
\end{equation}
Let the strong convexity parameter $\alpha = 1-\delta$ and strong smoothness parameter $\beta=1+\delta$, where $0<\delta <1$. The condition of effective contraction factor is as the following
\begin{equation*}
(1+c_\mathcal{T})\Big(\sqrt{\frac{2}{1+\delta}} + \frac{2\sqrt{2(1-\delta)}}{1+\delta}\Big)\sqrt{\delta} < 1.
\end{equation*} To obtain the condition of getting effective contraction factor, we need to have $\delta = \frac{1-\mu}{1+\mu} \leq 7/493 \approx 0.014199$.
\label{corollary:corollary_04}
\end{corollary}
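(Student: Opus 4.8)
The plan is to prove the corollary in three stages: reduce the RSC/RSS hypotheses to a restricted isometry (RIP) statement, specialize the contraction factor of Theorem~\ref{theorem:theorem_01} to the choice $\alpha = 1-\delta$, $\beta = 1+\delta$, and finally extract the numerical threshold on $\delta$.

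First I would verify the RIP inequality. Since $F(\bm{x}) = \frac{1}{2m}\|\bm{A}\bm{x} - \bm{y}\|^2$ is quadratic, its Bregman divergence collapses to the pure second-order term: a short computation in which the first-order pieces cancel gives $B_F(\bm{u},\bm{v}) = \frac{1}{2m}\|\bm{A}(\bm{u}-\bm{v})\|^2$. Substituting this into Definition~\ref{definition_1} and writing $\bm{x} = \bm{u}-\bm{v}$ converts the $\alpha$-RSC lower bound and the smoothness upper bound directly into $\frac{\alpha}{2}\|\bm{x}\|^2 \leq \frac{1}{2m}\|\bm{A}\bm{x}\|^2 \leq \frac{\beta}{2}\|\bm{x}\|^2$; the per-block bounds on each $f_i$ follow identically with $\bm{A}_{B_i}$ in place of $\bm{A}$. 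This stage is routine.

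Next I would specialize the contraction factor. Taking $\eta = 1/\beta$ and boosting $c_\mathcal{H}\to 1^-$, exactly as in the preceding corollary, collapses $\kappa$ of Theorem~\ref{theorem:theorem_01} to $(1+c_\mathcal{T})(1+2\sqrt{\mu})\sqrt{1-\mu}$ with $\mu = \alpha/\beta$. Substituting $\alpha = 1-\delta$ and $\beta = 1+\delta$ gives $\mu = \frac{1-\delta}{1+\delta}$ and $1-\mu = \frac{2\delta}{1+\delta}$, hence $\sqrt{1-\mu} = \sqrt{\frac{2}{1+\delta}}\,\sqrt{\delta}$ and $2\sqrt{\mu}\,\sqrt{1-\mu} = \frac{2\sqrt{2(1-\delta)}}{1+\delta}\,\sqrt{\delta}$. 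Factoring $\sqrt{\delta}$ out of both summands recovers the displayed condition $(1+c_\mathcal{T})\big(\sqrt{2/(1+\delta)} + 2\sqrt{2(1-\delta)}/(1+\delta)\big)\sqrt{\delta} < 1$.

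For the threshold I would use that $c_\mathcal{T}\geq 1$, so the most favourable case $c_\mathcal{T}=1$ reduces the requirement to $2(1+2\sqrt{\mu})\sqrt{1-\mu} < 1$. Setting $g(\mu) := (1+2\sqrt{\mu})\sqrt{1-\mu}$, I would show $g'(\mu) < 0$ for $\mu$ near $1$---its derivative changes sign only at the root of $2-4\mu = \sqrt{\mu}$, around $\mu \approx 0.35$---so $g$ is decreasing on the relevant range and $\kappa < 1$ holds on an interval $\mu \geq \mu_0$, equivalently $\delta \leq \delta_0$ via $\delta = \frac{1-\mu}{1+\mu}$. Evaluating at the convenient rational value $\mu = 243/250$ gives $2\,g(243/250) \approx 0.9946 < 1$, and monotonicity then certifies $\kappa < 1$ for every $\mu \geq 243/250$, i.e.\ $\delta \leq 7/493 \approx 0.014199$. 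The hard part is precisely this last stage: $\kappa = 1$ has no clean closed-form root, so the threshold must be pinned down by testing a tractable fraction for $\mu$ and letting the monotonicity of $g$ extend the strict inequality across the whole interval.
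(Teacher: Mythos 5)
Your proposal is correct and follows essentially the same route as the paper's proof: reduce the RSC/RSS hypotheses to the RIP-type bound (your observation that the Bregman divergence of the quadratic loss collapses to $\frac{1}{2m}\|{\bm A}({\bm u}-{\bm v})\|^2$ is a cleaner packaging of the paper's gradient/Hessian computation, with the full-matrix upper bound obtained by averaging the per-block RSS bounds exactly as the paper does), then invoke $\kappa=(1+c_\mathcal{T})(1+2\sqrt{\mu})\sqrt{1-\mu}$ from the preceding corollary and substitute $\mu=(1-\delta)/(1+\delta)$. The one place you go beyond the paper is the final threshold: the paper merely asserts that $\mu\geq 243/250$ together with $c_\mathcal{T}$ close to $1$ gives $\kappa<1$, whereas your monotonicity argument for $g(\mu)=(1+2\sqrt{\mu})\sqrt{1-\mu}$ (decreasing once $2-4\mu<\sqrt{\mu}$, i.e.\ for $\mu$ above roughly $0.35$) plus the evaluation $2\,g(243/250)\approx 0.9946<1$ supplies exactly the justification the paper leaves implicit.
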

\begin{proof}
The derivative of $f_i({\bm x})$ is
$ \nabla f_i ({\bm x}) = \frac{n}{m} {\bm A}_{B_i}^\mathsf{T} ({\bm A}_{B_i} {\bm x} - {\bm y}_{B_i}) $. $\| \nabla f_i({\bm x}) - \nabla f_i ({\bm y})\|$ can be upper bounded as
\begin{align*}
\| \nabla f_i({\bm x}) - \nabla f_i ({\bm y})\| &= \frac{n}{m} \| {\bm A}_{B_i}^\mathsf{T} {\bm A}_{B_i} ({\bm x} - {\bm y}) \| \leq \beta \| {\bm x} -{\bm y} \|,
\end{align*}
where the inequality follows by $\beta$-RSS of $f_i({\bm x})$, i.e. $\| \nabla f_i({\bm x})  - \nabla f_i({\bm y}) \| \leq \beta\| {\bm x} -{\bm y}\|$. Sum above $n$ inequalities together and divided by $2/n$,  then it is equivalent to say that $F ({\bm x})$ satisfies $\frac{1}{2m} \| {\bm A} {\bm x} \|^2 \leq \frac{\beta}{2}\|{\bm x} \|^2$, where ${\bm x} \in \mathcal{M}(\mathbb{M})$. In order to satisfy $\alpha$-RSC, the Hessian matrix $\frac{1}{m}{\bm A}^\mathsf{T} {\bm A}$ also needs to satisfy
\begin{equation*}
\frac{\alpha}{2} \| {\bm x} \|^2 \leq \frac{1}{2} {\bm x}^\mathsf{T} \frac{{\bm A}^\mathsf{T} {\bm A}}{m} {\bm x}.
\end{equation*}
Combine above inequalities, we prove the condition. Therefore, under the RIP requirement, $F({\bm x})$ is $(1-\delta)$-RSC and $(1+\delta)$-RSS. In order to satisfy Theorem~\ref{theorem:theorem_01}, we require that
\begin{equation}
\kappa = (1+c_T)(1+2\sqrt{\mu})\sqrt{1-\mu} < 1. \nonumber
\end{equation}
By replacing $\mu = \frac{1-\delta}{1+\delta}$, we prove the condition of the effective contraction factor. If $\mu \geq \frac{243}{250}$ and $c_\mathcal{T}$ is enough close to 1, then $\kappa < 1$. Finally, $\delta = \frac{1-\mu}{1+\mu} \leq 7/493 \approx 0.014199$.
\end{proof}

\begin{corollary}
Suppose we have the logistic loss, $F({\bm x})$ in~(\ref{objective:logistic_regression}) and each sample ${\bm a}_i$ is normalized, i.e., $\| {\bm a}_i \| = 1$. Then $F({\bm x})$ satisfies $\lambda$-RSC and each $f_{i} ({\bm x})$ in~(\ref{objective:logistic_regression_i}) satisfies $(\alpha + (1+\nu) \theta_{max})$-RSS. The condition of getting effective contraction factor of \textsc{GraphStoIHT} is
as the following
\begin{align*}
\frac{\lambda}{\lambda + n(1+\nu)\theta_{max} / 4m} \geq \frac{243}{250},
\end{align*}
with probability $1-p \exp{(-\theta_{max}\nu/4)}$, where $\theta_{max} = \lambda_{max} (\sum_{j=1}^{m/n} \mathbb{E}[{\bm a}_{i_j}{\bm a}_{i_j}^\mathsf{T}])$ and $\nu \geq 1$.
\label{corollary:corollary2}
\end{corollary}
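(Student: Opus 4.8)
The plan is to pin down the RSC and RSS constants of the logistic objective explicitly, and then substitute the resulting inverse condition number $\mu = \alpha/\beta$ into the effective-contraction condition $\mu \geq 243/250$ already derived in Corollary~\ref{corollary:corollary_04}. Write $\ell(z) = \log(1 + e^{-z})$ so that each $f_i$ in~(\ref{objective:logistic_regression_i}) decomposes into a logistic part $\frac{n}{m}\sum_{j=1}^{m/n}\ell(y_{i_j}{\bm a}_{i_j}^\mathsf{T}{\bm x})$ plus the quadratic $\frac{\lambda}{2}\|{\bm x}\|^2$. Since $\ell$ is convex, the logistic part is convex, and averaging over $i$ preserves the quadratic term; hence $\nabla^2 F({\bm x}) \succeq \lambda{\bm I}$ on the relevant subspace, which gives the $\lambda$-RSC claim with $\alpha = \lambda$.

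For the RSS bound I would compute the Hessian of a single $f_i$. Using $y_{i_j}^2 = 1$ and the elementary fact $\ell''(z) = e^{z}/(1+e^{z})^2 \leq 1/4$ for all $z$, the logistic part of $\nabla^2 f_i$ is dominated by $\frac{n}{4m}\sum_{j=1}^{m/n}{\bm a}_{i_j}{\bm a}_{i_j}^\mathsf{T} = \frac{n}{4m}{\bm A}_{B_i}^\mathsf{T}{\bm A}_{B_i}$, so that $\nabla^2 f_i({\bm x}) \preceq \bigl(\lambda + \tfrac{n}{4m}\|{\bm A}_{B_i}^\mathsf{T}{\bm A}_{B_i}\|_2\bigr){\bm I}$. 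Because a spectral-norm bound controls the quadratic form on every subspace, this yields the $\beta$-RSS property on $\mathcal{M}(\mathbb{M}\oplus\mathbb{M}_\mathcal{H}\oplus\mathbb{M}_\mathcal{T})$ with $\beta = \lambda + \frac{n}{4m}\|{\bm A}_{B_i}^\mathsf{T}{\bm A}_{B_i}\|_2$.

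The only quantity still uncontrolled is the random spectral norm $\|{\bm A}_{B_i}^\mathsf{T}{\bm A}_{B_i}\|_2 = \|\sum_{j}{\bm a}_{i_j}{\bm a}_{i_j}^\mathsf{T}\|_2$. Here I would invoke Lemma~\ref{lemma:lemma6}: since each ${\bm a}_{i_j}$ is normalized, the matrix Chernoff bound gives $\|{\bm A}_{B_i}^\mathsf{T}{\bm A}_{B_i}\|_2 < (1+\nu)\theta_{max}$ with probability at least $1 - p\exp(-\theta_{max}\nu/4)$ for $\nu \geq 1$. On this event we therefore obtain $\beta = \lambda + \frac{n(1+\nu)\theta_{max}}{4m}$, and hence $\mu = \alpha/\beta = \lambda / \bigl(\lambda + n(1+\nu)\theta_{max}/4m\bigr)$.

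To finish, I would substitute this $\mu$ into the contraction condition of Corollary~\ref{corollary:corollary_04}, namely $\kappa = (1+c_\mathcal{T})(1+2\sqrt{\mu})\sqrt{1-\mu} < 1$ with $c_\mathcal{H}\to 1^-$; the threshold $\mu \geq 243/250$ derived there translates verbatim into the stated inequality $\frac{\lambda}{\lambda + n(1+\nu)\theta_{max}/4m} \geq \frac{243}{250}$, which now holds on the high-probability event above. I expect the main obstacle to be the probabilistic control of the data-dependent RSS constant: unlike the least-squares case where the RIP is assumed deterministically, here $\beta$ is itself a random quantity, so the argument only holds with high probability, and the matrix-Chernoff step (Lemma~\ref{lemma:lemma6}) together with the bookkeeping of the failure probability $p\exp(-\theta_{max}\nu/4)$ is the crux. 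By comparison, the scalar bound $\ell'' \le 1/4$ and the passage from a full-space spectral bound to the restricted (subspace) RSS constant are routine.
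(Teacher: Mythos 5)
Your proposal is correct and follows essentially the same route as the paper: $\lambda$-RSC from the ridge term, the RSS constant $\lambda + \frac{n}{4m}\|{\bm A}_{B_i}^\mathsf{T}{\bm A}_{B_i}\|_2$ from the $1/4$-bounded curvature of the logistic loss, Lemma~\ref{lemma:lemma6} (matrix Chernoff) to control the random spectral norm, and substitution of $\mu = \lambda/(\lambda + n(1+\nu)\theta_{max}/4m)$ into the contraction condition $(1+c_\mathcal{T})(1+2\sqrt{\mu})\sqrt{1-\mu}<1$ yielding $\mu \geq 243/250$. The only cosmetic difference is that you bound the Hessian directly via $\ell''(z)\leq 1/4$, whereas the paper derives the same RSS constant by a mean-value-theorem argument on the gradient difference (after computing the same Hessian with the $\mathrm{sech}^2$ factor); these are equivalent.
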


\begin{proof}
The derivative and  the Hessian of $f_i({\bm x})$ are
\begin{equation}
\nabla f_i({\bm x}) = \frac{n}{m} \sum_{j=1}^{m/n} \Bigg( - \frac{y_{i_j} {\bm a}_{i_j} }{ 1 + \exp{(y_{i_j} {\bm a}_{i_j}^\mathsf{T} {\bm x}}) } + \lambda {\bm x} \Bigg),
\quad\quad
{\bm H}({\bm x}) = \frac{n}{4m}{\bm A}_{B_i}^\mathsf{T} {\bm Q} {\bm A}_{B_i} + \lambda {\bm I},
\label{corollary:2:inequ1}
\end{equation}
where ${\bm Q}$ is an $b \times b$ diagonal matrix whose diagonal entries are ${\bm Q}_{i_ji_j} = sech^2(\frac{1}{2} y_{i_j} {\bm a}_{i_j}^\mathsf{T}{\bm x})$ and $sech$ is defined as $sech(x) = \frac{2}{\exp{(x)} + \exp{(-x)}}$. To see this, the Hessian of $f_i({\bm x})$ can be calculated as
\begin{align}
{\bm H}({\bm x}) &= \frac{n}{m} \sum_{j=1}^{m/n} \Bigg( \frac{y_{i_j}^2 \exp{(y_{i_j} {\bm a}_{i_j}^\mathsf{T} {\bm x})} {\bm a}_{i_j} {\bm a}_{i_j}^\mathsf{T} }{ (1 + \exp{(y_{i_j} {\bm a}_{i_j}^\mathsf{T} {\bm x})})^2 } + \lambda {\bm I}\Bigg) \nonumber \\
&= \frac{n}{m} \sum_{j=1}^{m/n} \Bigg( \frac{{\bm a}_{i_j} {\bm a}_{i_j}^\mathsf{T} }{ (1 + \exp{(-y_{i_j} {\bm a}_{i_j}^\mathsf{T} {\bm x})})(1 + \exp{(y_{i_j} {\bm a}_{i_j}^\mathsf{T} {\bm x})}) } + \lambda {\bm I}\Bigg) \nonumber \\
&= \frac{n}{m} \sum_{j=1}^{m/n} \Bigg( \frac{{\bm a}_{i_j} {\bm a}_{i_j}^\mathsf{T} }{ (\exp{(\frac{1}{2} y_{i_j} {\bm a}_{i_j}^\mathsf{T} {\bm x})} + \exp{(- \frac{1}{2} y_{i_j} {\bm a}_{i_j}^\mathsf{T} {\bm x})})^2 } + \lambda {\bm I}\Bigg) \nonumber \\
&= \frac{n}{4m} \sum_{j=1}^{m/n} \Bigg( {\bm a}_{i_j} \Big( \frac{2}{ \exp{(\frac{1}{2} y_{i_j} {\bm a}_{i_j}^\mathsf{T} {\bm x})} + \exp{(- \frac{1}{2} y_{i_j} {\bm a}_{i_j}^\mathsf{T} {\bm x})} }\Big)^2 {\bm a}_{i_j}^\mathsf{T}  + 4 \lambda {\bm I}\Bigg) \nonumber \\
&= \frac{n}{4m}{\bm A}_{B_i}^\mathsf{T} {\bm Q} {\bm A}_{B_i} + \lambda {\bm I}.\nonumber
\end{align}
Since ${\bm Q}_{i_ji_j} \leq 1$ and ${\bm A}_{{\bm B}_i}^\mathsf{T} {\bm Q} {\bm A}_{B_i} $ is symmetric matrix, we have
\begin{equation}
\lambda {\bm I} \preceq {\bm H}({\bm x}) \preceq \lambda {\bm I} + \frac{n}{4m} {\bm A}_{B_i}^\mathsf{T} {\bm A}_{{\bm B}_i},
\end{equation}
where $0 \preceq {\bm M}$ means ${\bm M}$ is a positive semi-definite matrix. Hence, $F({\bm x})$ is $\lambda$-RSC. Given the derivative in~(\ref{corollary:2:inequ1}), we aim at getting the upper bound of $\| \nabla f_i ({\bm x}) - \nabla f_i ({\bm y}) \|$ as follows
\begin{align*}
\| \nabla f_i ({\bm x}) - \nabla f_i ({\bm y}) \| &= \Big\| \lambda ({\bm x} - {\bm y}) + \frac{n}{m} \sum_{j=1}^{m/n} y_{i_j} {\bm a}_{i_j} (\frac{1}{1 + \exp {(y_{i_j} {\bm a}_{i_j}^\mathsf{T}{\bm y})}} - \frac{1}{1 + \exp{(y_{i_j} 
{\bm a}_{i_j}^\mathsf{T}{\bm x})}})\Big\|\\
&= \Big\| \lambda ({\bm x} -{\bm y}) - \frac{n}{m} \sum_{j=1}^{m/n} \frac{  y_{i_j}^2   \exp{(y_{i_j} {\bm a}_{i_j}^\mathsf{T} {\bm \theta})} }{ ( 1 + \exp{(y_{i_j} {\bm a}_{i_j}^\mathsf{T}{\bm \theta})} )^2 } {\bm a}_{i_j} {\bm a}_{i_j}^\mathsf{T} ({\bm x} -{\bm y}) \Big\| \\
&= \Big\| \Big( \lambda {\bm I} - \frac{n}{m} \sum_{j=1}^{m/n} \frac{ \exp{(y_{i_j} {\bm a}_{i_j}^\mathsf{T} {\bm \theta})} }{ ( 1 + \exp{(y_{i_j} {\bm a}_{i_j}^\mathsf{T}{\bm \theta})} )^2 } {\bm a}_{i_j} {\bm a}_{i_j}^\mathsf{T} \Big) ({\bm x} -{\bm y}) \Big\| \\
&= \Big\| \Big( \lambda {\bm I} - \frac{n}{4m}{\bm A}_{B_i}^\mathsf{T} {\bm Q} {\bm A}_{B_i} \Big) ({\bm x} -{\bm y}) \Big\|,
\end{align*}
where the first equality is from the derivative of $f_i ({\bm x})$ and $f_i ({\bm y})$ , the second equality follows by using the mean-value theorem for $\phi({\bm x}) = 1/( 1 + \exp{( y_i {\bm a }_i^\mathsf{T} {\bm x})})$, i.e. $\phi ({\bm y}) - \phi ({\bm x}) = \nabla \phi ({\bm \theta}) ({\bm y} - {\bm x})$, ${\bm \theta} \in \{ (1-t) {\bm x} + t{\bm y} : 0 \leq t \leq 1\}$, and the third equality follows by $y_i^2 = 1$. 

We define $\| {\bm M} \|_2$ as a matrix norm, i.e., $\|{\bm M} \|_2 = \sup \{ \|{\bm M}{\bm x}\|/\|{\bm x}\| : {\bm x} \in \mathbb{R}^p \text{ with } {\bm x}\ne {\bm 0} \}$, then we always have $\| {\bm M} {\bm x} \| \leq \| {\bm M} \|_2 \cdot \| {\bm x} \|$. Hence, $\| \nabla f_i ({\bm x}) - \nabla f_i ({\bm y})\|$ can be upper bounded as
\begin{align*}
\| \nabla f_i ({\bm x}) - \nabla f_i ({\bm y}) \|  &\leq \Big\| \lambda {\bm I} - \frac{n}{4m}{\bm A}_{B_i}^\mathsf{T} {\bm Q} {\bm A}_{B_i}  \Big\|_2 \| {\bm x} -{\bm y} \|  \\
&\leq  \Big( \lambda + \frac{n}{4m} \Big\|  {\bm A}_{B_i}^\mathsf{T} {\bm Q} {\bm A}_{B_i} \Big\|_2 \Big) \| {\bm x} -{\bm y} \| \\
&\leq  \Big( \lambda + \frac{n}{4m} \Big\|  {\bm A}_{B_i}^\mathsf{T} {\bm A}_{B_i} \Big\|_2 \Big) \| {\bm x} -{\bm y} \|,
\end{align*}
where the second inequality follows by the triangle inequality of matrix norm, i.e., $\|{\bm A} + {\bm B}\|_2 \leq \|{\bm A} \|_2 + \|{\bm B} \|_2$. Therefore, $f_i ({\bm x})$ satisfies $(\lambda,\lambda+ \frac{n}{4m}\|  {\bm A}_{B_i}^\mathsf{T} {\bm A}_{B_i} \|_2,\mathcal{M}(\mathbb{M}))$-RSS/RSC property. Furthermore, suppose the data is normalized, i.e., $\|{\bm a}_i {\bm a}_i^\mathsf{T} \|_2 = \| {\bm a}_i \|^2 = 1$, by using Lemma~\ref{lemma:lemma6}, we can bound $\|{\bm A}_{B_i}^\mathsf{T} {\bm A}_{B_i}\|_2$ as $\|{\bm A}_{B_i}^\mathsf{T} {\bm A}_{B_i}\|_2 \leq (1+\nu)\theta_{max}$, where $\nu \geq 1$ and $\theta_{\max}$ is defined above. In order to satisfy Theorem~\ref{theorem:theorem_01}, we require
\begin{align*}
\kappa = (1+c_\mathcal{T}) (1+2\sqrt{\mu}) (\sqrt{1- \mu}) < 1,
\end{align*}
where the inverse condition number of logistic regression is $\mu = \frac{\lambda}{\lambda + n(1+\nu)\theta_{max} / 4m}$. After calculation, $\mu \geq \frac{243}{250}$. We prove the corollary.
\end{proof}

\section{Results from benchmark datasets}
\begin{figure}[H]
\centering
{\includegraphics[width=16.0cm,height=8.0cm]{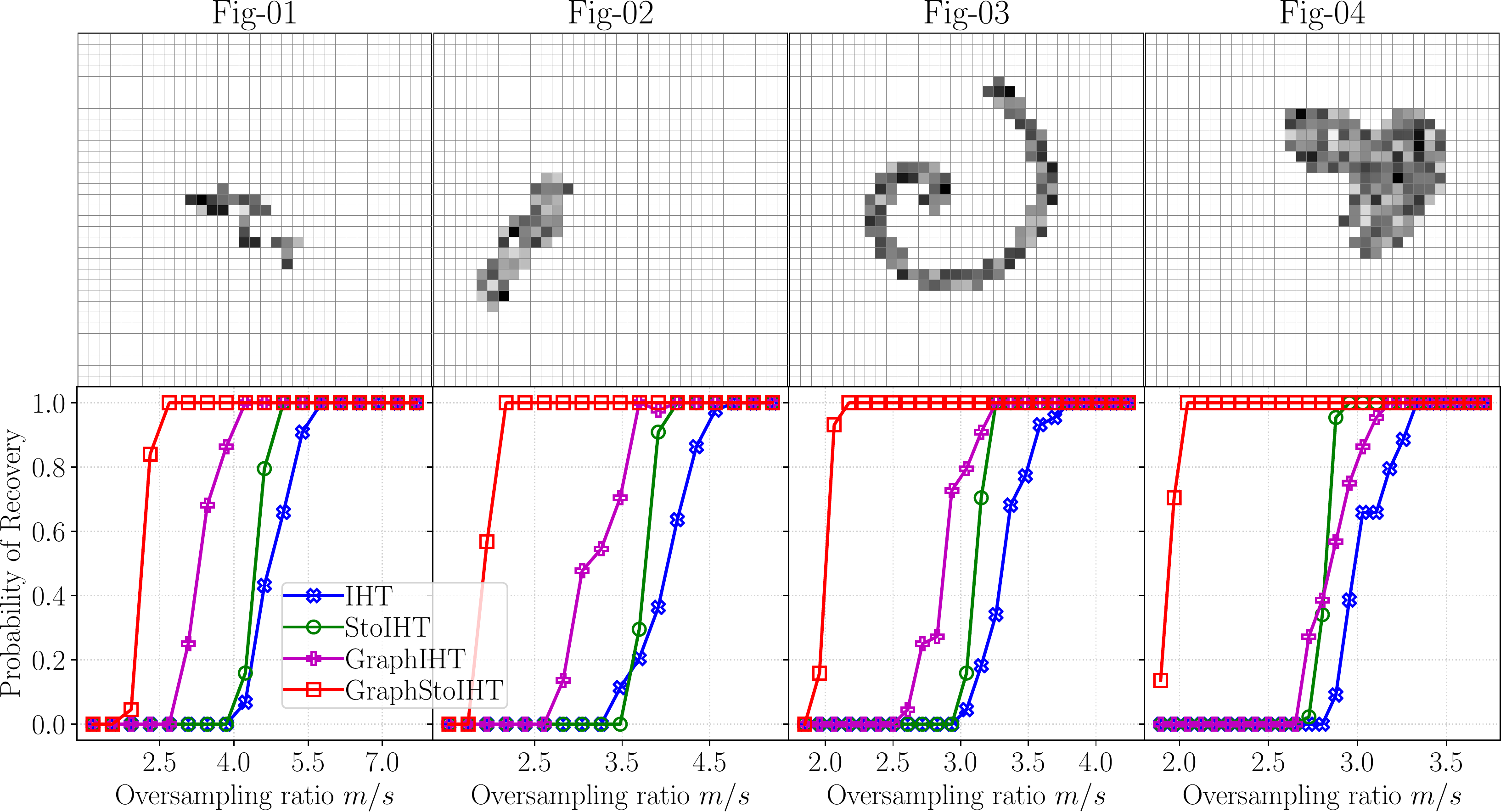}}
\caption{The probability of recovery as a function of oversampling ratio. The oversampling ratio is defined as the number of observations $m$ divided by sparsity $s$, i.e., $m/s$. These four public benchmark graphs (a), (b), (c), and (d)  in the upper row are from~\citet{arias2011detection}.}
\label{fig:simu_figs_re_05}
\end{figure}
\textbf{Implementation details of \textsc{GraphStoIHT}.} The head and tail projection of \textsc{GraphStoIHT} are implemented in C language. We exactly follow the original implementation\footnote{The original implementation is available at: \url{https://github.com/ludwigschmidt/cluster_approx}.}. The two projections are essentially two binary search algorithms. Each iteration of the binary search executes the Prize Collecting Steiner Tree (PCST) algorithm~\cite{johnson2000prize} on the target graph. Both projections have two main parameters: a lower bound sparsity $s_l$ and an upper bound  sparsity $s_u$. In all of the experiments, two sparsity parameters have been set to $s_l = p/2$ and $s_u=s_l*(1 + \omega)$ for the head projection, where the $\omega$ is the tolerance parameter set to $0.1$. For the tail projection, we set $s_l = s$ and $s_u = s_l*(1 + \omega)$. The binary search algorithm terminates when it reaches 50 maximum iterations.

In order to demonstrate \textsc{GraphStoIHT} can handle different shapes of graph structure, we consider four public benchmark graphs in~\citet{arias2011detection}. In the upper row of Figure~\ref{fig:simu_figs_re_05}, all of the subgraphs are embedded in $33\times 33$ grid graph. The sparsity of these four graphs is 26, 46, 92, and 132 respectively. All settings keep consistent with Section~\ref{section:section_5.1}. The learning rate $\eta$ and block size are tuned from $\{0.2, 0.4, 0.6, 0.8\}$ and $\{m/5, m/10\}$ on additional $100$ observations. The results are reported in the bottom row of Figure~\ref{fig:simu_figs_re_05}. It shows that when the sparsity $s$ increases, the number of observations required is also increasing. This is
consistent with the lower bound of the number of observations $\mathcal{O}(s\log(p/s))$, which increases as $s$ increases. Our method is consistent with the bound $\mathcal{O}(s\log(d(v_i)) + \log p)$ shown in Equation (5) of ~\citet{hegde2015nearly} where the weight-degree $d(v_i)$ is 4 and the budget is $s-1$. To further compare our method with \textsc{GraphCoSaMP}~\cite{hegde2015nearly}, we repeat the experiment conducted in Appendix A of~\citet{hegde2015nearly} and report the results in Figure~\ref{fig:simu_figs_re_06}. Thanks to the introduced randomness, \textsc{GraphStoIHT} outperforms \textsc{GraphCoSaMP}.

\begin{figure}[ht!]
\centering
{\includegraphics[width=14cm,height=6cm]{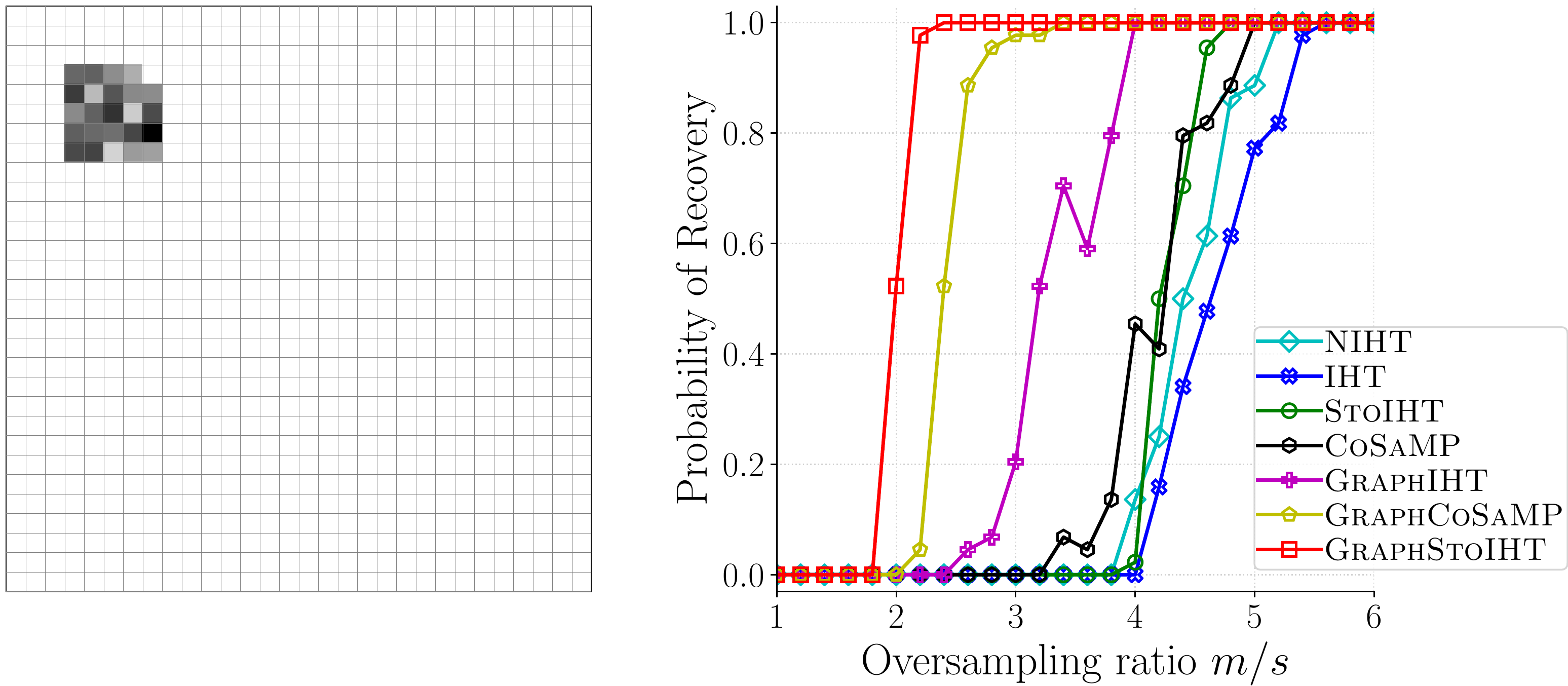}}
\caption{The probability of recovery as a function of oversampling ratio. This grid graph is from~\citet{hegde2015nearly}.}
\label{fig:simu_figs_re_06}
\end{figure}

\section{Results from breast cancer dataset}
\textbf{Parameter tuning.\quad} All parameters are selected by using 5-fold-cross-validation. The sparsity $s$ is tuned from $\{10, 15, 20, 25,\ldots, 90, 95, 100\}$. The regularization parameter $\lambda$ is tuned from $\{10^{-3}, 10^{-4}\}$. The block size $b$ is tuned from $\{m, m/2\}$. Since we need to find a connected subgraph, the number of connected component is $g=1$. The 4 non-convex methods all use backtracking line search (Armijo–Goldstein condition) to obtain the learning rate at each iteration and stop at the maximum iteration of 40. The dataset has been normalized by using $z$-score, i.e., each sample ${\bm a}_i$ is subtracted by mean and divided by the standard deviation of $\{{\bm a}_i\}_{i=1}^{295}$.

We first measure the performance by using the balanced classification error in~\citet{jacob2009group}. The results are reported in Table~\ref{table:table_03}. \textsc{GrapIHT} produces the lowest balanced classification error while the results of \textsc{GraphStoIHT} are still competitive. We argue that this may be caused by insufficient samples.

\textbf{Gene identification.\quad} We pooled 25 genes from~\citet{phaedra2009bayesian, couch2017associations,rheinbay2017recurrent, gyorffy2010online}. These genes are highly related with breast cancer, namely, MKI67, CDKN1A, ATM, TFF3, FBP1, XBP1, DSC2, CDH3, CHEK2, TOP2A, BRCA1, BRCA2, BARD1, NAT1, CA12, AR, TK2, RAD51D, GATA3, TOP2B, CCND3, CCND2, AGR2, FOXA1, and FOXC1. \textsc{GraphStoIHT} identifies more breast cancer-related genes but use less gene features (see Table~\ref{table:table_04}) than $\ell^1/\ell^2$-\textsc{Pathway} and \textsc{GraphIHT}.

\begin{table*}[ht!]
\caption{Breast cancer-related genes identified by different algorithms.}
\centering
\begin{tabular}{ccc}
\hline
\hline
Algorithm & Number of genes & Cancer related genes \\
\hline
\textsc{GraphStoIHT} & 6 & BRCA2, CCND2, CDKN1A, ATM, AR, TOP2A \\
\textsc{GraphIHT} & 5 & ATM, CDKN1A, BRCA2, AR, TOP2A \\
$\ell^1$-\textsc{Pathway} & 4 & BRCA1, CDKN1A, ATM, DSC2 \\
\textsc{StoIHT} & 4 & MKI67, NAT1, AR, TOP2A \\
\textsc{StoIHT} & 4 & MKI67, NAT1, AR, TOP2A \\
$\ell^1/\ell^2$-\textsc{Edge} & 3 & CCND3, ATM, CDH3 \\
$\ell^1$-\textsc{Edge} & 3 & CCND3, AR, CDH3 \\
$\ell^1/\ell^2$-\textsc{Pathway} & 2 & BRCA1, CDKN1A \\
\textsc{IHT} & 2 & NAT1, TOP2A \\
\hline
\end{tabular}
\label{table:table_05}
\end{table*}

\begin{table*}[ht!]
\caption{Balanced Classification Error on the breast cancer dataset.}
\centering
\scriptsize
\begin{tabular}{ccccccccccc}
\hline
Folding ID & $\ell^1$-\textsc{Pathway} & $\ell^1/\ell^2$-\textsc{Pathway} & $\ell^1$-\textsc{Edge} & $\ell^1/\ell^2$-\textsc{Edge} & \textsc{IHT} & \textsc{StoIHT} & \textsc{GraphIHT} & \textsc{GraphStoIHT} \\
\hline
Folding 00 & 0.349$\pm$0.06 & 0.349$\pm$0.07 & 0.339$\pm$0.04 & 0.358$\pm$0.06 & 0.352$\pm$0.08 & 0.350$\pm$0.08 & \textbf{0.322}$\pm$0.04 & 0.380$\pm$0.05 \\
Folding 01 & 0.419$\pm$0.09 & \textbf{0.321}$\pm$0.04 & 0.341$\pm$0.05 & 0.364$\pm$0.06 & 0.369$\pm$0.08 & 0.359$\pm$0.08 & 0.358$\pm$0.08 & 0.383$\pm$0.07 \\
Folding 02 & 0.398$\pm$0.07 & 0.314$\pm$0.05 & 0.340$\pm$0.05 & 0.315$\pm$0.05 & 0.353$\pm$0.05 & 0.348$\pm$0.07 & \textbf{0.311}$\pm$0.06 & 0.322$\pm$0.06 \\
Folding 03 & 0.396$\pm$0.03 & 0.342$\pm$0.04 & \textbf{0.336}$\pm$0.07 & 0.352$\pm$0.07 & 0.358$\pm$0.06 & 0.342$\pm$0.06 & 0.365$\pm$0.03 & 0.365$\pm$0.03 \\
Folding 04 & 0.380$\pm$0.06 & 0.366$\pm$0.06 & 0.356$\pm$0.08 & 0.358$\pm$0.05 & \textbf{0.349}$\pm$0.08 & 0.368$\pm$0.08 & 0.357$\pm$0.06 & 0.350$\pm$0.05 \\
Folding 05 & 0.418$\pm$0.04 & 0.330$\pm$0.07 & 0.386$\pm$0.11 & 0.368$\pm$0.08 & 0.353$\pm$0.04 & 0.347$\pm$0.05 & 0.354$\pm$0.09 & \textbf{0.310}$\pm$0.06 \\
Folding 06 & 0.376$\pm$0.03 & \textbf{0.307}$\pm$0.04 & 0.384$\pm$0.07 & 0.353$\pm$0.04 & 0.343$\pm$0.05 & 0.367$\pm$0.08 & 0.321$\pm$0.03 & 0.321$\pm$0.03 \\
Folding 07 & 0.394$\pm$0.04 & 0.363$\pm$0.08 & 0.360$\pm$0.06 & 0.357$\pm$0.05 & 0.352$\pm$0.06 & 0.334$\pm$0.05 & \textbf{0.327}$\pm$0.04 & 0.363$\pm$0.09 \\
Folding 08 & 0.408$\pm$0.05 & 0.336$\pm$0.05 & \textbf{0.313}$\pm$0.04 & 0.366$\pm$0.07 & 0.340$\pm$0.07 & 0.340$\pm$0.07 & 0.355$\pm$0.05 & 0.357$\pm$0.05 \\
Folding 09 & 0.366$\pm$0.05 & 0.374$\pm$0.05 & 0.394$\pm$0.03 & 0.359$\pm$0.03 & 0.343$\pm$0.07 & 0.343$\pm$0.07 & \textbf{0.335}$\pm$0.06 & 0.335$\pm$0.05 \\
Folding 10 & 0.353$\pm$0.07 & \textbf{0.313}$\pm$0.08 & 0.362$\pm$0.08 & 0.350$\pm$0.05 & 0.358$\pm$0.07 & 0.334$\pm$0.07 & 0.356$\pm$0.05 & 0.340$\pm$0.07 \\
Folding 11 & 0.410$\pm$0.06 & 0.376$\pm$0.05 & 0.370$\pm$0.08 & 0.342$\pm$0.06 & 0.369$\pm$0.05 & 0.384$\pm$0.07 & \textbf{0.329}$\pm$0.08 & 0.329$\pm$0.08 \\
Folding 12 & 0.371$\pm$0.04 & 0.355$\pm$0.07 & 0.371$\pm$0.07 & 0.377$\pm$0.04 & 0.327$\pm$0.05 & 0.331$\pm$0.07 & \textbf{0.310}$\pm$0.06 & 0.357$\pm$0.08 \\
Folding 13 & 0.405$\pm$0.06 & 0.385$\pm$0.04 & 0.383$\pm$0.04 & 0.384$\pm$0.05 & 0.362$\pm$0.06 & 0.386$\pm$0.08 & 0.355$\pm$0.04 & \textbf{0.325}$\pm$0.05 \\
Folding 14 & 0.417$\pm$0.04 & 0.358$\pm$0.05 & 0.374$\pm$0.03 & 0.354$\pm$0.06 & 0.360$\pm$0.05 & 0.346$\pm$0.06 & \textbf{0.297}$\pm$0.06 & 0.297$\pm$0.06 \\
Folding 15 & 0.416$\pm$0.08 & 0.381$\pm$0.04 & 0.402$\pm$0.07 & 0.398$\pm$0.07 & 0.332$\pm$0.09 & 0.349$\pm$0.08 & \textbf{0.329}$\pm$0.07 & 0.331$\pm$0.06 \\
Folding 16 & 0.382$\pm$0.07 & 0.357$\pm$0.07 & 0.363$\pm$0.05 & 0.345$\pm$0.07 & 0.341$\pm$0.06 & 0.334$\pm$0.07 & \textbf{0.322}$\pm$0.06 & 0.331$\pm$0.06 \\
Folding 17 & 0.385$\pm$0.08 & 0.344$\pm$0.09 & 0.347$\pm$0.08 & 0.355$\pm$0.04 & 0.331$\pm$0.10 & 0.331$\pm$0.10 & \textbf{0.290}$\pm$0.07 & 0.327$\pm$0.09 \\
Folding 18 & 0.401$\pm$0.04 & 0.309$\pm$0.06 & 0.382$\pm$0.04 & 0.372$\pm$0.02 & 0.341$\pm$0.05 & 0.394$\pm$0.08 & \textbf{0.302}$\pm$0.03 & 0.318$\pm$0.03 \\
Folding 19 & 0.391$\pm$0.03 & \textbf{0.341}$\pm$0.07 & 0.377$\pm$0.03 & 0.367$\pm$0.06 & 0.352$\pm$0.04 & 0.360$\pm$0.03 & 0.355$\pm$0.06 & 0.343$\pm$0.04 \\
\hline
Averaged  & 0.392$\pm$0.06 & 0.346$\pm$0.06 & 0.364$\pm$0.07 & 0.360$\pm$0.06 & 0.349$\pm$0.07 & 0.352$\pm$0.07 & \textbf{0.332}$\pm$0.06 & 0.339$\pm$0.07 \\
\hline
\end{tabular}
\label{table:table_03}
\end{table*}

\begin{table*}[ht!]
\caption{Number of nonzeros on the breast cancer dataset.}
\centering
\scriptsize
\begin{tabular}{ccccccccccc}
\hline
Folding ID & $\ell^1$-\textsc{Pathway} & $\ell^1/\ell^2$-\textsc{Pathway} & $\ell^1$-\textsc{Edge} & $\ell^1/\ell^2$-\textsc{Edge} & \textsc{IHT} & \textsc{StoIHT} & \textsc{GraphIHT} & \textsc{GraphStoIHT} \\
\hline
Folding 00 & 075.4$\pm$12.92 & 208.6$\pm$71.37 & 053.2$\pm$13.76 & 081.6$\pm$28.88 & 082.0$\pm$36.00 & 088.0$\pm$24.00 & 028.0$\pm$04.00 & 037.6$\pm$10.97 \\
Folding 01 & 081.6$\pm$04.13 & 113.0$\pm$27.62 & 003.8$\pm$02.14 & 059.8$\pm$39.75 & 051.0$\pm$22.00 & 057.0$\pm$22.27 & 095.4$\pm$00.80 & 066.2$\pm$36.55 \\
Folding 02 & 072.2$\pm$10.57 & 108.8$\pm$45.66 & 044.6$\pm$22.54 & 054.2$\pm$19.77 & 075.0$\pm$00.00 & 040.0$\pm$18.44 & 040.6$\pm$00.49 & 024.6$\pm$08.69 \\
Folding 03 & 068.0$\pm$21.38 & 117.8$\pm$54.22 & 041.2$\pm$12.58 & 063.6$\pm$21.84 & 083.0$\pm$06.00 & 040.0$\pm$08.94 & 051.6$\pm$08.33 & 051.6$\pm$08.33 \\
Folding 04 & 044.4$\pm$17.97 & 135.8$\pm$68.41 & 046.6$\pm$14.68 & 049.2$\pm$17.35 & 035.0$\pm$00.00 & 043.0$\pm$17.20 & 028.8$\pm$17.60 & 065.0$\pm$11.33 \\
Folding 05 & 075.6$\pm$20.37 & 121.0$\pm$50.32 & 051.6$\pm$12.97 & 069.0$\pm$15.95 & 085.0$\pm$00.00 & 071.0$\pm$28.00 & 049.4$\pm$12.21 & 032.2$\pm$12.06 \\
Folding 06 & 058.2$\pm$10.15 & 147.6$\pm$47.86 & 042.2$\pm$13.14 & 016.2$\pm$22.45 & 010.0$\pm$00.00 & 044.0$\pm$16.85 & 079.6$\pm$09.20 & 079.8$\pm$09.60 \\
Folding 07 & 040.2$\pm$14.55 & 115.8$\pm$70.68 & 022.8$\pm$09.83 & 050.6$\pm$11.99 & 040.0$\pm$00.00 & 044.0$\pm$03.74 & 098.4$\pm$04.27 & 080.4$\pm$35.45 \\
Folding 08 & 049.0$\pm$23.73 & 137.4$\pm$62.11 & 009.2$\pm$02.48 & 020.4$\pm$07.26 & 095.0$\pm$00.00 & 095.0$\pm$00.00 & 032.2$\pm$32.90 & 040.2$\pm$33.17 \\
Folding 09 & 052.0$\pm$29.54 & 135.8$\pm$34.07 & 024.8$\pm$15.84 & 057.2$\pm$26.11 & 049.0$\pm$02.00 & 049.0$\pm$02.00 & 052.0$\pm$14.00 & 047.2$\pm$04.40 \\
Folding 10 & 052.8$\pm$25.07 & 137.0$\pm$67.58 & 038.6$\pm$17.35 & 027.2$\pm$13.53 & 095.0$\pm$00.00 & 072.0$\pm$30.27 & 083.0$\pm$05.06 & 074.8$\pm$18.09 \\
Folding 11 & 063.8$\pm$14.69 & 202.2$\pm$40.56 & 055.4$\pm$13.88 & 043.8$\pm$32.08 & 044.0$\pm$02.00 & 037.0$\pm$09.27 & 098.4$\pm$09.60 & 098.4$\pm$09.60 \\
Folding 12 & 057.2$\pm$23.96 & 217.4$\pm$28.99 & 040.8$\pm$15.59 & 059.2$\pm$06.08 & 046.0$\pm$12.00 & 036.0$\pm$17.72 & 068.2$\pm$24.10 & 041.2$\pm$16.67 \\
Folding 13 & 068.0$\pm$18.44 & 177.4$\pm$91.56 & 056.6$\pm$34.55 & 073.8$\pm$22.13 & 077.0$\pm$26.00 & 061.0$\pm$12.00 & 088.6$\pm$23.80 & 031.0$\pm$03.74 \\
Folding 14 & 066.6$\pm$19.89 & 148.4$\pm$27.38 & 038.6$\pm$12.56 & 040.8$\pm$15.52 & 070.0$\pm$10.00 & 066.0$\pm$14.28 & 045.0$\pm$00.00 & 045.0$\pm$00.00 \\
Folding 15 & 062.8$\pm$09.99 & 215.2$\pm$51.84 & 037.0$\pm$08.41 & 063.0$\pm$14.21 & 056.0$\pm$18.00 & 056.0$\pm$22.00 & 029.4$\pm$02.33 & 063.0$\pm$28.57 \\
Folding 16 & 051.2$\pm$23.59 & 075.2$\pm$39.45 & 045.8$\pm$28.60 & 060.8$\pm$16.09 & 088.0$\pm$06.00 & 077.0$\pm$16.91 & 029.6$\pm$01.85 & 029.2$\pm$02.14 \\
Folding 17 & 050.6$\pm$15.53 & 147.8$\pm$18.14 & 043.8$\pm$32.50 & 039.4$\pm$23.10 & 094.0$\pm$12.00 & 094.0$\pm$12.00 & 034.2$\pm$02.14 & 029.2$\pm$09.83 \\
Folding 18 & 069.6$\pm$11.45 & 116.8$\pm$53.54 & 037.8$\pm$22.43 & 044.2$\pm$21.33 & 088.0$\pm$06.00 & 041.0$\pm$13.19 & 038.6$\pm$03.83 & 050.6$\pm$13.65 \\
Folding 19 & 065.0$\pm$30.55 & 172.4$\pm$20.71 & 064.4$\pm$04.45 & 053.0$\pm$29.57 & 095.0$\pm$00.00 & 082.0$\pm$11.66 & 032.4$\pm$23.80 & 046.0$\pm$10.20 \\
\hline
Averaged  & 061.2$\pm$22.12 & 147.6$\pm$64.84 & 039.9$\pm$23.09 & 051.4$\pm$27.34 & 067.9$\pm$27.28 & 059.6$\pm$25.70 & 055.2$\pm$28.88 & 051.7$\pm$26.65 \\
\hline
\end{tabular}
\label{table:table_04}
\end{table*}

\end{document}